%% file: main.tex
\documentclass[10pt]{article}

\usepackage[
    letterpaper,
    top=0.60in,
    bottom=0.75in,
    left=0.85in,
    right=0.85in
]{geometry}

\usepackage{times}
\usepackage{microtype}

\usepackage{amsmath}
\usepackage{amssymb}
\usepackage{amsthm}
\usepackage{mathtools}
\usepackage{bm}

\usepackage{algorithm}
\usepackage{algpseudocode}

\usepackage{graphicx}
\usepackage{subcaption}
\usepackage{booktabs}
\usepackage{tabularx}
\usepackage{makecell}
\usepackage{adjustbox}
\usepackage[most]{tcolorbox}
\usepackage{float}
\usepackage{caption}

\usepackage{xcolor}
\usepackage[numbers,sort&compress]{natbib}

\usepackage[
    colorlinks=true,
    citecolor=blue,
    linkcolor=blue,
    urlcolor=blue
]{hyperref}

\usepackage[nameinlink,capitalize]{cleveref}
\usepackage{url}
\usepackage{titlesec}
\usepackage{titling}

\pretitle{\centering\normalfont\LARGE}
\posttitle{\par\vspace{0.5em}}

\preauthor{\centering\normalfont\large}
\postauthor{\par\vspace{-0.7em}}

\titleformat{\section}
  {\large\bfseries}
  {\thesection}
  {0.75em}
  {}

\titleformat{\subsection}
  {\normalsize\bfseries}
  {\thesubsection}
  {0.75em}
  {}

\titleformat{\subsubsection}
  {\normalsize\bfseries}
  {\thesubsubsection}
  {0.75em}
  {}

\titlespacing*{\section}
  {0pt}
  {2.5ex plus 0.5ex minus 0.2ex}
  {1.0ex}

\titlespacing*{\subsection}
  {0pt}
  {2.0ex plus 0.4ex minus 0.2ex}
  {0.8ex}

\titlespacing*{\subsubsection}
  {0pt}
  {1.5ex plus 0.3ex minus 0.2ex}
  {0.6ex}

\renewenvironment{abstract}
{
    \begin{center}
    \begin{minipage}{0.86\textwidth}
    \small
    \begin{center}
    \bfseries Abstract
    \end{center}
    \vspace{-0.4em}
}
{
    \end{minipage}
    \end{center}
}

\newtheorem{theorem}{Theorem}
\newtheorem{proposition}[theorem]{Proposition}

\newtheorem{lemma}[theorem]{Lemma}

\theoremstyle{definition}
\newtheorem{definition}[theorem]{Definition}
\newtheorem{assumption}[theorem]{Assumption}

\theoremstyle{remark}
\newtheorem{remark}[theorem]{Remark}

\newcommand{\R}{\mathbb{R}}

\newcommand{\dd}{\mathrm{d}}

\newcommand{\inv}{^{-1}}

\newcommand{\norm}[1]{\left\lVert #1 \right\rVert}

\DeclareMathOperator*{\argmin}{arg\,min}
\DeclareMathOperator{\SPD}{SPD}
\DeclareMathOperator{\Id}{Id}

\DeclareMathOperator{\di}{div}

\crefname{equation}{Eq.}{Eqs.}
\Crefname{equation}{Equation}{Equations}

\crefname{figure}{Fig.}{Figs.}
\Crefname{figure}{Figure}{Figures}

\crefname{table}{Table}{Tables}
\Crefname{table}{Table}{Tables}

\crefname{section}{Section}{Sections}
\Crefname{section}{Section}{Sections}

\crefname{theorem}{Theorem}{Theorems}
\Crefname{theorem}{Theorem}{Theorems}

\crefname{proposition}{Proposition}{Propositions}
\Crefname{proposition}{Proposition}{Propositions}

\crefname{lemma}{Lemma}{Lemmas}
\Crefname{lemma}{Lemma}{Lemmas}

\title{
Finsler Flow Matching:\\
Dynamics-Aware Geodesic Interpolation \\ for Single-Snapshot Trajectory Inference
}

\author{
Niklas Canova
\qquad
Jonas Simon Fleck
\\[0.25em]
\small Institute of Human Biology (IHB), Roche, Basel, Switzerland
\\[0.1em]
\small \texttt{niklas.canova@roche.com}
}

\date{}

\begin{document}

\maketitle
\begingroup
\renewcommand{\thefootnote}{}
\footnotetext{Code: \url{https://github.com/mlbelab/FinslerFlowMatching}}
\endgroup
\input{sections/abstract}

\begin{figure}[H]
    \centering
    \makebox[\textwidth][c]{%
        \includegraphics[
            width=\textwidth,
            keepaspectratio
        ]{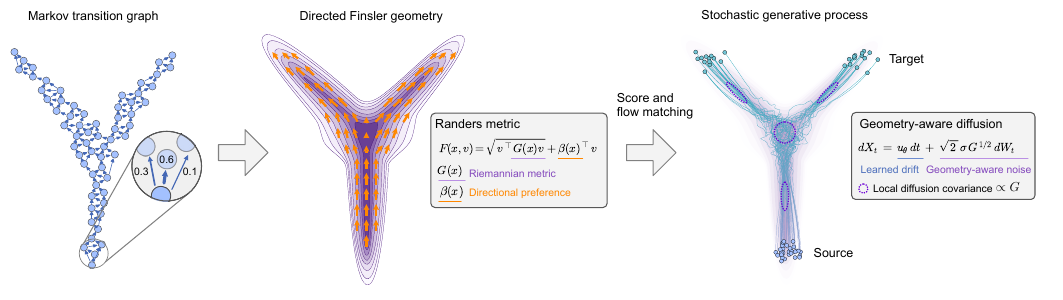}
    }
    \caption{
        \textbf{Overview of Finsler Flow Matching (FFM).}
        A discrete Markov transition graph is converted into a continuous
        directed Finsler geometry, which guides score and flow matching
        to define a stochastic generative model with geometry-aware diffusion.
    }
    \label{fig:ffm_schematic}
\end{figure}

\input{sections/introduction}
\input{sections/preliminaries}

\input{sections/related_work}

\input{sections/theoretical_contributions}
\input{sections/experiments}

\input{sections/discussion}

\subsection*{AI use statement}

We used generative AI tools to assist with code generation, in particular
code for generating synthetic datasets, as well as sentence reformulation
and language polishing throughout the manuscript. Furthermore, we used
deep research tools to discover papers. In particular, these tools identified
connections to FW theory, which we subsequently investigated, verified, and
developed in the context of our method. AI tools were further used to assist
with completing and checking mathematical proofs. The theorem statements
and underlying proof strategies were developed by the authors, and all
AI-assisted derivations were independently checked for correctness. We
reviewed all AI-generated output and take full responsibility for all content
in this paper.

\subsection*{Reproducibility Statement}

We provide a code artifact containing the implementation of Finsler Flow
Matching, preprocessing and transition-kernel construction, synthetic-data
generation, training and evaluation scripts, benchmark-specific
configurations, hyperparameter settings, and the records used to generate
the reported tables and figures.

Software dependencies are pinned in \texttt{requirements-lock.txt} to
facilitate reconstruction of the experimental environment. Experimental
architecture, optimization settings, training budgets, validation procedures,
and numbers of random seeds are reported in the appendix.

Dataset sources and preprocessing procedures are documented for each
benchmark, and validation and test populations are kept disjoint during model
and hyperparameter selection. We additionally provide the configurations used
for all reported methods and hyperparameter searches, including the
data-dependent scaling of the Finsler regularization parameters.

Evaluation settings, including any subsampling used for optimal-transport
calculations, are recorded alongside the experiments. Because some GPU
operations are nondeterministic, repeated training runs are not guaranteed
to be bitwise identical; we therefore report results across multiple random
seeds. The complete repository will be made publicly available upon
publication.

\clearpage

\bibliographystyle{plainnat}
\bibliography{sections/references}

\clearpage
\appendix

\input{sections/appendix}

\end{document}

%% file: sections/abstract.tex
\begin{abstract}
Single-cell snapshot data can resolve a continuum of cellular states but do not uniquely determine the dynamics governing transitions between them. However, additional dynamical information can often be encoded in a cell-cell Markov transition kernel. Existing generative approaches for single cell trajectory inference either infer transport only from population marginals, impose a symmetric geometry on the state space, or incorporate directionality through a single velocity vector at each observed state. We introduce Finsler Flow Matching (FFM), a framework for learning continuous stochastic dynamics from discrete Markov transition graphs. We use the first and second local moments to construct a Finsler structure motivated by the Freidlin--Wentzell action, where the second moment determines anisotropic accessibility and the first moment introduces a preferred direction of motion. We learn neural approximations of the resulting directed geodesics, use their Finsler cost to construct source-target couplings, and define geometry-aware stochastic conditional paths that can be distilled into a continuous generative process through simulation-free score and flow matching. Across synthetic and single-cell trajectory inference benchmarks, FFM improves recovery of withheld intermediate populations, particularly when the transition dynamics are strongly directional or anisotropic. Our results provide a principled route from discrete transition probabilities to continuous generative dynamics while retaining both directional and diffusive structure.
\end{abstract}

%% file: sections/introduction.tex
\section{Introduction}
Understanding how cells change their state in response to intrinsic programs and extrinsic perturbations is a central challenge in biology. Such transitions underlie development and differentiation, but also cellular reprogramming, disease progression, and responses to therapeutic intervention. Single-cell genomics methods have made it possible to probe these processes at high resolution by resolving a continuum of cellular states in a single cross-sectional experiment \citep{trapnellDynamicsRegulatorsCell2014}. However, because single-cell sequencing methods are destructive, individual cells cannot be followed through time and transitions between observed states are not directly measured. As a result, the continuous dynamics underlying cell state changes can generally not be determined from these measurements alone \citep{weinrebFundamentalLimitsDynamic2018}.

Several experimental and computational strategies can provide additional information about developmental directionality and dynamics. RNA velocity infers the direction of local transcriptional change {\citep{lamannoRNAVelocitySingle2018,bergenGeneralizingRNAVelocity2020a}, lineage tracing tracks ancestry \citep{weinrebLineageTracingTranscriptional2020,wagnerLineageTracingMeets2020,mckennaWholeorganismLineageTracing2016} and metabolic or fluorescent labelling can introduce an absolute timescale \citep{battichSequencingMetabolicallyLabeled2020,caoScifateCharacterizesDynamics2020,gehartIdentificationEnteroendocrineRegulators2019}. Measurements collected across multiple experimental time points also carry temporal information, although the correspondence between cells at successive time points is lost upon sampling and must itself be inferred \citep{schiebingerOptimalTransportAnalysisSingleCell2019}. Despite their different origins, these sources of dynamical information can often be translated into a common representation, a cell-cell Markov transition kernel, whose entries quantify the likelihood of transitions between observed cellular states \citep{langeCellRankDirectedSinglecell2022a, weilerCellRank2Unified2024}. Such kernels have proven highly effective for inferring terminal states, lineage relationships, and fate probabilities \citep{fleckInferringPerturbingCell2023, langeMappingLineagetracedCells2024b}. However, the resulting dynamics remain defined on the finite graph of observed cells. A Markov kernel encodes transition preferences between observed states, but this does not specify how the system evolves continuously through state space, particularly between or beyond the observed cells. 

Generative modeling of population dynamics provides a complementary approach \citep{neklyudovActionMatchingLearning2023}. Given samples at multiple time points, optimal transport formulations recover a coupling between the observed marginals \citep{schiebingerOptimalTransportAnalysisSingleCell2019,cuturiSinkhornDistancesLightspeed2013a}, while flow and Schrödinger Bridge matching provide simulation-free frameworks for learning continuous dynamics \citep{tongSimulationFreeSchrodingerBridges2024a, lipmanFlowMatchingGenerative2022a,rohbeckModelingComplexSystema, tongImprovingGeneralizingFlowbased2024a}. Some variants additionally incorporate the geometry of the state or distribution space \citep{chenFlowMatchingGeneral2023, havivWassersteinFlowMatching2025a}. Metric Flow Matching replaces Euclidean interpolation with geodesics under a learned Riemannian metric \citep{kapusniakMetricFlowMatchinga}, capturing the anisotropic geometry of the data manifold but not a preferred direction of motion. Curly Flow Matching incorporates RNA velocity through a nonzero-drift reference process and can therefore capture directed and non-gradient dynamics \citep{petrovicCurlyFlowMatching2025a}, but takes as input a single drift vector at each state, and therefore cannot exploit the second-order structure of a local transition distribution.

In this paper, we use a discrete Markov transition graph to define a continuous dynamical prior for endpoint-constrained generative transport. We introduce Finsler Flow Matching (FFM), which converts the directional and diffusive structure of a Markov transition graph into a smooth Finsler geometry and uses this geometry to learn continuous stochastic transport through score and flow matching. Under the additional assumption that the Markov chain discretizes an Itô diffusion, the graph-derived local moments recover its drift and diffusion. Freidlin–Wentzell \citep{freidlinRandomPerturbationsDynamical2012a, heymannGeometricMinimumAction2008a} theory then motivates a Randers metric \citep{baoIntroductionRiemannFinslerGeometry2000a}, in which local diffusion determines anisotropic accessibility and drift introduces directional preference. We then learn a neural path interpolant to approximate directed Randers geodesics and use the resulting path cost to construct source--target
couplings consistent with the transition dynamics. Finally, we use these couplings and geodesic interpolants to construct tractable conditional
probability paths and learn an endpoint-constrained stochastic generative process through simulation-free score and flow matching \citep{tongSimulationFreeSchrodingerBridges2024a}, whose local noise geometry is shaped by the graph-derived second moment.

To summarize, our main contributions are:
\begin{itemize}
    \item \textbf{A principled construction of Finsler geometry from Markov graphs.} Under a short-time Itô diffusion model, we show that the first and second local moments of a cell-cell Markov kernel recover the drift and diffusion of an underlying stochastic process. We use these estimates to derive a Randers geometry from the Freidlin-Wentzell minimum-action principle \citep{freidlinRandomPerturbationsDynamical2012a}, jointly encoding anisotropic accessibility and directional dynamics.
    \item \textbf{A framework for learning continuous stochastic dynamics from discrete Markov graphs.} We introduce Finsler Flow Matching, which learns a neural path interpolant under the induced Finsler geometry, uses directed geodesic costs to construct source-target couplings, and defines stochastic conditional paths whose local noise structure is shaped by the graph-derived diffusion. These paths are then distilled into a continuous generative process through simulation-free score and flow matching.
    \item \textbf{Flexible inference of single-cell trajectory dynamics.} By using cell-cell Markov graphs as a common representation of transition dynamics, FFM can incorporate information derived from various experimental and computational sources, including RNA velocity, lineage tracing, labeling experiments, or temporally sampled populations within a unified generative framework. We show that our model outperforms previous methods in intermediate state prediction on various benchmarks.
\end{itemize}

%% file: sections/preliminaries.tex
\section{Preliminaries}
In this section, we establish the background on Finsler geometry, optimal transport, and continuous generative modeling via flow and score matching. More information can be found in \cref{appendix:Finsler}
\paragraph{Finsler Geometry:} 
Riemannian geometry measures local distance via a symmetric metric tensor $G(x)$. Finsler geometry generalizes this to a direction-dependent, asymmetric cost. We introduce a special class of Finsler structures called a \emph{Randers metric}:
\begin{equation*}
F(x, v) = \sqrt{v^\top G(x) v} + \beta(x)^\top v,
\end{equation*}
where $G(x) \in \SPD(d)$ is a Riemannian metric tensor and $\beta(x)$ is a $1$-form giving  directionality. For an absolutely continuous path $\gamma: [0,1] \to \mathbb{R}^d$, the Finsler length $L_F(\gamma)$ and energy $E_F(\gamma)$ functionals are defined as:
\begin{equation*}
L_F(\gamma) \coloneqq \int_0^1 F(\gamma(t), \dot{\gamma}(t))  \dd t, \qquad E_F(\gamma) \coloneqq \int_0^1 F(\gamma(t), \dot{\gamma}(t))^2  \dd t.
\end{equation*}
Curves minimizing $L_F$ are called \emph{(Finsler) geodesics} and are locally shortest paths connecting two points. The \emph{Finsler geodesic distance} $c_F(x_0, x_1)$ is defined as:
\begin{equation}\label{eq:finsler-geodesic-cost}
c_F(x_0, x_1) \coloneqq \inf_{\substack{\gamma(0)=x_0 \ \gamma(1)=x_1}} \int_0^1 F(\gamma(t), \dot{\gamma}(t)) \mathrm{d}t.
\end{equation}
\paragraph{Optimal Transport and Schrödinger Bridge Problem:} Optimal Transport (OT) aims to transport a probability distribution into another one in the most cost efficient way \citep{villaniOptimalTransport2009a}. Given source and target densities $p_0, p_1 \in \mathcal{P}(\mathbb{R}^d)$ and a cost function $c: \mathbb{R}^d \times \mathbb{R}^d \to \mathbb{R}_{\geq 0}$, the \emph{Kantorovich problem} seeks an optimal transport plan $\pi \in \Pi(p_0, p_1)$ minimizing total cost:
\begin{equation*}
\pi^\star = \arg\inf_{\pi \in \Pi(p_0, p_1)} \int_{\mathbb{R}^d \times \mathbb{R}^d} c(x_0, x_1) \mathrm{d}\pi(x_0, x_1),
\end{equation*}
where $\Pi(p_0, p_1)$ is the set of joint distributions with marginals $p_0$ and $p_1$. Using the squared Finsler geodesic distance as cost function and the elementary fact
$c_F(x_0,x_1)^2 = \inf_{\gamma: \gamma(0)=x_0, \gamma(1)=x1} E_F(\gamma)$ \footnote{It follows from the Cauchy-Schwarz inequality and the fact that minimizers of the Energy are constant speed geodesics.} implies that
\begin{align} \label{eq:OTFinsler}
W_{2,F}^2(p_0,p_1) = \inf_{\pi \in \Pi(p_0, p_1)} \inf_{\gamma: \gamma(0)=x_0, \gamma(1)=x_1}\int_{\mathbb{R}^d \times \mathbb{R}^d}  \int_0^1 F^2(\gamma(t), \dot \gamma(t))\text{ } \dd t \text{ }\mathrm{d}\pi(x_0, x_1).
\end{align}
A stochastic counterpart to the OT problem is given by the Schrödinger Bridge Problem (SBP) \citep{leonardSurveySchrodingerProblem2013a}. Given a reference measure on  $\Omega=C([0,1],\mathbb{R}^d)$ of a stochastic process $R$, the SBP searches for the closest path measure to $R$ that has endpoint marginals given by $p_0$ and $p_1$. More precisely
\begin{align}
     Q^* = \argmin_{Q:Q_0=p_0,;Q_1=p_1}
\mathrm{KL}(Q\Vert R).
\end{align}
A classical result due to \citet{follmerRandomFieldsDiffusion1988a} tells us that the solution can be decomposed as
\begin{align} \label{eq: Föllmer}
Q^*= \int R^{x_0,x_1}\dd \pi^*(x_0,x_1)\text{, where } \pi^*=\argmin_{\pi\in\Pi(p_0,p_1)}\mathrm{KL}\left(\pi\Vert R_{01}\right) ,
\end{align}
where $R^{x_0,x_1}=R(\cdot\mid Y_0=x_0,Y_1=x_1)$ denotes the reference process conditioned on its endpoints and $R_{01}$ its endpoint law. For a Brownian reference process $dY_t=\sigma_B dW_t$, one can see that
$R_t^{x_0,x_1}=\mathcal{N}\left((1-t)x_0+t x_1,\sigma^2t(1-t)I\right).$
For non-Brownian reference processes, such as SDEs with anisotropic diffusion and non-trivial drift
\begin{align*}
dY_t=b_0(Y_t),dt+\Sigma(Y_t)dW_t,
\end{align*}
the conditional bridges are in general neither Gaussian nor available in closed form. In the small noise regime, their most-probable transition paths are instead characterized by minimizers of the Freidlin-Wentzell action, a connection that we use in \cref{sec: appendic Theoretical}.
\paragraph{Score and Flow Matching:} \label{para:Score and Flow}
Flow Matching (FM) is a simulation-free method for training continuous-time generative models. Given a target probability path $p_t(x)$ pushed forward by an unknown vector field $u_t(x)$, FM fits a neural vector field $v_\theta(t, x)$ via a regression objective:
\begin{equation}
\mathcal{L}_{\mathrm{FM}}(\theta) = \int_0^1 \mathbb{E}_{x \sim p_t(x)} \left[ |v_\theta(t, x) - u_t(x)|^2 \right] \mathrm{d}t.
\end{equation}
Conditional Flow Matching (CFM) circumvents the intractability of the marginal vector field $u_t(x)$ by conditioning on endpoint pairs $(x_0, x_1) \sim q(x_0, x_1)$ \citep{lipmanFlowMatchingGenerative2022a, albergoBuildingNormalizingFlows2023a}. The network is trained on tractable conditional vector fields $v_t(x \mid x_0, x_1)$ pushing probability paths along conditional trajectories $\mu_t(x_0, x_1)$. FM is often combined with minibatch optimal transport to pair source and target samples, enabling it to approximate dynamic optimal transport \citep{tongImprovingGeneralizingFlowbased2024a}.
Following the approach of \citet{tongSimulationFreeSchrodingerBridges2024a} one may jointly training the flow and score through the combined objective
\begin{align*}
\mathcal{L}_{\mathrm{[SF]}^2\mathrm{M}}(\theta)
=\mathcal{L}_{\mathrm{CFM}}(\theta)
+ \lambda \mathcal{L}_{\mathrm{CSM}}(\theta).
\end{align*}
The learned flow $v_\theta$ and score $s_\psi$ can be used with an arbitrary matrix-valued diffusion schedule $\mathbf g:[0,1]\to\mathbb R^{d\times d}$ to define the stochastic generative process
\begin{align}\label{eq: SDE prelim}
    dX_t
    =
    \left[
        v_\theta(t,X_t)
        +
        \frac{1}{2}
        \mathbf g(t)\mathbf g(t)^\top
        s_\psi(t,X_t)
    \right]dt
    +
    \mathbf g(t)\,dW_t,
\end{align}
which satisfies $X_t \sim p_t $ for all $t \in [0,1].$

%% file: sections/related_work.tex
\section{Related Work}
\paragraph{Single-cell trajectory inference.} OT methods infer developmental trajectories from populations observed at multiple time points by coupling successive snapshot distributions \citep{schiebingerOptimalTransportAnalysisSingleCell2019, tongTrajectoryNetDynamicOptimal2020, bunneProximalOptimalTransport2022, zhangLearningStochasticDynamics2025a}. Relatedly, \citet{huguetManifoldInterpolatingOptimalTransporta} combine manifold-aware OT with neural differential equations to learn continuous cellular dynamics from snapshot data. Complementary approaches such as CellRank construct cell-cell Markov transition kernels from RNA velocity, lineage information, and other dynamical measurements to infer fate probabilities and developmental relationships \citep{langeCellRankDirectedSinglecell2022a,weilerCellRank2Unified2024}. FFM builds on these representations by using local transition statistics to construct a smooth directed geometry and continuous generative dynamics beyond the observed cells.

\paragraph{Stochastic generative transport.} Simulation-free Schrödinger bridge methods combine score and flow matching to learn stochastic dynamics between endpoint distributions \citep{tongSimulationFreeSchrodingerBridges2024a}. Related work studies closed-form Gaussian Schrödinger bridges and smooth Schrödinger bridges for trajectory inference \citep{bunneSchrodingerBridgeGaussian2023, hongTrajectoryInferenceSmooth2025}. FFM builds on this framework by using the transition-derived geometry to determine both the interpolating paths and the local anisotropic noise structure.

\paragraph{Geometric and directional flow matching.} \citet{kapusniakMetricFlowMatchinga} replace Euclidean conditional paths with neural approximations of geodesics under a data-dependent Riemannian metric. FFM extends this idea to a directed Randers geometry derived from local transition statistics.  Relatedly, \citet{bambergerCarreChampFlow2026} uses local Markov kernels to construct anisotropic covariance for geometry-aware flow matching, whereas FFM additionally incorporates first-moment directionality through a Randers geometry and geodesic coupling.\citet{petrovicCurlyFlowMatching2025a} uses a nonzero-drift reference process to model directed and non-gradient dynamics from local velocity fields. FFM instead derives directionality from the first moment of a transition kernel and uses its second moment to capture anisotropic transition structure.

\paragraph{Lineage-informed Finsler geometry.} \citet{zweigLearningLineageguidedGeodesics2026a} recently use Finsler geometry to incorporate directed lineage-tree priors into trajectory inference, penalizing motion inconsistent with a prescribed adjacency between cell types. FFM instead derives its Finsler geometry directly from cell-cell transition probabilities, using the probabilistic transition graph to encode both directional preference and local transition accessibility rather than relying on a categorical lineage prior.

%% file: sections/theoretical_contributions.tex
\section{Finsler Flow Matching}
\paragraph{Problem formulation.}
Let $D=\{x_i\}_{i=1}^N\subseteq \mathbb{R}^d$ be a collection of observations sampled from an unknown distribution $p$. Let
$D_0=\{x_i^0\}_{i=1}^{N_0}$ and $D_1=\{x_i^1\}_{i=1}^{N_1}$
denote two subpopulations from source and target distributions $p_0$ and $p_1$. In addition, we are given a row-stochastic transition
matrix $P\in\mathbb{R}^{N\times N}$, whose entries represent transition
likelihoods between observed states.
\\
\\
Our goal is to infer a transport mechanism from $p_0$ to $p_1$ and a family of intermediate distributions $(p_t)_{t\in[0,1]}$ that are consistent with the dynamics encoded by $P$. More formally we can define this as an OT problem of the form \ref{eq:OTFinsler}
\begin{align} \label{eq:OT Finsler Problem}
\begin{aligned}
&(\pi^*, \gamma^*)= \argmin_{(\pi,\gamma)} \int_{\mathbb{R}^d \times \mathbb{R}^d}  \int_0^1 F_P^2(\gamma(t), \dot \gamma(t))\text{ } \dd t\text{ } \mathrm{d}\pi(x_0, x_1) \\&\text{ where }\pi \in \Pi(p_0, p_1)\text{ and }\gamma(0)=x_0, \gamma(1)=x_1 
\end{aligned}
\end{align}
and $F_P(x, v) = \sqrt{v^\top G_P(x) v} + \beta_P(x)^\top v$ is a $P$-derived Randers metric (Definition~\ref{def:Pderived}). The corresponding intermediate distributions are induced by transporting
the optimal coupling along the optimal paths.
\\
\\
From now on we consider the special case where the $P$-derived Randers metric is constructed from spatial moments of $P$. More precisely we define at each observed state $x_i$ moments as:
\begin{align}
m(x_i)
\coloneqq
\sum_{j=1}^N P_{ij}(x_j-x_i), \text{ and }
C(x_i)
\coloneqq
\sum_{j=1}^N
P_{ij}(x_j-x_i)(x_j-x_i)^\top .
\end{align}
\footnote{At finite graph scale \(C=\operatorname{Cov}_{P}(\Delta x)+mm^\top\), we interpret \(C\) as an anisotropic transition-accessibility rather then diffusion covariance. }$m(x_i)$ captures the net preferred displacement and therefore provides directionality whereas $C(x_i)$ captures the direction on the data manifold with high transition probabilities. To evaluate the metric along continuous paths, we construct smooth extensions of $m$ and $C$, for example using kernel smoothing, neural networks, or support vector regression \citep{smolaTutorialSupportVector2004}, while preserving positive semidefiniteness of \(C\). We use kernel smoothing for simplicity details are provided in \ref{app:Kernel smoothing}.
To obtain a non-degenerate Finsler structure, we define the regularized Freidlin–Wentzell (FW) action \citep{freidlinRandomPerturbationsDynamical2012a}. Firstly, set
$C_{\rho}(x)\coloneqq C(x)+\rho I$ with $\rho>0$ and define
\begin{equation}
\label{eq:empirical-fw-metric}
 F_{P}(x,v)
\coloneqq
\underbrace{\sqrt{v^\top C_{\rho}(x)^{-1}v}}_{\text{Anisotropic accessibility}}\,
\underbrace{\sqrt{
 m(x)^\top C_{\rho}(x)^{-1} m(x)
+\lambda^2}}_{\text{first-moment calibration}}
-
\underbrace{v^\top C_{\rho}(x)^{-1}m(x)}_{\text{directional alignment}},
\end{equation}
where $\lambda>0$ ensures Randers admissibility \ref{lem: Finsler admissibility}. The anisotropic accessibility term encourages geodesics to remain close to regions and directions that are well supported by the transition probabilities, directional alignment ensures that geodesics follow the preferred directionality and the first-moment calibration ensures that both penalty terms are on the same scale. The corresponding action is
\begin{equation}
\label{eq:empirical-fw-action}
S(\gamma)
=
\int_0^1
 F_{P}\!\left(\gamma(t),\dot\gamma(t)\right)
\,\mathrm dt .
\end{equation}
The following proposition formalizes how geodesics under \(F_P\) align with the transition structure encoded by \(P\):
\begin{tcolorbox}[
    enhanced,
    breakable,
    colback=black!2,
    colframe=black!35,
    boxrule=0.5pt,
    arc=1.5mm,
    left=2mm,
    right=2mm,
    top=1.5mm,
    bottom=1.5mm
]
\begin{proposition}[Informal \ref{prop:transition-geodesics}]
Suppose the endpoints can be connected along a transition-supported corridor at a Finsler cost lower than that required to leave and return to a prescribed surrounding neighborhood. Then every minimizing $F_P$-geodesic remains within that neighborhood. Moreover, locally the metric favors
alignment with the first transition moment.
\end{proposition}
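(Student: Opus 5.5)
The plan has two parts: a comparison (cut-off) argument for the confinement claim, and a first-order analysis of $F_P$ at a fixed point for the alignment claim. Throughout I write $A(x)\coloneqq C_\rho(x)^{-1}$, $\kappa(x)\coloneqq\sqrt{m^\top A m+\lambda^2}$ and $b(x)\coloneqq A(x)m(x)$, so that $F_P(x,v)=\kappa\sqrt{v^\top A v}-b^\top v$.

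The first step is to make the metric uniformly controlled. By Cauchy--Schwarz in the $A$-inner product, $|b^\top v|\le \sqrt{m^\top A m}\,\sqrt{v^\top A v}$. Hence
\[
F_P(x,v)\ \ge\ \bigl(\kappa-\sqrt{m^\top A m}\bigr)\sqrt{v^\top A v}\ >\ 0 \quad (v\neq0),
\]
which is exactly the Randers admissibility of \cref{lem: Finsler admissibility}. Smoothness of the kernel-smoothed $m$ and $C$, together with $C_\rho\succeq\rho I$, gives uniform two-sided bounds on compact sets. As a result $c_F(x,y)$ is a genuine, if asymmetric, positive-definite quasi-distance, and length-minimizers exist by a standard Arzelà--Ascoli and lower-semicontinuity argument.

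The second step proves confinement by comparison. The formal hypothesis will read as follows. Let $U$ be an open neighborhood of the corridor, let $x_0,x_1\in U$, and suppose some curve $\gamma_c\subset U$ from $x_0$ to $x_1$ satisfies
\[
L_F(\gamma_c)\ <\ \inf\bigl\{ c_F(x_0,y)+c_F(y,x_1) : y\notin U\bigr\}.
\]
A more local variant replaces the right-hand side by the cheapest cost of an excursion that exits and re-enters $U$ through $\partial U$. Now take any minimizing geodesic $\gamma$ and suppose it leaves $U$ at some time $s$. Then
\[
L_F(\gamma)\ \ge\ c_F\bigl(x_0,\gamma(s)\bigr)+c_F\bigl(\gamma(s),x_1\bigr)\ >\ L_F(\gamma_c),
\]
contradicting minimality. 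The same idea works with exit time $\tau_1$ and re-entry time $\tau_2$: split $\gamma$ at these times and compare the excursion segment with the cheapest excursion. Because $c_F$ is asymmetric, the splitting must respect orientation. This is harmless, since $\gamma$ is traversed forward and the triangle inequality $c_F(x,z)\le c_F(x,y)+c_F(y,z)$ still holds for oriented concatenations.

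The third step handles local alignment. Fix $x$ and compare velocities of equal $A$-norm, $\sqrt{v^\top A v}=r$. The first term of $F_P$ is then the constant $\kappa r$. The cost is minimized by maximizing $b^\top v=\langle m,v\rangle_A$, and the equality case of Cauchy--Schwarz gives $v^\star=r\,m/\sqrt{m^\top A m}$. So the cheapest direction is the first moment $m(x)$, with cost $r\bigl(\kappa-\sqrt{m^\top A m}\bigr)$. The most expensive direction is $-m$, with cost $r\bigl(\kappa+\sqrt{m^\top A m}\bigr)$. In the natural Euclidean-unit normalization the same conclusion holds with $A$-alignment in place of Euclidean alignment.

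I also want to note the interpretation: for large $|m|$, which is the regime where $\kappa\approx\sqrt{m^\top A m}$, motion along $m$ becomes nearly free. This mirrors the Freidlin--Wentzell picture in which following the drift has zero action.

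The main obstacle I anticipate is the formal statement of the hypothesis in the second step. "Cost to leave and return" must be defined so that it bounds the cost of every curve exiting $U$, and that is most cleanly done through the point-to-point infimum over $y\notin U$. I would first try that formulation and only then attempt to localize it to $\partial U$ using the uniform metric bounds from the first step.
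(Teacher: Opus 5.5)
Your proof is correct. Your alignment step is the same as the paper's: fix $\|v\|_{A}$, note that $\kappa\|v\|_A$ is then constant, and maximize $\langle v,m\rangle_A$ by Cauchy--Schwarz.

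Where you differ is in how ``the cost of leaving and returning'' is made precise. You assume $L_F(\gamma_c)<\inf_{y\notin U}\{c_F(x_0,y)+c_F(y,x_1)\}$. Confinement then follows from a one-line oriented splitting argument, valid for any open $U$.

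The paper instead fixes $U=U_r(\mathcal S_\tau)$ and defines the annulus escape constant $c_r=\min\{F_P(x,v): r/2\le\operatorname{dist}(x,\mathcal S_\tau)\le r,\ \|v\|_2=1\}$. It then assumes $L_{F_P}(\bar\gamma)<c_r r$. Since $\operatorname{dist}(\cdot,\mathcal S_\tau)$ is $1$-Lipschitz, any excursion out of $U_r$ spends Euclidean length at least $r/2$ in the annulus on the way out, and again on the way back. By homogeneity it therefore has Finsler length at least $c_r r$.

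The same estimate gives $c_F(x_0,y)\ge c_r r/2$ and $c_F(y,x_1)\ge c_r r/2$ for every $y\notin U_r$, so the paper's hypothesis implies yours. Your formulation is thus more general. However, it is close to a restatement of the conclusion and cannot be checked from local data. The paper's condition is a concrete local sufficient condition. It connects to the transition moments through
$F_P(x,v)\ge(\kappa-\|m\|_A)\|v\|_2/\sqrt{\lambda_{\max}(C_\rho(x))}$,
which is exactly your Step~1 Cauchy--Schwarz bound combined with $A\succeq\lambda_{\max}(C_\rho)^{-1}I$. So the ``localization to $\partial U$'' you defer is precisely the paper's annulus argument, and you already had the needed ingredient.

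Two side remarks. First, existence of minimizers is not needed, since the claim concerns every minimizing geodesic. Moreover, on all of $\mathbb R^d$, bounds on compact sets alone would not guarantee existence.

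Second, your aside about Euclidean-unit normalization is not right as stated. With $\|v\|_2$ fixed, the term $\kappa\|v\|_A$ is no longer constant. When $C_\rho$ is strongly elongated orthogonally to $m$ (and $\lambda$ is not small), the cheapest unit direction can be nearly perpendicular to $m$. This is why the paper's formal statement fixes $\|v\|_{C_\rho^{-1}}$.
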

\end{tcolorbox}
In the special case where $P$ is the discrete observation of an SDE \ref{def:nodewise-consistency}, we can conclude even more about geodesics. By  \citet{freidlinRandomPerturbationsDynamical2012a} and \citet{heymannGeometricMinimumAction2008a} we can show that geodesics under the resulting metric $F_P$ follow the maximum likelihood paths of the stochastic process.

\begin{tcolorbox}[
    enhanced,
    breakable,
    colback=black!2,
    colframe=black!35,
    boxrule=0.5pt,
    arc=1.5mm,
    left=2mm,
    right=2mm,
    top=1.5mm,
    bottom=1.5mm
]
\begin{theorem}[Informal \ref{thm:empirical-gmam}]
\label{thm}
If $P$ is a discretization of a SDE, then minimizers of the empirical FW Finsler action converge to the asymptotically most probable transition path in the small-noise limit.
\end{theorem}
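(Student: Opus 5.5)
The plan is to reduce the statement to two ingredients: a consistency result for the graph moments, and the geometric minimum action method (gMAM) of \citet{heymannGeometricMinimumAction2008a}. Consistency means showing that, under the nodewise SDE assumption (\ref{def:nodewise-consistency}), the rescaled moments $m/h$ and $C/h$ converge to the drift $b_0$ and the diffusion $a=\Sigma\Sigma^\top$. The gMAM result says that for an SDE $dY_t=b_0\,dt+\sqrt{\varepsilon}\,\Sigma\,dW_t$, the Freidlin--Wentzell quasipotential between $x_0$ and $x_1$ equals $\inf_\gamma \int_0^1 \big(|\dot\gamma|_{a}|b_0|_{a}-\langle\dot\gamma,b_0\rangle_{a}\big)\,dt$. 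Here $\langle u,w\rangle_a=u^\top a^{-1}w$, and the infimum runs over reparametrization-invariant curves. By the large deviation principle, its minimizers are the asymptotically most probable transition paths as $\varepsilon\to0$.

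First, I would identify the population version of $F_P$. Setting $\rho=\lambda=0$ and replacing $m,C$ by $h b_0$ and $h\,a+O(h^2)$, the definition \eqref{eq:empirical-fw-metric} becomes
\[
F(x,v)=\sqrt{v^\top a^{-1}v}\,\sqrt{b_0^\top a^{-1}b_0}-v^\top a^{-1}b_0 ,
\]
up to the factor $h^{-1}\cdot h=1$. Note that the $mm^\top$ correction in $C$ is $O(h^2)$ and so vanishes in the limit. This is exactly the gMAM local action.

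Second, I would show that $F_P\to F$ locally uniformly on compact sets as $N\to\infty$, $h\to0$, and $\rho,\lambda\to0$. The ingredients are:
\begin{itemize}
\item the consistency of the kernel-smoothed moments, which I would assume from the appendix with a bias/variance bound for the kernel smoother;
\item uniform ellipticity of $a$, which makes $C_\rho^{-1}$ converge;
\item the elementary bound $\big|\sqrt{s+\lambda^2}-\sqrt{s}\big|\le\lambda$.
\end{itemize}
Together these give $|F_P(x,v)-F(x,v)|\le \epsilon_N|v|$, uniformly for $x$ in a compact set $K$.

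Third, I would pass from convergence of the integrands to convergence of minimizers via $\Gamma$-convergence of $S_P$ to $S$. I would work on curves in $K$, using the arclength-normalized parametrization of gMAM.
\begin{itemize}
\item \emph{Liminf and recovery sequence.} Both follow from the uniform bound above, because the actions are linear in the speed: $|S_P(\gamma)-S(\gamma)|\le\epsilon_N\,\mathrm{len}(\gamma)$.
\item \emph{Compactness.} I need a uniform length bound for near-minimizers. I expect this to come from $\lambda>0$ together with Randers admissibility (\ref{lem: Finsler admissibility}), which gives $F_P(x,v)\ge c_\lambda|v|$. Confinement to $K$ I would obtain from the corridor argument of Proposition~\ref{prop:transition-geodesics}.
\end{itemize}
The fundamental theorem of $\Gamma$-convergence then yields the following: any sequence of minimizers of $S_P$ has a subsequence converging in $C([0,1])$ to a minimizer of $S$, i.e.\ to the most probable path. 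If the gMAM minimizer is unique, the whole sequence converges.

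The main obstacle is compactness. The limit action $F$ is degenerate: it vanishes along $v\parallel b_0$ and is only positively homogeneous. As a result, the coercivity constant $c_\lambda$ degenerates as $\lambda\to0$, and minimizing sequences could have unbounded length, for instance by following flow lines of $b_0$ at zero cost, or by passing through critical points where $b_0=0$.

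My first approach would be to couple the rates so that $\lambda\to0$ slowly relative to $\epsilon_N$. I would then use the gMAM setting's standing assumptions: finitely many critical points, and minimizers that are rectifiable curves of finite length. These give a priori length bounds on near-minimizers through a comparison path of cost $S(\gamma^*)+o(1)$.

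If this fails, I would instead state the theorem along a fixed sequence with $\lambda$ held fixed in the inner limit, and take the limit $\lambda\to0$ separately.
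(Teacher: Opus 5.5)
\textbf{There is a genuine gap, at the compactness step.} Your first two steps are sound and match the paper's ingredients. Propositions \ref{prop:p-to-sde} and \ref{prop:p-smoothed-uniform} give uniform convergence of the normalized, kernel-extended moments; this is deterministic, from nodewise consistency and the fill-distance condition, so no bias/variance bound is needed. Continuity of the metric in $(b,D,v,\lambda)$ then gives uniform convergence of the integrands on compact $(x,v)$-sets.

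The gap is exactly where you flag it, and neither proposed fix closes it.
\begin{itemize}
\item Your bound $|S_N(\gamma)-S(\gamma)|\le\epsilon_N\,\mathrm{len}(\gamma)$ helps only if the lengths of the curves involved are bounded uniformly in $N$. You need this already for the liminf inequality, not just for extracting subsequences.
\item A comparison curve of cost $S(\gamma^*)+o(1)$ bounds the \emph{cost} of near-minimizers, not their length. Converting cost into length requires $F_N(x,v)\ge c\,|v|$ with $c$ independent of $N$. Taking $v$ a positive multiple of $m$ gives $F_P(x,v)=\|v\|_{C_\rho^{-1}}\bigl(\sqrt{\|m\|^2_{C_\rho^{-1}}+\lambda^2}-\|m\|_{C_\rho^{-1}}\bigr)=O(\lambda^2)\,|v|$ wherever $m\neq0$. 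So the coercivity constant collapses as $\lambda_N\to0$.
\item Even if a rate coupling makes $\tilde\gamma_N^*$ a minimizing sequence for $S_{\mathrm{FW}}$, that is all you get. Since $F^{\mathrm{FW}}(x,v)=0$ for $v$ a positive multiple of $b_0(x)$, inserting circuits of a periodic orbit of $b_0$ that meets a curve leaves $S_{\mathrm{FW}}$ unchanged. Hence minimizing sequences of the degenerate limit action need not be precompact, and uniqueness of the minimizer among all curves is not automatic. The finite length of $\gamma^*$ says nothing about the lengths of other near-minimizers.
\item The corridor argument of Proposition \ref{prop:transition-geodesics} degenerates for the same reason: $c_r\to0$ as $\lambda\to0$, so it gives no $N$-uniform confinement to $K$.
\item Holding $\lambda$ fixed only changes the limit functional and moves the same degeneracy into the subsequent $\lambda\to0$ limit.
\end{itemize}

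\textbf{How the paper avoids this.} It never attempts $\Gamma$-convergence over all rectifiable curves; it removes the compactness problem by hypothesis. Assumption \ref{ass:compact-path-class} restricts to a parametric class $\Gamma=\{\gamma_\theta:\theta\in\Theta\}$ with the following properties: $\Theta$ compact, $(t,\theta)\mapsto(\gamma_\theta(t),\dot\gamma_\theta(t))$ continuous, $\|\dot\gamma\|\le M$, and images in $K$. The argument then runs as follows.
\begin{itemize}
\item All pairs $(\gamma(t),\dot\gamma(t))$ lie in $K_\Gamma\times\overline{B_M(0)}$, so uniform convergence of the integrands there gives $\sup_{\gamma\in\Gamma}|\tilde S_N(\gamma)-S_{\mathrm{FW}}(\gamma)|\to0$. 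No coupling of $\lambda_N$ to the estimation error is needed.
\item Assuming uniqueness of the minimizer \emph{within} $\Gamma$, compactness of $\Gamma_r=\{\gamma\in\Gamma:\rho_F(\gamma,\gamma^*)\ge r\}$ gives a positive gap $\Delta_r$.
\item Once $\sup_\Gamma|\tilde S_N-S_{\mathrm{FW}}|<\Delta_r/3$, every $\gamma\in\Gamma_r$ has $\tilde S_N(\gamma)>\tilde S_N(\gamma^*)$, so $\rho_F(\tilde\gamma_N^*,\gamma^*)\to0$.
\item The small-noise interpretation is imported from \citet[Proposition~2.3]{heymannGeometricMinimumAction2008a}, under the extra hypothesis that $\gamma^*$ is also the global geometric minimizer.
\end{itemize}
To complete your route you would either have to adopt such a restriction, or supply an independent $N$-uniform length (equi-coercivity) bound and confinement bound for near-minimizers of the degenerate action. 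Your sketch provides neither.
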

\end{tcolorbox}
FFM is defined for arbitrary row-stochastic transition matrices, whereas its diffusion interpretation requires
Assumption~\ref{ass:generator-consistency}.
Connections to the limits of local graph kernels, and the scope
of these results for velocity-based constructions, are discussed in \ref{app:Generator consistency}.
\paragraph{Learning Geodesics:}
For endpoints $(x_0, x_1)\sim p_0 \otimes p_1$ and $t \in [0,1]$, parameterize a candidate curve by
\begin{equation}\label{eq:interpolant}
x_{t,\eta}(x_0, x_1) \;=\; (1-t)\,x_0 + t\,x_1 + t(1-t)\, \varphi_\eta(t, x_0, x_1),
\end{equation}
where $\varphi_\eta : [0,1] \times \R^d \times \R^d \to \R^d$ is some neural network with parameters $\eta$. The boundary correction $t(1-t)$ enforces $x_{0,\eta} = x_0$ and $x_{1,\eta} = x_1$. The time derivative is given by,
\begin{align*} \dot x_{t, \eta}= x_1-x_0
+
(1-2t)\varphi_\eta(t,x_0,x_1)
+
t(1-t)\partial_t\varphi_\eta(t,x_0,x_1).
\end{align*}
This interpolant was previously proposed in \citet{kapusniakMetricFlowMatchinga}, and similarly to their approach, we minimize the Finsler Energy functional,
\begin{equation}
\label{eq:geodesic_loss}
\mathcal{L}_{\mathrm{geo}}(\eta)
=
\mathbb{E}_{\substack{
(x_0,x_1)\sim p_0\otimes p_1\\
t\sim\mathcal{U}[0,1]
}}
\left[
F_P\!\left(
x_{t,\eta}(x_0,x_1),
\dot{x}_{t,\eta}(x_0,x_1)
\right)^2
\right].
    \end{equation}
Since energy minimizers are constant-speed
reparameterizations of geodesics, minimizing
\eqref{eq:geodesic_loss} over $\eta$ yields an approximation of the
pairwise $P$-derived Finsler geodesics. We denote the resulting parameters by
$\eta^*$.
\begin{algorithm}[t]
\caption{Training the Finsler Geodesic Model}
\label{alg:geodesic-training}
\begin{algorithmic}[1]
\Require Source and target distributions $p_0,p_1$, Finsler metric $F_P$,
         interpolant network $\varphi_\eta$, LR $\gamma$.
\While{not converged}
    \State Sample $x_0\sim p_0$, $x_1\sim p_1$, $t\sim\mathcal{U}[0,1]$ and compute $x_{t,\eta}(x_0,x_1)$ and $\dot{x}_{t,\eta}(x_0,x_1)$
    \State $\mathcal{L}_{\mathrm{geo}}
    \gets F_P\!\left(x_{t,\eta},\dot{x}_{t,\eta}\right)^2$
    \State $\eta \gets \eta-\gamma\nabla_\eta\mathcal{L}_{\mathrm{geo}}$
\EndWhile
\end{algorithmic}
\end{algorithm}
\paragraph{Finsler Optimal Transport:}
Having learned an approximation of the geodesic connecting arbitrary endpoint
pairs, we can approximate the inner minimization in the transport problem \ref{eq:OT Finsler Problem}.
For each $(x_0,x_1)$, define the learned Finsler transport cost
\begin{equation*}
\label{eq:learned_finsler_cost}
c_{\eta^*}(x_0,x_1)
:=
\int_0^1
F_P\!\left(
x_{t,\eta^*}(x_0,x_1),
\dot{x}_{t,\eta^*}(x_0,x_1)
\right)^2
\,\mathrm{d}t.
\end{equation*}
Thus,
\begin{equation*}
\label{eq:approx_inner_geodesic}
c_{\eta^*}(x_0,x_1)
\approx
\inf_{\substack{
\gamma(0)=x_0\\
\gamma(1)=x_1
}}
\int_0^1
F_P\!\left(
\gamma(t),\dot{\gamma}(t)
\right)^2
\,\mathrm{d}t  = c_F^2(x_0, x_1).
\end{equation*}
We can solve the corresponding static optimal transport problem using costs. In particular, we consider its
entropy-regularized form
\begin{equation}
\label{eq:entropic_finsler_ot}
\pi_{\varepsilon}^{*}
=
\arg\min_{\pi\in\Pi(\hat p_0,\hat p_1)}
\left\{
\sum_{i=1}^{m}\sum_{j=1}^{m}
\pi_{ij} c_{ij}
+
\varepsilon\,
\mathrm{KL}\!\left(
\pi\,\middle\|\,\hat p_0\otimes\hat p_1
\right)
\right\},
\end{equation}
where $\varepsilon>0$ controls the strength of the entropic regularization.
This problem can be solved efficiently using Sinkhorn iterations \citep{cuturiSinkhornDistancesLightspeed2013a}.
The interpolant $x_{t,\eta^*}$ and the coupling $\pi_{\varepsilon}^{*}$ provide an approximation of the
original Finsler OT problem \ref{eq:OT Finsler Problem} up to the approximation errors.
\paragraph{Flow and Stochastic Model:} Assume we have computed $\left(\pi_{\varepsilon}^{*},x_{\cdot,\eta^*}\right)$.
Drawing $(x_0,x_1)\sim\pi_{\varepsilon}^{*}$ and transporting each pair along its
learned geodesic gives the probability path $p_t(x)
\coloneqq\left(x_{t,\eta^*}\right)_{\#}\pi_{\varepsilon}^{*}(x)$. By construction, $p_0$ and $p_1$ are the endpoint marginals. The corresponding conditional probability distribution path is $p_t(x|x_0,x_1)=\delta_{x_{t,\eta}(x_0,x_1)}$. Furthermore,  define the conditional path 
$\mu_t(x_0,x_1):= x_{t,\eta^\star}(x_0,x_1)$
We then train a model $v_\theta$ using the conditional flow matching objective
\begin{align*}
\label{eq:finsler_flow_matching_loss}
\mathcal{L}_{\mathrm{CFM}}(\theta)
=
\mathbb{E}_{\substack{
(x_0,x_1)\sim\pi_{\varepsilon}^{\star}\\
t\sim\mathcal{U}[0,1]
}}
\left[
\left\|
v_\theta\!\left(
t,\mu_t(x_0,x_1)
\right)
-
\dot \mu_t(x_0,x_1)
\right\|_2^2
\right].
\end{align*}
Motivated by \citet{follmerRandomFieldsDiffusion1988a} we approximate each conditional bridge with an anisotropic Gaussian path. For $(x_0,x_1)\sim\pi^\star_\varepsilon$, define $C_{\rho,t} := C_\rho(\mu_t)$ and
$p_t(x\mid x_0,x_1)=\mathcal{N}\!\left(x;\,\mu_t,\,\sigma^2 t(1-t) C_{\rho,t} \right).$ We evaluate $C$ at the deterministic interpolant $\mu_t$ rather than at the state $x$, so that the conditional path can be sampled in closed form. The interpretation of this path is that noise is injected along accessible directions according to $P$. The conditional probability-flow vector field is
\begin{align*}
    u_t(x\mid x_0,x_1)
    =
    \dot{\mu}_t
    +
    \frac{1-2t}{2t(1-t)}(x-\mu_t)
    +
    \frac{1}{2}
    \dot{C}_{\rho,t}
    C_{\rho,t}^{-1}
    (x-\mu_t),
\end{align*}
See \ref{prop: vectorfield calc} for a derivation. We distill the conditional bridges into a marginal vector field
$v_\theta$ and score $s_\psi$ using simulation-free conditional flow and
score matching \citep{tongSimulationFreeSchrodingerBridges2024a},
\begin{align*}
\mathcal L_{\mathrm{CFM}}(\theta)
&=
\mathbb E_{\substack{
(x_0,x_1)\sim\pi_\varepsilon^\star,\;
t\sim\mathcal U(0,1)\\
x\sim p_t(\cdot\mid x_0,x_1)
}}
\left[
\left\|
\tilde{v}_\theta(t,x)
-
u_t(x\mid x_0,x_1)
\right\|_2^2
\right],
\\
\mathcal L_{\mathrm{CSM}}(\psi)
&=
\mathbb E_{\substack{
(x_0,x_1)\sim\pi_\varepsilon^\star,\;
t\sim\mathcal U(0,1)\\
x\sim p_t(\cdot\mid x_0,x_1)
}}
\left[
\lambda(t)^2
\left\|
s_\psi(t,x)
-
s_t(x\mid x_0,x_1)
\right\|_2^2
\right]
\end{align*}
where the final loss is given by $\mathcal L_{\mathrm{[SF]}^2\mathrm{M}}=\mathcal L_{\mathrm{CFM}}+\mathcal L_{\mathrm{CSM}}$ and the conditional score is 
\[s_t(x\mid x_0,x_1)=\nabla_x \log p_t(x\mid x_0,x_1)=-\frac{1}{\sigma^2 t(1-t)}C_{\rho,t}^{-1}(x-\mu_t).\]
\paragraph{Inference:} We sample \(x_0\sim p_0\) and integrate the learned dynamics from \(t=0\) to \(t=1\). In the deterministic setting, we integrate the ODE
$$ dX_t=v_\theta(t,X_t)\,dt. $$
In the stochastic setting, we use the score-and-flow SDE introduced in \ref{eq: SDE prelim} and make a slight generalization by setting
 $\mathbf g(t,x)=g(t)\, C_{\rho}^{1/2}(x), $ for some scalar-valued noise function $g(t)$. Substituting this choice into the stochastic dynamics gives
\begin{align*}
    dX_t=\left[\tilde{v}_\theta(t,X_t)+\frac{1}{2}g(t)^2\, C_{\rho}(x)s_\psi(t,X_t) + \frac{1}{2}g(t)^2\di (C_{\rho}(x))\right]dt+g(t)\, C_{\rho}(x)^{1/2}\,dW_t,
\end{align*}
where $\di (C_{\rho}(x))_i = \sum_{j}^d \partial_{j}C_{ij}(x)$ and which we integrate from \(X_0\sim p_0\) to \(t=1\). The scalar schedule \(g(t)\) controls the overall noise strength, whereas \(C_{\rho}(x)\) determines its local anisotropic geometry.

%% file: sections/experiments.tex
\section{Experiments}
\paragraph{Experimental Setup:} The transition graph and derived geometric quantities are recomputed after removing the evaluation cells. Our primary metric is Euclidean \(W_2\) at \(t=1/2\), supplemented by endpoint \(W_2\) and the geometry-dependent \(W_{2,F}\). Because $t=1/2$ is a
benchmark convention rather than an established correspondence with biological midpoint time, on Pancreas we additionally report $\overline W_2:= \frac{1}{|\mathcal T|}\sum_{t\in\mathcal T}
 W_2(p_t,p_{\mathrm{withheld}})$. We compare against OT-CFM, MFM/LAND, Curly-FM under matched architectures and training budgets. More details \cref{appendix C}. 
\paragraph{FFM recovers directed and anisotropic transition structure:} To test whether FFM can recover intermediate dynamics when both directionality matters and when the data satisfy the manifold hypothesis, we construct the synthetic Sheet benchmark, a saddle-shaped manifold. We score against a narrow corridor, which represents a subset of the manifold corresponding to high probability transition regions. We define a directed transition kernel that drives motion towards that middle corridor before crossing and away from it afterward. Curly-FM receives the first transition moment as its local velocity field, whereas FFM additionally uses the second moment to capture anisotropic accessibility. FFM substantially improves midpoint recovery, achieving \(W_2 = 0.1588\) and \(0.1449\)  for deterministic and stochastic FFM, respectively, compared with \(0.5221\) for Curly-FM \cref{fig:sheet-paths}. We further performed and ablation of the Finsler metric, which demonstrates that both the directional and anisotropic components contribute to this improvement \cref{fig:sheet_appendix_ablation}. On a complementary synthetic bifurcation benchmark, FFM also preserves cell-level fate assignments across both branches \cref{fig:forksheet-appendix}.
\begin{figure}[t]
\centering
\resizebox{0.5\linewidth}{!}{%
\begin{tabular}{lrrr}
\toprule
\cmidrule(lr){2-4}
Method
& mid $W_2\downarrow$
& end $W_2\downarrow$
& mid $W_{2,F}\downarrow$ \\
\midrule
OT-CFM
& 0.7112 $\pm$ 0.0233
& 0.1734 $\pm$ 0.0216
& 1.0305 $\pm$ 0.0263 \\
MFM (LAND)
& 0.6843 $\pm$ 0.0074
& 0.2357 $\pm$ 0.0258
& 0.8645 $\pm$ 0.0961 \\
Curly-FM
& 0.5221 $\pm$ 0.0016
& \textbf{0.1405 $\pm$ 0.0057}
& 0.5236 $\pm$ 0.1657 \\
FFM (ours)
& 0.1588 $\pm$ 0.0096
& 0.2343 $\pm$ 0.0292
& 0.2892 $\pm$ 0.0090 \\
FFM (ours), with noise
& \textbf{0.1449 $\pm$ 0.0116}
& 0.2636 $\pm$ 0.0682
& \textbf{0.2742 $\pm$ 0.0314} \\
\midrule
\emph{sampling floor}
& \emph{0.0646}
& \emph{0.1896}
& \emph{0.2812} \\
\bottomrule
\end{tabular}
}

\includegraphics[width=0.75\linewidth]{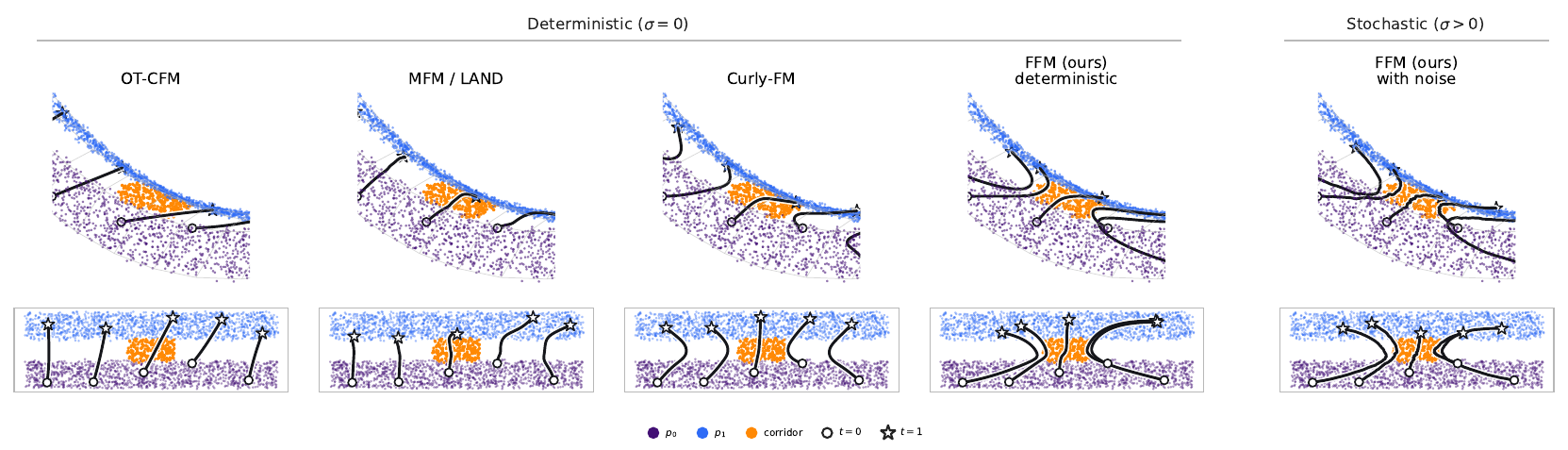}

\vspace{-0.4em}

\caption{
$W_{2,F}$ denotes the Wasserstein distance induced by the Finsler cost $c_F$.
Sampling floor is computed between disjoint halves of the corresponding
reference cloud.
}
\label{fig:sheet-paths}
\end{figure}
\paragraph{FFM improves recovery of pancreatic differentiation in high-dimensional representations:} To test whether FFM's graph-derived geometry improves trajectory reconstruction on real RNA-velocity data, we evaluate FFM on 3,696 pancreatic cells \citep{bastidas-ponceComprehensiveSingleCell2019a} partitioned into latent-time tertiles and withhold the intermediate population, which is enriched for Ngn3-high pre-endocrine cells.  FFM receives an RNA-velocity-derived transition kernel computed using \citet{langeCellRankDirectedSinglecell2022a}, whereas Curly-FM receives per-cell velocities. We found that relative performance depended strongly on representation dimensionality. In two dimensions, Curly-FM achieved lower midpoint $W_2$ than FFM, whereas FFM achieved substantially lower midpoint $W_2$ in the $20$ and $50$dimensional PCA \cref{fig:paths_and_quantitative_results}. endpoint rankings vary across methods \cref{fig:paths_and_quantitative_results}
\begin{figure*}[t]
\centering
\begin{minipage}[c]{0.35\textwidth}
    \centering
    \includegraphics[width=\linewidth]{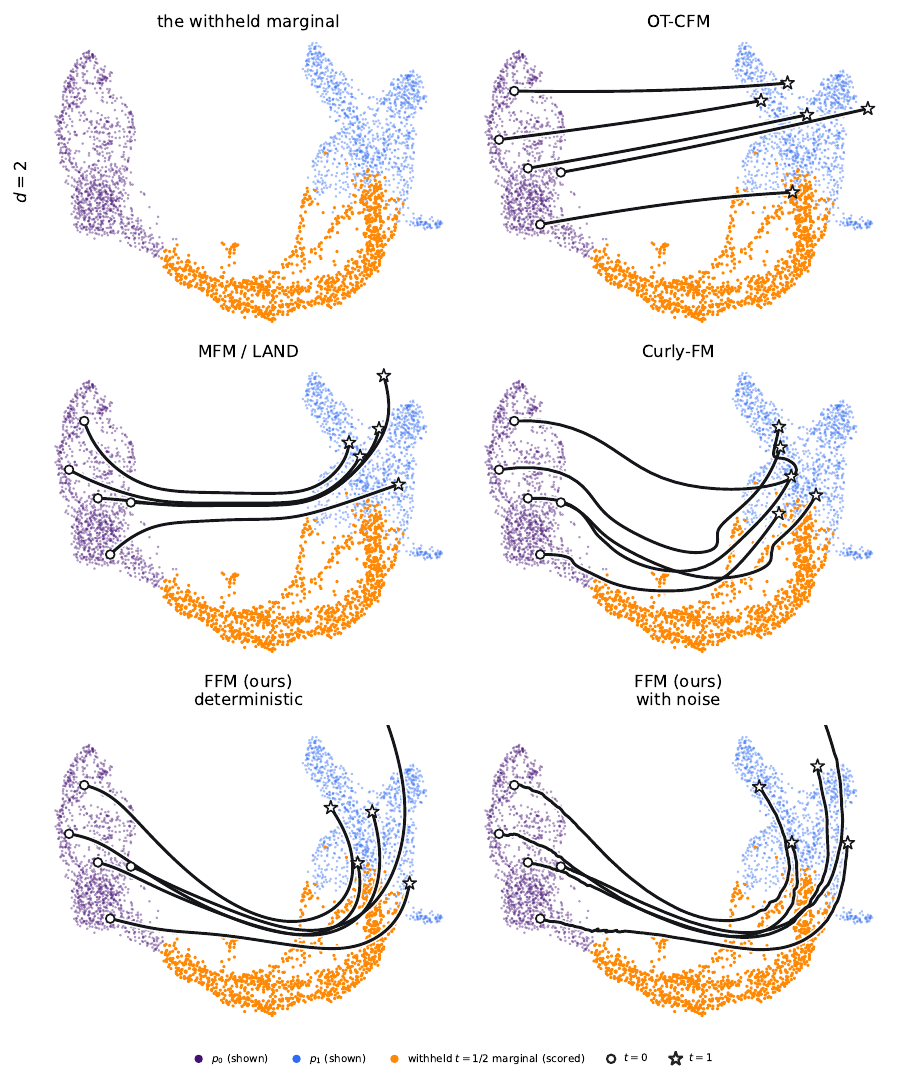}
\end{minipage}
\hfill
\begin{minipage}[c]{0.61\textwidth}
    \centering
    \scriptsize
    \setlength{\tabcolsep}{2.6pt}
    \renewcommand{\arraystretch}{0.98}

    \resizebox{\linewidth}{!}{%
    \begin{tabular}{lrrrr}
\toprule
Method
& $W_2(\frac12)\downarrow$
& $\overline W_2^{[1/3,2/3]}\downarrow$
& $W_2(1)\downarrow$
& $W_{2,F}(\frac12)\downarrow$ \\
\midrule
\multicolumn{5}{l}{\textit{$d=2$}} \\
OT-CFM
& 1.8062 $\pm$ 0.0197
& 1.7869 $\pm$ 0.0230
& \textbf{0.1740 $\pm$ 0.0338}
& 0.6420 $\pm$ 0.0059 \\
MFM / LAND
& 1.4682 $\pm$ 0.0128
& 1.4642 $\pm$ 0.0131
& 0.3769 $\pm$ 0.0346
& \textbf{0.5399 $\pm$ 0.0024} \\
Curly-FM
& \textbf{0.9773 $\pm$ 0.1311}
& \textbf{0.9686 $\pm$ 0.1725}
& 0.2581 $\pm$ 0.0371
& 0.7493 $\pm$ 0.0258 \\
FFM (ours), deterministic
& 1.6692 $\pm$ 0.0154
& 1.6175 $\pm$ 0.0348
& 0.3283 $\pm$ 0.1293
& 0.5681 $\pm$ 0.0100 \\
FFM (ours), with noise
& 1.6957 $\pm$ 0.0305
& 1.6434 $\pm$ 0.0555
& 0.2272 $\pm$ 0.0239
& 0.5730 $\pm$ 0.0082 \\
\emph{sampling floor}
& \emph{0.1113}
& \emph{0.1113}
& \emph{0.1022}
& \emph{0.4162} \\
\midrule
\multicolumn{5}{l}{\textit{$d=20$}} \\
OT-CFM
& 12.3484 $\pm$ 0.0485
& 12.4957 $\pm$ 0.0681
& 5.2262 $\pm$ 0.1583
& 10.9277 $\pm$ 0.0412 \\
MFM / LAND
& 12.0705 $\pm$ 0.0095
& 12.1656 $\pm$ 0.0522
& 7.8205 $\pm$ 0.7863
& 10.4768 $\pm$ 0.0637 \\
Curly-FM
& 12.2962 $\pm$ 0.0414
& 12.3920 $\pm$ 0.0687
& \textbf{4.8562 $\pm$ 0.0576}
& 11.0235 $\pm$ 0.0520 \\
FFM (ours), deterministic
& 10.4319 $\pm$ 0.0896
& 10.7413 $\pm$ 0.1927
& 5.3793 $\pm$ 0.1848
& \textbf{8.9033 $\pm$ 0.0400} \\
FFM (ours), with noise
& \textbf{10.4036 $\pm$ 0.0279}
& \textbf{10.7355 $\pm$ 0.0560}
& 5.0605 $\pm$ 0.1654
& 8.9213 $\pm$ 0.0453 \\
\emph{sampling floor}
& \emph{4.1781}
& \emph{4.1781}
& \emph{4.4483}
& \emph{4.3608} \\
\midrule
\multicolumn{5}{l}{\textit{$d=50$}} \\
OT-CFM
& 13.5145 $\pm$ 0.0466
& 13.6811 $\pm$ 0.1314
& 8.1252 $\pm$ 0.1103
& 15.5900 $\pm$ 0.2287 \\
MFM / LAND
& 12.9526 $\pm$ 0.0605
& 13.0996 $\pm$ 0.1029
& 8.6687 $\pm$ 0.3880
& 14.6185 $\pm$ 0.1085 \\
Curly-FM
& 13.3381 $\pm$ 0.1482
& 13.4440 $\pm$ 0.2069
& \textbf{7.2931 $\pm$ 0.2558}
& 15.0427 $\pm$ 0.1562 \\
FFM (ours), deterministic
& 12.5784 $\pm$ 0.0235
& \textbf{12.7715 $\pm$ 0.0719}
& 7.7975 $\pm$ 0.1005
& 13.9368 $\pm$ 0.1217 \\
FFM (ours), with noise
& \textbf{12.5732 $\pm$ 0.0573}
& 12.7723 $\pm$ 0.0952
& 7.7411 $\pm$ 0.1297
& \textbf{13.7800 $\pm$ 0.0861} \\
\emph{sampling floor}
& \emph{6.7529}
& \emph{6.7529}
& \emph{7.0275}
& \emph{7.2123} \\
\bottomrule
\end{tabular}
}

\vspace{1mm}

\resizebox{\linewidth}{!}{%
\begin{tabular}{@{}lcc@{}}
\toprule
Method & Shared coupling & Method-specific coupling \\
\midrule
Graph-drift FM (control)
& 11.9409 $\pm$ 0.1110
& 11.9223 $\pm$ 0.0909 \\
Anisotropic fixed-time FM \cref{fixed-time FW}
& 11.5712 $\pm$ 0.0880
& 11.4344 $\pm$ 0.1067 \\
Second-moment FFM ($m=0$)
& 11.5424 $\pm$ 0.1226
& 11.4967 $\pm$ 0.0878 \\
FFM, one-form deleted (existing ablation)
& 11.1740 $\pm$ 0.0759
& 11.1183 $\pm$ 0.0774 \\
FFM (ours)
& \textbf{10.3967 $\pm$ 0.1371}
& \textbf{10.3960 $\pm$ 0.1396} \\
\bottomrule
\end{tabular}
    }
\end{minipage}

\caption{
Pancreas trajectory reconstruction. Left: Two-dimensional
representation. Top right: $W_2$ at $t=1/2$, the average
$\overline W_2$ over 11 fixed evaluation times in $[1/3,2/3]$, endpoint $W_2$, and $W_{2,F}$ Bottom right: same-information objective ablation at $d=20$.
``Shared'' fixes the endpoint coupling
``method-specific'' constructs the coupling separately for each method.
Ablation values are $W_2(t=1/2)$, mean $\pm$ population standard deviation
over five seeds.
}
\label{fig:paths_and_quantitative_results}
\end{figure*}

\paragraph{FFM integrates diverse sources of dynamical information through a common transition representation:}
 To demonstrate that FFM can integrate heterogeneous dynamical information through a common transition representation, we evaluate 2,901 lineage-recorded cells from the iTracer hindbrain organoid dataset \citep{heLineageRecordingHuman2022}, withholding the middle diffusion-pseudotime tertile. Lineage information is encoded by barcode families and finer-grained scar families. We construct transition kernels from expression alone and augment them with RNA velocity, lineage similarity derived from barcode and scar families, or both. Each FFM variant achieves lower mean midpoint \(W_2\) than OT-CFM and MFM across the evaluated representations \cref{tab:itracer_main}. We further found that the optimal kernel depends on the representation dimensionality. Velocity performs best in two dimensions, lineage plus velocity in 20 dimensions, and expression alone in 50 dimensions. These results show that FFM can incorporate heterogeneous sources of transition information, while the utility of each modality depends on how well it informs the transition kernel.
\begin{table}[t]
\noindent

\begin{minipage}[t]{0.70\linewidth}
\vspace{0pt}
\resizebox{\linewidth}{!}{%
\begin{tabular}{lccc}
\toprule
Method & $d=2$ mid $W_2$ & $d=20$ mid $W_2$ & $d=50$ mid $W_2$ \\
\midrule
OT-CFM
& $1.3631 \pm 0.0368$
& $15.9054 \pm 0.0804$
& $18.8807 \pm 0.1172$ \\
MFM / LAND
& $1.3563 \pm 0.0537$
& $15.0519 \pm 0.2250$
& $18.2036 \pm 0.2663$ \\
FFM, expression only
& $1.7162 \pm 0.9919$
& $12.8597 \pm 0.1007$
& $\mathbf{16.6979 \pm 0.3020}$ \\
FFM, + velocity
& $\mathbf{1.0204 \pm 0.1040}$
& $12.7288 \pm 0.1253$
& $16.7371 \pm 0.2566$ \\
FFM, + lineage
& $1.3074 \pm 0.1083$
& $12.7368 \pm 0.1455$
& $16.7415 \pm 0.2426$ \\
FFM, + lineage + velocity
& $1.1773 \pm 0.1404$
& $\mathbf{12.7121 \pm 0.1314}$
& $16.9196 \pm 0.2700$ \\
\bottomrule
\end{tabular}
}
\end{minipage}
\hfill
\begin{minipage}[t]{0.27\linewidth}
\vspace{0pt}
\captionsetup{
    justification=raggedright,
    singlelinecheck=false,
    skip=0pt
}
\caption{Intermediate-distribution recovery on iTracer R2 for different
transition-kernel constructions. Lower is better.}
\label{tab:itracer_main}
\end{minipage}

\end{table}
\paragraph{Additional analyses:} Appendix \cref{appendix C} 
reports runtime and parameter-sensitivity analyses, additional results on erythroid differentiation \citep{pijuan-salaSinglecellMolecularMap2019} as used in used in \citet{petrovicCurlyFlowMatching2025a}, and further details of the benchmarks above.

%% file: sections/discussion.tex
\section{Discussion}
We present Finsler Flow Matching (FFM), a framework for transforming discrete Markov transition graphs into continuous stochastic dynamics through a transition-derived Finsler geometry. Its joint first- and second-moment construction captures directional preference and anisotropic accessibility, which guide source--target coupling, geodesic interpolation, and stochastic conditional paths.
\\
Our experiments suggest that this is particularly useful for strongly directional or anisotropic dynamics. On the synthetic Sheet benchmark, ablating either component degrades intermediate-state recovery, while on Pancreas the advantage of FFM becomes more pronounced in higher-dimensional representations. The iTracer experiment further demonstrates the flexibility of the Markov transition kernel as an interface between heterogeneous measurements and generative dynamics. Expression, RNA velocity, lineage information, and their combinations can all be incorporated through the same representation. Nevertheless, additional modalities do not uniformly improve recovery. FFM therefore inherits both the information and the limitations of the supplied transition kernel, highlighting the importance of how experimental measurements are translated into transition probabilities.
\\
These findings should be interpreted within the scope of the supplied dynamical prior. FFM does not resolve the non-identifiability of cellular dynamics from snapshots alone, and interpreting its geodesics as most-probable transition curves requires the stated diffusion-consistency and small-noise assumptions. Moreover, the Gaussian conditional paths used for stochastic training are tractable approximations rather than exact bridges of the graph-implied reference diffusion. Future work could explicitly model uncertainty in the transition graph and develop richer conditional path families. Another promising direction is conditional FFM, in which experimental covariates or perturbations parameterize the induced geometry, allowing the model to capture condition-specific changes in trajectory dynamics.

%% file: sections/appendix.tex
\begin{appendix}
\section{Prelimiaries}
\paragraph{Finsler Geometry:}\label{appendix:Finsler} For a complete introduction to Finsler geometry we refer the reader to \citet{baoIntroductionRiemannFinslerGeometry2000a}.
Let $M$ be an $m-$dimensional manifold embedded in $\R^d$. The \emph{tangent bundle} $TM \coloneqq \bigsqcup_{x \in M} T_x M$ is the $2m$-dimensional manifold composed of pairs $(x,v)$ with $x \in M$ and $v \in T_x M$, where $T_xM$ is the \emph{tangent space} at $x$.
\begin{definition}[Finsler structure]\label{def:finsler}
 A \emph{Finsler structure} on $M$ is a continuous function $F: TM \to \R_{\geq 0}$, which is  smooth on $TM \setminus \{0\}$, such that for all $(x,v) \in TM$:
\begin{enumerate}
    \item[(F1)] \emph{Positive homogeneity:} $F(x, \lambda v) = \lambda F(x,v)$ for all $\lambda > 0$.
    \item[(F2)] \emph{Strong convexity:} the Hessian matrix $g_{ij}(x,v) = \tfrac{1}{2}\partial^2 F^2 / \partial v^i \partial v^j$ is positive definite for $v \neq 0$.
    \item[(F3)] \emph{Positivity:} $F(x,v) > 0$ for $v \neq 0$.
\end{enumerate}
\end{definition}
In general, we have that $F(x,-v) \neq F(x,v)$ hence, the cost of moving in direction $v$ is different from the cost of moving in direction $-v$. A Riemannian metric is the special case $F(x,v) = \sqrt{v^\top G(x) v}$,  which additionally satisfies $F(x,-v) = F(x,v)$.
Analogously to Riemannian Geometry, one can define a notion of length and geodesics.
\begin{definition}[Length and geodesics]
For an absolutely continuous curve $\gamma: [0,1] \to M$, define
\[
L_F(\gamma) = \int_0^1 F(\gamma(t), \dot\gamma(t)) \,\dd t, \label{eq:length functional} \qquad E_F(\gamma) = \int_0^1 F(\gamma(t), \dot\gamma(t))^2 \,\dd t.
\]
A \emph{geodesic} between $x_0, x_1 \in \R^d$ is a curve minimizing $L_F$ subject to $\gamma(0) = x_0$, $\gamma(1) = x_1$. Note that energy minimizers are constant-speed parametrized geodesics. Furthermore, we define the \emph{length metric }on $M$ as:
\[
c_F(x_0,x_1)
=
\inf_{\gamma}
\int_0^1 F(\gamma_t,\dot\gamma_t)\,dt,
\]
where the infimum is taken over curves satisfying $\gamma_0=x_0,\ \gamma_1=x_1$.
\end{definition}
We are particularly interested in the special case called a \textit{Randers Metric}, where $G(x)$ is a smooth Riemannian metric and  $\beta(x) $ a smooth $1$-form and we define
\begin{equation}\label{eq:randers}
F(x,v) = \sqrt{v^\top G(x) v} \;+\; \beta(x)^\top v.
\end{equation}
\begin{lemma} \label{lem: Finsler admissibility}[\citet{baoIntroductionRiemannFinslerGeometry2000a}]
The Randers structure \eqref{eq:randers} is a Finsler structure if and only if $\norm{\beta(x)}_{G\inv(x)} < 1$ pointwise, where $\norm{u}_{G\inv}^2 = u^\top G\inv u$.
\end{lemma}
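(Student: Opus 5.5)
The plan is to prove both directions by reducing to a pointwise statement. Positive homogeneity (F1), continuity, and smoothness on $TM\setminus\{0\}$ are automatic: $\alpha(x,v):=\sqrt{v^\top G(x)v}$ is smooth away from $v=0$ because $G(x)$ is positive definite, and $\beta(x)^\top v$ is linear in $v$. Everything therefore hinges on (F2) and (F3) at a fixed $x$. Throughout, write $b:=\norm{\beta(x)}_{G\inv(x)}$ and use the $G$-Cauchy--Schwarz inequality
\[
|\beta^\top v| \le \norm{\beta}_{G\inv}\,\alpha(v).
\]

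\emph{Necessity.} I would exhibit a direction where positivity fails. Take $v=-G\inv\beta\neq 0$; if $\beta=0$ then $b=0<1$ trivially. Then $\alpha(v)=b$ and $\beta^\top v=-b^2$, so $F(x,v)=b(1-b)\le 0$ whenever $b\ge 1$. This contradicts (F3), so a Finsler Randers structure must satisfy $b<1$.

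\emph{Sufficiency, positivity.} If $b<1$, then for $v\neq0$ Cauchy--Schwarz gives $F\ge (1-b)\alpha(v)>0$, which is (F3).

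\emph{Sufficiency, strong convexity.} This is the main computation. First I would compute $g_{ij}=\tfrac12\partial_{v^i}\partial_{v^j}F^2$. With $\ell_i:=G_{ij}v^j/\alpha$ and $F=\alpha+\beta_i v^i$, the standard result (Bao--Chern--Shen, \S11.1) is
\[
g_{ij}=\frac{F}{\alpha}\Big(G_{ij}-\ell_i\ell_j\Big)+(\ell_i+\beta_i)(\ell_j+\beta_j).
\]
Next I would compute the determinant using the matrix determinant lemma twice, once for each rank-one correction relative to $\tfrac{F}{\alpha}G$. This should give
\[
\det g=\Big(\frac{F}{\alpha}\Big)^{d+1}\det G.
\]
Since $F>0$ by the previous step, $\det g>0$. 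A positive determinant alone does not yield positive definiteness, so I would finish with a homotopy argument. Replace $\beta$ by $s\beta$ for $s\in[0,1]$. The norm condition $s b<1$ holds for every $s$, so $\det g^{(s)}>0$ throughout. The matrix $g^{(s)}$ depends continuously on $s$, and at $s=0$ it equals $G$, which is positive definite. No eigenvalue of $g^{(s)}$ can cross zero, so $g^{(1)}=g$ is positive definite for every $v\neq0$, which is (F2).

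The main obstacle I expect is the determinant identity. The two rank-one terms interact, and the bookkeeping of $\ell^\top G\inv\ell=1$ and $\ell^\top G\inv\beta=\beta^\top v/\alpha$ must be done carefully. If this becomes messy, a fallback is to verify positive definiteness directly. For $w\neq0$, decompose $w$ into its $G$-component along $v$ and its $G$-orthogonal part. The first term of $g_{ij}$ is then $\tfrac{F}{\alpha}$ times a form that is positive on the orthogonal part, and the second term is a square. One then checks that the two cannot vanish simultaneously, using $\ell(v)+\beta(v)=F>0$.
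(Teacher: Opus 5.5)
Your proof is correct, and it is essentially the standard argument from Bao--Chern--Shen, \S11.1 (the identity $\det g=(F/\alpha)^{d+1}\det G$ combined with the deformation $\alpha+s\beta$); the paper gives no proof of its own and only cites that reference. One bookkeeping caution: since $\ell^\top G^{-1}\ell=1$, the matrix $G-\ell\ell^\top$ is singular, so apply the determinant lemma by first adding $(\ell+\beta)(\ell+\beta)^\top$ to $\tfrac{F}{\alpha}G$ and only then subtracting $\tfrac{F}{\alpha}\ell\ell^\top$ (or avoid the determinant entirely, since your fallback already proves (F2) from $F>0$: the first term vanishes only for $w=tv$, where the second term equals $t^2F^2>0$).
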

\begin{remark}[Approximation of the data manifold]
Although the underlying dynamics may be concentrated on an unknown
low-dimensional manifold $\mathcal M\subset\mathbb R^d$, we do not
estimate $\mathcal M$. Instead, we construct the Finsler
metric on $\mathbb{R}^d$ so that motion along directions supported
by the local data and transition geometry has low cost, whereas motion
away from the data manifold, in particular along weakly supported
normal directions, has high cost. Thus, the Finsler structures in this work will be defined on $\R^d$.
\end{remark}
\begin{definition}[$P$-derived Randers metric]\label{def:Pderived}
A $P$-derived Randers metric is a Randers metric
\[
F_P(x_i,v)
=
\sqrt{v^\top G_P(x_i)v}
+
\beta_P(x_i)^\top v
\]
whose metric tensor $G_P(x_i)$ and one-form $\beta_P(x_i)$ are
constructed from local transition statistics of $P$, with
\[
G_P(x_i)\in\mathrm{SPD}(d),
\qquad
\|\beta_P(x_i)\|_{G_P(x_i)^{-1}}<1.
\]
\end{definition}
For example, if $C_P(x_i)$ denotes a local transition second-moment
tensor, one may consider the family
\[
G_{P,q}(x_i)
=
\bigl(\tau I+C_P(x_i)\bigr)^{-q},
\qquad q>0,
\]
together with any admissible transition-derived one-form $\beta_P$.
For $q=1$, this is closely related to local-kernel geometries, where
the second moment determines the induced Riemannian metric
\citep{berryLocalKernelsGeometric2015a}.
\paragraph{Optimal Transport:} Optimal transport (OT) was originally formulated by Monge as the problem of finding a deterministic transport map $T:\mathbb{R}^d\to\mathbb{R}^d$ pushing $p_0$ onto $p_1$ while minimizing the transport cost:
\[
\inf_{T:\,T_{\#}p_0=p_1}
\int_{\mathbb{R}^d} c(x,T(x))\,\mathrm{d}p_0(x).
\]
Since a transport map cannot split mass, such a minimizers does not exist in general. For example, if
\[
p_0=\delta_0,
\qquad
p_1=\frac12\delta_{-1}+\frac12\delta_{1},
\]
no map $T$ can satisfy $T_{\#}p_0=p_1$. The Kantorovich relaxation resolves this issue by optimizing instead over transport plans $\pi\in\Pi(p_0,p_1)$:
\[
\inf_{\pi\in\Pi(p_0,p_1)}
\int_{\mathbb{R}^d\times\mathbb{R}^d}
c(x_0,x_1)\,\mathrm{d}\pi(x_0,x_1).
\]
In the example above, an admissible plan would be
\[
\pi=\frac12\delta_{(0,-1)}+\frac12\delta_{(0,1)}.
\]
Under the Euclidean squared distance $c(x_0, x_1) = \frac{1}{2}\Vert{}x_0 - x_1\Vert{}^2$ (absolute continuity of $p_0$) \emph{Brenier's theorem} guarantees that the optimal plan $\pi^\star$ exists and is unique. Furthermore, there is a unique optimal transport map $T = \nabla \varphi$, for some convex function $\varphi$. The classical \emph{Benamou--Brenier theorem} establishes that this static transport problem is equivalent to a dynamic formulation over density paths $\rho_t$ and velocity fields $v_t$:
\begin{equation*}\label{eq:benamou-brenier-euclidean}
    W_2^2(p_0, p_1) = \inf_{(\rho, v)} \int_0^1 \int_{\mathbb{R}^d} |v_t(x)|^2  \mathrm{d}\rho_t(x) , \mathrm{d}t \quad \text{s.t.} \quad \partial_t \rho_t + \nabla \cdot (\rho_t v_t) = 0, \quad \rho_0 = p_0, \quad \rho_1 = p_1.
\end{equation*}
When substituting the Euclidean cost with the Finsler geodesic action cost $c_F(x_0, x_1)$ defined in \eqref{eq:finsler-geodesic-cost}, the static Kantorovich problem remains uniquely attained \citep[Theorem~4.10]{ohtaFinslerInterpolationInequalities2009a}. Moreover, the static problem coincides with the dynamic Finsler Benamou-Brenier formulation:
\begin{align*}
W_{2,F}^2(p_0,p_1) \coloneqq \inf_{\pi \in \Pi(p_0, p_1)} \int_{\mathbb{R}^d \times \mathbb{R}^d} c_F(x_0, x_1)^2  \mathrm{d}\pi(x_0, x_1) = \inf_{(\rho, v)} \int_0^1 \int_{\mathbb{R}^d} F(x, v_t(x))^2\mathrm{d}\rho_t(x)  \mathrm{d}t
\end{align*}
subject to the continuity equation $\partial_t \rho_t + \nabla \cdot (\rho_t v_t) = 0$ with endpoint conditions $\rho_0 = p_0$ and $\rho_1 = p_1$ \citep[Theorem~7.3 and Corollary~7.5]{ohtaHeatFlowFinsler2009a}.
\paragraph{Flow Matching:} We give a brief introduction to Flow Matching for a more in depth treatment consider \citet{holderriethIntroductionFlowMatchinga}. The idea behind flow matching is to find a vector field that transports one distribution into another and can generate samples by integrating said vector field. More formally one can think of it as  trying to find a  vector field
$u_t : [0,1) \times \mathbb{R}^n \rightarrow \mathbb{R}^n$ such that
\begin{align*}
\left\{
\begin{aligned}
&\frac{d}{dt}x_t = u_t(x_t), \\
&x_0 \sim p_0, x_1 \sim p_1 .
\end{aligned}
\right.
\end{align*}
Once such a vector field has been learned, samples from $p_1$ can be generated by sampling $x_0 \sim p_0$ and integrating the vector field from $t=0$ to $t=1$. In general, this vector field is not unique, which gives us some choice. For example, if \(p_0=p_1=p\), then any vector field satisfying $\nabla\cdot(pu)=0$ leaves the distribution \(p\) invariant.
More generally, the endpoint distributions \(p_0\) and \(p_1\) do not uniquely determine the intermediate probability path. We exploit this freedom by choosing a probability density path \((p_t)_{t\in[0,1]}\) connecting \(p_0\) and \(p_1\), and then finding a vector field whose flow realizes this path. Even for a fixed path \(p_t\), such a vector field need not be unique (one could make the choice optimal by choosing $u_t$ as the minimizer of some action).
\newline
\newline
 In general, we do not have $p_0$ and $p_1$ available in closed form, and hence the corresponding marginal path $p_t$ is intractable. The idea is to condition the path on samples; then the marginal distributions become Dirac distributions. We therefore introduce a coupling $\pi \in \Pi(p_0,p_1)$, i.e., a probability distribution on $\mathbb{R}^n \times \mathbb{R}^n$ with marginals $p_0$ and $p_1$, and define a tractable conditional probability path $p_t(x \mid x_0,x_1)$ with marginals $\delta_{x_0}$ and  $\delta_{x_1}$. The corresponding marginal path is
\begin{align*}
p_t(x)
=
\int_{\mathbb{R}^n \times \mathbb{R}^n}
p_t(x \mid x_0,x_1)\,d\pi(x_0,x_1).
\end{align*}
For every endpoint pair $(x_0,x_1)$, we define a conditional vector field $u_t(x \mid x_0,x_1)$ that solves 
\begin{align*}
    \left\{
    \begin{aligned}
    &\frac{d}{dt}x_t = u_t(x_t\mid x_0, x_1), \\
    &x_t \sim p_t(\cdot \mid x_0, x_1) \text{ for all } t \in [0,1].
    \end{aligned}
    \right.
\end{align*}
 Surprisingly, although the marginal vector field is generally intractable, the Conditional Flow Matching objective
\begin{align*}
\mathcal{L}_{\mathrm{CFM}}(\theta)
=
\mathbb{E}_{\substack{
(x_0,x_1)\sim\pi,\,
t\sim\mathcal{U}[0,1],\\
x\sim p_t(\cdot\mid x_0,x_1)
}}
\left[
\left\|
v_\theta(t,x)
-
u_t(x\mid x_0,x_1)
\right\|_2^2
\right]
\end{align*}
has the same gradient with respect to $\theta$ as the corresponding unconditional Flow Matching objective. Hence, we can learn the marginal vector field using only samples from tractable conditional probability paths, resulting in a simulation-free training procedure.
\paragraph{Simulation free Score and Flow Matching:} Here we introduce some of the ideas of \citet{tongSimulationFreeSchrodingerBridges2024a} \footnote{Which the authors highly suggest reading in full.}.
Similarly to the previous section, we want to find a vector field $\tilde{u}_t$ that satisfies. 
\begin{align*}
\left\{
\begin{aligned}
&\dd x_t = \tilde{u}_t(x_t) \dd t + g_t dW_t, \\
&x_0 \sim p_0, x_1 \sim p_1 .
\end{aligned}
\right.
\end{align*}
where $g_t$ is some scalar valued function controlling the diffusion strength and $W_t$ is a standard Brownian motion, i.e.\ a continuous stochastic process with independent Gaussian increments. Hence, instead of an ODE we treat a stochastic differential equation (SDE). 
\newline
\newline
In a first step we go back again the setting of the previous section and choose a conditional probability path $p_t (x\mid x_0, x_1)$ and learn a conditional vector field such that the corresponding ODE
\begin{align*}
\frac{d}{dt}x_t
=
u_t(x_t)
\end{align*}
satisfies $x_t \sim p_t$ for every $t$. We additionally introduce the conditional score of the probability path,
\begin{align*}
s_t(x \mid x_0, x_1)
:=
\nabla \log p_t(x \mid x_0, x_1 ).
\end{align*}
Note that similarly to flow matching, also the conditional score matching loss has the same gradient unconditional one. Using this score, the deterministic flow can be transformed into a stochastic process,
\begin{align*}
\dd x_t
=
\left[
v_t(x_t)
+
\frac{1}{2}g_t g_t^\top s_t(x_t)
\right]\dd t
+
g_t\,dW_t.
\end{align*}
The Fokker-Planck equation of this SDE is given by
\begin{align*}
\partial_t p_t
&=
-\nabla\cdot
\left(
p_t
\left[
v_t+\frac{1}{2}g_tg_t^\top s_t
\right]
\right)
+
\frac{1}{2}
\nabla\cdot
\left(
g_tg_t^\top \nabla p_t
\right) \\
&=
-\nabla\cdot(p_t v_t),
\end{align*}
where we used $p_t s_t=\nabla p_t$. Hence, the ODE and the corresponding SDE satisfy the same Fokker--Planck equation and therefore have the same marginals $X_t\sim p_t$. The score thus allows us to add stochasticity to the learned flow without changing the prescribed probability path.
\section{Theoretical Contributions}\label{sec: appendic Theoretical}
\paragraph{Transition-supported geodesics.}
For a threshold $\tau>0$, define the high-probability transition edges
\[
E_\tau
:=
\{(i,j):P_{ij}\geq\tau\}.
\]
For endpoints $x_0,x_1\in\mathcal D$, define the transition-supported
subset
\[
\mathcal S_\tau(x_0,x_1)
:=
\left\{
x_i\in\mathcal D:
x_i \text{ lies on a directed path from $x_0$ to $x_1$
using only edges in }E_\tau
\right\}.
\]
We assume that $\mathcal S_\tau(x_0,x_1)$ is non-empty. For $r>0$, define its $r$-neighborhood by
\[
U_r(\mathcal S_\tau)
:=
\left\{
x\in\mathbb R^d:
\operatorname{dist}(x,\mathcal S_\tau)<r
\right\},
\qquad
\operatorname{dist}(x,\mathcal S_\tau)
:=
\min_{y\in\mathcal S_\tau}\|x-y\|_2 .
\]
Furthermore, define
\[
c_r
:=
\min_{\substack{
r/2\leq\operatorname{dist}(x,\mathcal S_\tau)\leq r\\
\|v\|_2=1}}
F_P(x,v)
>0.
\]
The positivity follows since $F_P$ is continuous and strictly positive
for $v\neq0$, while the domain of minimization is compact. For our particular $P$-derived Randers metric \eqref{eq:empirical-fw-metric}, $c_r$ can be
related directly to the transition moments. By Cauchy--Schwarz,
\[
F_P(x,v)
\geq
\frac{
\sqrt{\|m(x)\|_{C_\rho(x)^{-1}}^2+\lambda^2}
-\|m(x)\|_{C_\rho(x)^{-1}}
}{
\sqrt{\lambda_{\max}(C_\rho(x))}}
\|v\|_2 .
\]
Hence, regions and directions with large $P$-weighted transition
covariance are relatively cheap, whereas regions with little transition
support, corresponding to small eigenvalues of $C_\rho$, incur a larger cost.
Thus annuli farther from $\mathcal S_\tau$ have a larger escape cost $c_r$.
\begin{proposition}[Confinement and directional preference of geodesics]
\label{prop:transition-geodesics}
Let $F_P$ be the Randers metric induced by the first and second
transition moments $m$ and $C$ in \eqref{eq:empirical-fw-metric}.
Suppose there exists a curve $\bar\gamma$ connecting $x_0$ and $x_1$
inside $U_{r/2}(\mathcal S_\tau)$. If
\[
L_{F_P}(\bar\gamma)<c_r r,
\]
then every minimizing geodesic connecting $x_0$ and $x_1$ is contained
in $U_r(\mathcal S_\tau)$.

Moreover, for fixed local speed
$\|v\|_{C_\rho(x)^{-1}}$, the Finsler cost is minimized when $v$ is
positively aligned with the first transition moment $m(x)$.
\end{proposition}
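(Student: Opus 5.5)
The confinement claim is a comparison argument. The alternative path $\bar\gamma$ gives an upper bound on the length of any minimizer. Any curve that leaves $U_r(\mathcal S_\tau)$ must cross the annulus $A:=\{x: r/2\le \operatorname{dist}(x,\mathcal S_\tau)\le r\}$, and it pays at least $c_r$ per unit of Euclidean length while doing so.

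\emph{Step 1: endpoints and the crossing interval.} Let $\gamma$ be a minimizing geodesic from $x_0$ to $x_1$, and suppose it is not contained in $U_r(\mathcal S_\tau)$. Then there is $t^*$ with $\operatorname{dist}(\gamma(t^*),\mathcal S_\tau)\ge r$.

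Both endpoints satisfy $\operatorname{dist}(\cdot,\mathcal S_\tau)<r/2$. They are joined by $\bar\gamma\subset U_{r/2}(\mathcal S_\tau)$, and in the intended setting they lie in $\mathcal S_\tau$ itself, so the distance there is $0$.

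The function $t\mapsto \operatorname{dist}(\gamma(t),\mathcal S_\tau)$ is continuous and $1$-Lipschitz in $\gamma(t)$. By the intermediate value theorem there are times $s_1<t^*$ with $\operatorname{dist}(\gamma(s_1),\mathcal S_\tau)=r/2$, and $s_2\le t^*$ with $\operatorname{dist}(\gamma(s_2),\mathcal S_\tau)=r$. We may also choose them so that $\gamma([s_1,s_2])\subset A$: take $s_1$ to be the last time before $s_2$ at which the distance equals $r/2$, and $s_2$ the first time at which it equals $r$.

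\emph{Step 2: lower bound for the crossing.} The Lipschitz property gives
\[
\frac r2\le \operatorname{dist}(\gamma(s_2),\mathcal S_\tau)-\operatorname{dist}(\gamma(s_1),\mathcal S_\tau)\le \int_{s_1}^{s_2}\|\dot\gamma\|_2\,\dd t .
\]
By positive homogeneity (F1) and the definition of $c_r$, $F_P(\gamma,\dot\gamma)\ge c_r\|\dot\gamma\|_2$ on $[s_1,s_2]$. Hence $\int_{s_1}^{s_2}F_P\ge c_r r/2$.

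The same argument on the return leg (from $t^*$ back to distance $r/2$, before reaching $x_1$) yields a second disjoint interval of cost $\ge c_r r/2$. Since $F_P>0$ elsewhere, $L_{F_P}(\gamma)\ge c_r r>L_{F_P}(\bar\gamma)$. This contradicts minimality.

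This step is the main obstacle. It is the reason two crossings, out and back, are needed to reach the factor $c_r r$ rather than $c_r r/2$. It also needs care with the endpoint location: I would state explicitly that $x_0,x_1$ lie within distance $r/2$ of $\mathcal S_\tau$, which follows from the hypothesis on $\bar\gamma$. Positivity of $c_r$ uses compactness of $A\cap\{\|v\|_2=1\}$, as in the text. If $\mathcal S_\tau$ is finite, $A$ is bounded, so $A\cap\{\|v\|_2=1\}$ is compact. Positivity of $F_P$ for $v\neq0$ comes from Lemma~\ref{lem: Finsler admissibility}. For the metric \eqref{eq:empirical-fw-metric} one checks admissibility directly: with $G=a^2C_\rho^{-1}$, $a=\sqrt{\|m\|^2_{C_\rho^{-1}}+\lambda^2}$, and $\beta=-C_\rho^{-1}m$, we get $\|\beta\|_{G^{-1}}=\|m\|_{C_\rho^{-1}}/a<1$ because $\lambda>0$.

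\emph{Step 3: directional preference.} Fix $x$ and the speed $s=\|v\|_{C_\rho(x)^{-1}}$. Then $F_P(x,v)=a s-\langle v,m\rangle_{C_\rho^{-1}}$, where the first term is constant on this level set. By Cauchy--Schwarz in the $C_\rho^{-1}$ inner product, $\langle v,m\rangle_{C_\rho^{-1}}\le s\|m\|_{C_\rho^{-1}}$, with equality if and only if $v=s\,m/\|m\|_{C_\rho^{-1}}$ (when $m\neq0$). So the cost is minimized exactly when $v$ is a positive multiple of $m(x)$, and the minimal value is $s\,(a-\|m\|_{C_\rho^{-1}})>0$.
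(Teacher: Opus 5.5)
Your proposal is correct and follows the same approach as the paper's proof: a minimizer that leaves $U_r(\mathcal S_\tau)$ must cross the annulus $r/2\le\operatorname{dist}(\cdot,\mathcal S_\tau)\le r$ once going out and once coming back, the $1$-Lipschitz distance function forces Euclidean length at least $r/2$ per crossing and hence $F_P$-length at least $c_r r$, and the directional claim is the same Cauchy--Schwarz argument in the $C_\rho^{-1}$ inner product. Your version makes the paper's one-line crossing argument precise (explicit crossing times, needing only that the endpoints lie in $U_{r/2}(\mathcal S_\tau)$) and adds the explicit admissibility check $\|\beta\|_{G^{-1}}=\|m\|_{C_\rho^{-1}}/a<1$, which the paper leaves implicit.
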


\begin{proof}
Suppose a minimizing geodesic $\gamma^\star$ leaves
$U_r(\mathcal S_\tau)$. Since $x_0,x_1\in\mathcal S_\tau$, it must cross
the annulus
\[
r/2\leq\operatorname{dist}(x,\mathcal S_\tau)\leq r
\]
once when leaving and once when returning. Since
$x\mapsto\operatorname{dist}(x,\mathcal S_\tau)$ is $1$-Lipschitz,
its total Euclidean length in this annulus is at least $r$.
By positive homogeneity and the definition of $c_r$,
\[
L_{F_P}(\gamma^\star)\geq c_r r
>
L_{F_P}(\bar\gamma),
\]
contradicting minimality. Hence
$\gamma^\star\subset U_r(\mathcal S_\tau)$.

For the directional claim, write $A(x)=C_\rho(x)^{-1}$. For fixed
$\|v\|_{A(x)}$, the first term of
\[
F_P(x,v)
=
\sqrt{\|m(x)\|_{A(x)}^2+\lambda^2}\,\|v\|_{A(x)}
-
\langle v,m(x)\rangle_{A(x)}
\]
is constant. Thus minimizing $F_P$ is equivalent to maximizing
$\langle v,m(x)\rangle_{A(x)}$, which by Cauchy--Schwarz is attained
when $v$ is a positive multiple of $m(x)$.
\end{proof}
\paragraph{Uniform Convergence of Empirical Drift and Diffusion:}
Let $Y_T$ be a stochastic process in $\R^d$ governed by the Stochastic Differential Equation (SDE)
\begin{align}
    dY_T = b_0(Y_T)\,dt + \Sigma_{\mathrm{sde}}(Y_T)\,dW_t,
    \label{eq:sde}
\end{align}
where $b_0\in C^\infty(\R^d,\R^d) $ is the drift vector, $W_t$ is a standard $d$-dimensional Brownian motion and $D(x) \coloneqq \Sigma_{\mathrm{sde}}(x)\Sigma_{\mathrm{sde}}(x)^\top$,
is the smooth diffusion matrix.
\begin{definition}[Transition Operator and Infinitesimal Generator]
The transition operator (semigroup) $T_t$ acting on a test function $f \in C_c^\infty(\mathbb{R}^d)$ is defined by$$T_t f(x) \coloneqq \mathbb{E}\left[f(Y_T) \mid Y_0 = x\right].$$The infinitesimal generator $L$ of the process $Y_T$ is the operator$$Lf(x) \coloneqq \lim_{t \to 0^+} \frac{T_t f(x) - f(x)}{t} = \nabla f(x)^\top b_0(x) + \frac{1}{2}\mathrm{Tr}\left(D(x) \nabla^2 f(x)\right).$$
\end{definition}
Suppose we observe samples $\{x_i\}_{i=1}^N \subset \mathbb{R}^d$ drawn i.i.d. from a continuous density $p(x)$ alongside a row-stochastic transition matrix $P \in \mathbb{R}^{N \times N}$ constructed at spatial scale $\varepsilon_N > 0$.Thus the discrete transition operator is
\[
(Pf)(x_i)=\sum_{j=1}^N P_{ij}f(x_j).
\]
\\
\\
To link discrete dataset observations to the continuous diffusion process, we need the following assumptions.
\begin{assumption} \label{ass:Time-Space Scalings}The discrete time step $\Delta t_N$ scales quadratically with the bandwidth parameter $\varepsilon_N$ 
$$\Delta t_N=c\varepsilon_N^2  \quad \text{for a fixed constant } c > 0. \text{ (Time-Space Scaling)}$$
\end{assumption}
Informally, we say that a Markov chain with transition matrix $P$ is a discrete observation of a continuous stochastic process $Y_T$ if its discrete transition operator matches the continuous process generator $L$ over an infinitesimal time step. Formally, we assume:
\begin{assumption}[Uniform discrete-generator consistency]
\label{ass:generator-consistency}
Define the empirical generator
\[
L_{N,\varepsilon_N}f(x_i)
\coloneqq
\frac{1}{\Delta t_N}
\left(
\sum_{j=1}^N P_{ij}f(x_j)-f(x_i)
\right).
\]
We assume that for every \(f\in C_c^\infty(\R^d)\),
\[
\sup_{x_i\in K}
\left|
L_{N,\varepsilon_N}f(x_i)-Lf(x_i)
\right|
\to 0
\]
as \(N\to\infty\) and \(\varepsilon_N\to 0\).
\end{assumption}

\begin{assumption}\label{ass:locality}
    We assume local support of the transition kernel on the
region of interest: for every compact $K\subset\mathbb{R}^d$, there
exists a compact neighborhood $K^+\supset K$ such that, for all
sufficiently large $N$,
\[
    x_i\in K,\quad P_{ij}>0
    \qquad\Longrightarrow\qquad
    x_j\in K^+.
\]
\end{assumption}
\begin{proposition}[Recovery of drift and diffusion from \(P\)]
\label{prop:p-to-sde}
Assume that $P=P_{N,\varepsilon_N}$ is row-stochastic and that
Assumptions~\cref{ass:Time-Space Scalings}
~\cref{ass:generator-consistency} and ~\cref{ass:locality} hold. Define
\[
\hat b_{0,N}(x_i)
=
\frac{1}{\Delta t_N}
\sum_{j=1}^N P_{ij}(x_j-x_i)\text{ and }
\hat D_N(x_i)
=
\frac{1}{\Delta t_N}
\sum_{j=1}^N
P_{ij}(x_j-x_i)(x_j-x_i)^\top.
\]
Then
\[
\max_{x_i\in K}
\|\hat b_{0,N}(x_i)-b_0(x_i)\|
\to 0 \text{ and }
\max_{x_i\in K}
\|\hat D_N(x_i)-D(x_i)\|
\to 0.
\]
\end{proposition}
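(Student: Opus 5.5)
The plan is to apply Assumption~\ref{ass:generator-consistency} to test functions that coincide with linear and quadratic polynomials on the relevant compact set. The empirical generator applied to such functions returns exactly $\hat b_{0,N}$ and $\hat D_N$, while the continuous generator returns $b_0$ and $D$. Fix a compact $K$ and let $K^+$ be the compact neighborhood from Assumption~\ref{ass:locality}. Choose a cutoff $\chi\in C_c^\infty(\R^d)$ with $\chi\equiv 1$ on an open set containing $K\cup K^+$. For each coordinate $k$ define $f_k(y)=\chi(y)\,y_k$, and for each pair $k,l$ define $g_{kl}(y)=\chi(y)\,y_k y_l$. All of these lie in $C_c^\infty(\R^d)$, so the assumption applies to each of them.

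\textbf{First moment.} For $x_i\in K$, locality gives $x_j\in K^+$ whenever $P_{ij}>0$, and $f_k=y_k$ there. Since $P$ is row-stochastic,
\[
L_{N,\varepsilon_N}f_k(x_i)=\frac{1}{\Delta t_N}\sum_j P_{ij}\bigl(x_{j,k}-x_{i,k}\bigr)=\hat b_{0,N}(x_i)_k .
\]
On the other side, $\nabla f_k=e_k$ and $\nabla^2 f_k=0$ on a neighborhood of $K$. Hence $Lf_k(x_i)=b_0(x_i)_k$. Uniform convergence over $x_i\in K$ then follows coordinatewise from Assumption~\ref{ass:generator-consistency}, and the $d$ coordinates combine to give convergence in norm.

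\textbf{Second moment.} Here I would not use $g_{kl}$ directly. Instead I would use the identity
\[
(x_j-x_i)_k(x_j-x_i)_l=x_{j,k}x_{j,l}-x_{i,k}x_{j,l}-x_{i,l}x_{j,k}+x_{i,k}x_{i,l}.
\]
Summing against $P_{ij}$ and dividing by $\Delta t_N$ gives
\[
\hat D_N(x_i)_{kl}=L_{N,\varepsilon_N}g_{kl}(x_i)-x_{i,k}\,L_{N,\varepsilon_N}f_l(x_i)-x_{i,l}\,L_{N,\varepsilon_N}f_k(x_i).
\]
This uses row-stochasticity and the fact that all cutoffs equal one on $K^+$. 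For the continuous generator, near $K$ we have $\nabla g_{kl}=y_l e_k+y_k e_l$ and $\nabla^2 g_{kl}=e_ke_l^\top+e_le_k^\top$. This gives $Lg_{kl}(x)=x_l b_{0,k}+x_k b_{0,l}+D_{kl}(x)$, and subtracting the first-moment terms leaves exactly $D_{kl}(x)$. The difference $\hat D_N(x_i)_{kl}-D(x_i)_{kl}$ is therefore controlled by
\[
|L_{N,\varepsilon_N}g_{kl}-Lg_{kl}|+|x_{i,k}|\,|L_{N,\varepsilon_N}f_l-Lf_l|+|x_{i,l}|\,|L_{N,\varepsilon_N}f_k-Lf_k|.
\]
Each term tends to zero uniformly on $K$, because $|x_{i,k}|$ and $|x_{i,l}|$ are bounded by $\sup_K\|x\|$.

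\textbf{Expected obstacle.} The main subtlety is justifying that the polynomial test functions may be replaced by compactly supported ones. This is exactly where Assumption~\ref{ass:locality} is needed: without it, far-away $x_j$ with $P_{ij}>0$ would make $Pf$ differ from $P$ applied to the raw polynomial. I will handle it by choosing $\chi\equiv1$ on $K^+$ itself, not only on $K$. A second point is that $K^+$ must be independent of $N$ for all large $N$, which the assumption states. Assumption~\ref{ass:Time-Space Scalings} is not actually needed for the argument beyond fixing the normalization $\Delta t_N$.
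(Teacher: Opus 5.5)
Your proposal is correct and takes essentially the same route as the paper. You use cutoff versions $\chi y_k$ and $\chi y_k y_l$ of the coordinate and quadratic functions, with $\chi\equiv 1$ near $K^+$, and invoke locality to identify the empirical generator on them with $\hat b_{0,N}$ and the raw second moments. You then split $\hat D_N-D$ into three generator-consistency errors weighted by coordinates bounded on $K$, exactly as the paper does; the paper's proof likewise never uses the time--space scaling assumption.
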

\begin{proof}
Fix a compact set $K$ and let $K^+$ be the compact neighborhood
from \cref{ass:locality}. Choose a cutoff
$\chi\in C_c^\infty(\mathbb{R}^d)$ such that
$\chi\equiv 1$ on a neighborhood of $K^+$. For $a,b\in\{1,\ldots,d\}$ define
\[
    \widetilde q_a(x):=\chi(x)x_a,
    \qquad
    \widetilde q_{ab}(x):=\chi(x)x_ax_b .
\]
Then $\widetilde q_a,\widetilde q_{ab}\in C_c^\infty(\mathbb{R}^d)$,
so Assumption~\cref{ass:generator-consistency} applies. For $x_i\in K$ \cref{ass:locality} implies that every $x_j$ with $P_{ij}>0$
lies in $K^+$, where $\chi\equiv1$. Hence
\[
    L_{N,\varepsilon_N}\widetilde q_a(x_i)
    =
    L_{N,\varepsilon_N}q_a(x_i),
    \qquad
    L_{N,\varepsilon_N}\widetilde q_{ab}(x_i)
    =
    L_{N,\varepsilon_N}q_{ab}(x_i).
\]
Moreover, since $\chi\equiv1$ on a neighborhood of $K$,
\[
    L\widetilde q_a(x)=Lq_a(x),
    \qquad
    L\widetilde q_{ab}(x)=Lq_{ab}(x),
    \qquad x\in K.
\]
Therefore Assumption~\cref{ass:generator-consistency} may be applied
to the cutoff observables while yielding the desired generator
identities for $q_a$ and $q_{ab}$ on $K$. Since $P$ is
row-stochastic,
\[
L_{N,\varepsilon_N}q_a(x_i)
=
\frac{1}{\Delta t_N}
\sum_{j=1}^N
P_{ij}(x_{j,a}-x_{i,a})
=
\hat b_{0,N,a}(x_i).
\]
Moreover,
\[
Lq_a(x)=b_{0,a}(x).
\]
The drift convergence therefore follows directly from
Assumption~\cref{ass:generator-consistency}. Expanding the increments for the second moment gives 
\[
\begin{aligned}
L_{N,\varepsilon_N}q_{ab}(x_i)
&=
x_{i,a}\hat b_{0,N,b}(x_i)
+
x_{i,b}\hat b_{0,N,a}(x_i)
+
\hat D_{N,ab}(x_i).
\end{aligned}
\]
On the other hand, applying the diffusion generator yields
\[
Lq_{ab}(x)
=
x_a b_{0,b}(x)
+
x_b b_{0,a}(x)
+
D_{ab}(x).
\]
Consequently,
\[
\begin{aligned}
\hat D_{N, ab}(x_i)-D_{ab}(x_i)
&=
\bigl(
L_{N,\varepsilon_N}q_{ab}(x_i)
-
Lq_{ab}(x_i)
\bigr)
\\
&\quad
-
x_{i,a}
\bigl(
L_{N,\varepsilon_N}q_b(x_i)-Lq_b(x_i)
\bigr)
\\
&\quad
-
x_{i,b}
\bigl(
L_{N,\varepsilon_N}q_a(x_i)-Lq_a(x_i)
\bigr).
\end{aligned}
\]
Since $K$ is compact, its coordinates are uniformly bounded.
Uniform generator consistency therefore implies uniform convergence
of every entry of $\hat D_N$ to the corresponding entry of $D$.
Since the dimension is finite, this implies convergence in any
matrix norm.
\end{proof}
\paragraph{Generator consistency for kernels:}\label{app:Generator consistency}
The generator-consistency assumption \cref{ass:generator-consistency} is closely related to continuum limits of local graph random walks
\citep[Theorems~2--3]{tingAnalysisConvergenceGraph2011}, whose limiting
drift and diffusion are determined by the first two local moments
\citep[Lemma~3.9 and Theorem~3.12]{berryLocalKernelsGeometric2015a}.
For neighbourhood kernels, sufficient conditions
involve shrinking bandwidths, sufficiently dense sampling from a smooth
positive density, and controlled local bandwidth estimates. For the generator consistency assumption the row-normalized kernel,
including any velocity and lineage kernel, must satisfy
\[
m_h = ch^2 b_0 + o(h^2),
\qquad
C_h = ch^2 D + o(h^2),
\]
uniformly with higher-order displacement contributions divided
by $h^2$ going to zero as $h$ goes to zero. For the cosine and correlation constructions studied by \citet{liMathematicsRNAVelocity2021} gives a first-order limit on the time scale $h$, rather than a diffusion limit on the time
scale $h^2$. Their inner-product construction instead admits a
second-order limit. A possible
diffusive construction is to replace a fixed cosine weight
$\exp(\nu\cos\theta)$ by $\exp(\nu_h\cos\theta)$ with $\nu_h/h\to\nu_0$, which could be explored in future work.
FFM itself uses the transition moments directly, their interpretation
as drift and diffusion.
\paragraph{First-order velocity-kernel limits:}\label{first oder} Suppose that for some time scale \(\tau_h\to0\), the discrete generators \(L_h=(P_h-I)/\tau_h\) converge to
\[
Lf=b^\top\nabla f+\tfrac12\operatorname{Tr}(A\nabla^2f).
\]
Applying this convergence to coordinate and quadratic coordinate functions (as in \cref{prop:p-to-sde}) yields
\[
m_h/\tau_h\to b,\qquad C_h/\tau_h\to A.
\]
In particular, \citet{liMathematicsRNAVelocity2021} show that cosine RNA-velocity kernels possess a first-order limit with \(h=\sqrt{\varepsilon}\)
\[
(P_h-I)/h\to a(x)\widehat v(x)^\top\nabla ,
\]with \(a(x)>0\). Consequently \(m_h/h\to a(x)\widehat v(x)\), while \(C_h/h\to0\). Thus the first transition moment asymptotically recovers the RNA-velocity direction. For such kernels we therefore interpret \(C_h\) as a finite-scale measure of anisotropic transition accessibility rather than as an estimator of diffusion.
\paragraph{Extension of empirical drift and diffusion.}
The estimators in Proposition~\cref{prop:p-to-sde} are defined only at
the observed states. To evaluate the induced metric along continuous
curves, we extend them to a compact region
$K\subset\mathbb{R}^d$ containing all admissible paths. Let $K^+$ be
a compact neighborhood of $K$ such that
\[
K\subset \operatorname{int}(K^+).
\]
We assume that Assumption~\cref{ass:generator-consistency} holds on $K^+$, so that
Proposition~\cref{prop:p-to-sde} applies on this neighborhood.

\begin{definition}[Uniform nodewise consistency]
\label{def:nodewise-consistency}
Let $A\subset\mathbb{R}^d$ be compact and let $f$ be a continuous
vector- or matrix-valued field on $A$. Suppose that
$\widehat f_N(x_i)$ estimates $f(x_i)$ at each observed state
$x_i\in A$. We say that $\widehat f_N$ is uniformly nodewise
consistent for $f$ on $A$ if
\[
\max_{i:x_i\in A}
\left\|
    \widehat f_N(x_i)-f(x_i)
\right\|
\longrightarrow0
\]
along the asymptotic regime under consideration. For matrix-valued functions, any matrix norm may be used, since all
matrix norms are equivalent in finite dimension.
\end{definition}
In particular, Proposition~\cref{prop:p-to-sde} shows that
$\hat b_{0,N}$ and $\hat D_N$ are uniformly nodewise
consistent for $b_0$ and $D$, respectively, on $K^+$.

\begin{definition}[Kernel extension]
\label{def:p-kernel-fields}
Let $h\in C_c^\infty([0,\infty))$ be nonnegative, supported in
$[0,1]$, and strictly positive on a neighborhood of zero. Let
$\eta_N\to0$ and define
\[
k_{\eta_N}(x,y)
=
h\left(
    \frac{\|x-y\|^2}{\eta_N^2}
\right).
\]
Define the fill distance of the sample cloud relative to $K$ by
\[
r_N(K)
\coloneqq
\sup_{x\in K}
\min_{i:x_i\in K^+}
\|x-x_i\|.
\]
We assume that the samples become dense relative to the kernel
bandwidth:
\[
\frac{r_N(K)}{\eta_N}
\longrightarrow0.
\]
Since $h$ is positive near zero, this condition ensures that, for all
sufficiently large $N$, every $x\in K$ receives positive kernel
weight from at least one observed state in $K^+$. For such $N$, define
\[
w_{i,N}(x)
=
\frac{
    k_{\eta_N}(x,x_i)
}{
    \displaystyle
    \sum_{j:x_j\in K^+}
    k_{\eta_N}(x,x_j)
},
\qquad
x\in K,\quad x_i\in K^+,
\]
and extend the nodewise estimators by
\[
\widetilde b_{0,N}(x)
=
\sum_{i:x_i\in K^+}
w_{i,N}(x)\hat b_{0,N}(x_i)
\]
and
\[
\widetilde D_N(x)
=
\sum_{i:x_i\in K^+}
w_{i,N}(x)\hat D_N(x_i).
\]
Consequently, for sufficiently large $N$, the extended fields are smooth on a neighborhood of
$K$. Moreover, because every $\hat D_N(x_i)$ is positive
semidefinite, $\widetilde D_N(x)$ is also positive semidefinite.
\end{definition}

\begin{proposition}[Uniform convergence of the kernel-extended estimators]
\label{prop:p-smoothed-uniform}
Assume that $b_0$ and $D$ are continuous on $K^+$ and that
$\hat b_{0,N}$ and $\hat D_N$ are uniformly nodewise consistent on $K^+$ in the sense of
Definition~\cref{def:nodewise-consistency}. If
$\eta_N\longrightarrow0,
\frac{r_N(K)}{\eta_N}\longrightarrow0,$
then
\[
\sup_{x\in K}
\left\|
    \widetilde b_{0,N}(x)-b_0(x)
\right\|
\longrightarrow0 \text{ and }
\sup_{x\in K}
\left\|
    \widetilde D_N(x)-D(x)
\right\|
\longrightarrow0.
\]
\end{proposition}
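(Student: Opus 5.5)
The approach is the standard bias--variance style splitting for Nadaraya--Watson type extensions, done here deterministically because the nodewise errors are already controlled uniformly. Fix $x\in K$ and $N$ large enough that the denominator in $w_{i,N}(x)$ is positive. The weights are nonnegative and sum to one over $\{i:x_i\in K^+\}$. Since $h$ is supported in $[0,1]$, $w_{i,N}(x)>0$ only if $\|x-x_i\|\le\eta_N$. Using $\sum_i w_{i,N}(x)=1$, I will write
\[
\widetilde b_{0,N}(x)-b_0(x)
=
\sum_{i:x_i\in K^+} w_{i,N}(x)\bigl(\hat b_{0,N}(x_i)-b_0(x_i)\bigr)
+
\sum_{i:x_i\in K^+} w_{i,N}(x)\bigl(b_0(x_i)-b_0(x)\bigr),
\]
and use the identical decomposition for $\widetilde D_N-D$ with any fixed matrix norm.

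For the first term, the triangle inequality and the convexity of the weights bound it by $\max_{i:x_i\in K^+}\|\hat b_{0,N}(x_i)-b_0(x_i)\|$. This bound is independent of $x$ and tends to zero by uniform nodewise consistency on $K^+$ (Definition~\ref{def:nodewise-consistency}, supplied by Proposition~\ref{prop:p-to-sde} applied on $K^+$).

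For the second term, only indices with $\|x_i-x\|\le\eta_N$ contribute. It is therefore bounded by $\omega(\eta_N)$, where
\[
\omega(\delta)=\sup\{\|b_0(y)-b_0(z)\|: y\in K^+,\ z\in K,\ \|y-z\|\le\delta\}.
\]
Here $b_0$ is continuous on the compact set $K^+$, hence uniformly continuous there. This is the one point needing care: $x\in K$ and $x_i\in K^+$ both lie in $K^+$, since $K\subset K^+$, so uniform continuity on $K^+$ applies and gives $\omega(\eta_N)\to0$ as $\eta_N\to0$. Taking the supremum over $x\in K$ of the sum of both bounds yields the claim for $b_0$, and the same argument applies verbatim for $D$.

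The main obstacle is really a well-definedness issue rather than an estimate. I must ensure that for all large $N$ and every $x\in K$, some $x_i\in K^+$ has $k_{\eta_N}(x,x_i)>0$, uniformly in $x$. I would argue as follows. Let $h>0$ on $[0,s_0)$ for some $s_0>0$. Since $r_N(K)/\eta_N\to0$, eventually $r_N(K)<\sqrt{s_0}\,\eta_N$. Then for each $x\in K$ the nearest sample $x_i\in K^+$ satisfies $\|x-x_i\|^2/\eta_N^2<s_0$, so its kernel weight is positive. This is exactly the condition noted in Definition~\ref{def:p-kernel-fields}, and with it the two-term bound above completes the proof. No rate on the fill distance beyond $r_N/\eta_N\to0$ is needed, since the nodewise error enters only through a supremum.
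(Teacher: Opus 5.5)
Your proposal is correct and matches the paper's proof: the same split into a nodewise-error term, bounded by the maximal nodewise error, and a continuity term, bounded by the modulus of continuity at $\eta_N$ via the support of $h$. Your explicit argument that the denominator is eventually positive uniformly on $K$ is a useful addition, since the paper only asserts it in Definition~\ref{def:p-kernel-fields}.
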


\begin{proof} Modulus of continuity for a continuos function $f$ on $K^+$ is defined as:
\[
\omega_f(r)
\coloneqq
\sup_{\substack{
    x,y\in K^+\\
    \|x-y\|\le r
}}
\|f(x)-f(y)\|.
\]
Since $K^+$ is compact, $f$ is uniformly continuous and therefore
$\omega_f(r)\longrightarrow0 \text{ as }r\longrightarrow0.$ Consider first the drift field. Since the kernel weights are
nonnegative and sum to one,
\[
\begin{aligned}
\left\|
    \widetilde b_{0,N}(x)-b_0(x)
\right\|
&\leq
\sum_{i:x_i\in K^+}
w_{i,N}(x)
\left\|
    \hat b_{0,N}(x_i)-b_0(x_i)
\right\|
\\
&\quad+
\sum_{i:x_i\in K^+}
w_{i,N}(x)
\left\|
    b_0(x_i)-b_0(x)
\right\|.
\end{aligned}
\]
Whenever $w_{i,N}(x)>0$, the support condition on $h$ implies
\[
\|x_i-x\|\leq\eta_N.
\]
Hence
\[
\left\|
    \widetilde b_{0,N}(x)-b_0(x)
\right\|
\leq
\max_{i:x_i\in K^+}
\left\|
    \hat b_{0,N}(x_i)-b_0(x_i)
\right\|
+
\omega_{b_0}(\eta_N).
\]
Taking the supremum over $x\in K$, the first term converges to zero
by uniform nodewise consistency, while the second converges to zero
because $\eta_N\to0$.

Applying the same argument to the matrix-valued field $D$ gives
\[
\sup_{x\in K}
\left\|
    \widetilde D_N(x)-D(x)
\right\|
\leq
\max_{i:x_i\in K^+}
\left\|
    \hat D_N(x_i)-D(x_i)
\right\|
+
\omega_D(\eta_N)
\longrightarrow0.
\]
\end{proof}

\paragraph{Freidlin--Wentzell Theory.} Consider the small-noise diffusion
\[
\dd Y_T^\kappa
=
b_0(Y_T^\kappa)\,\dd t
+
\sqrt{2\kappa}\,D(Y_T^\kappa)^{1/2}\,\dd W_t ,
\]
where \(b_0\) is smooth. For an
absolutely continuous path \(\phi:[0,T]\to\mathbb R^d\), the
Freidlin--Wentzell (FW) \citep{freidlinRandomPerturbationsDynamical2012a} action is defined as:
\begin{align}\label{eq: Fixed-Time FW}
S_T(\phi)
=
\frac14
\int_0^T
(\dot\phi_t-b_0(\phi_t))^\top
D(\phi_t)^{-1}
(\dot\phi_t-b_0(\phi_t))
\,\dd t .
\end{align}
FW theory gives a path-space large-deviation principle with rate
functional $S_T$. Informally speaking, among paths satisfying the same endpoint constraints,
minimizers of $S_T$ are the asymptotically most probable transition
paths (See Theorem 4.1~\citet{freidlinRandomPerturbationsDynamical2012a}).
\\
\\
The fixed-time action $S_T$ depends on the prescribed transition duration $T$. Minimizing over $T$ yields the quasipotential
\[
V(x_0,x_1)
\coloneqq
\inf_{T>0}
\inf_{\substack{
    \phi: \phi(0)=x_0,\ \phi(T)=x_1
}}
S_T(\phi).
\]
By Proposition~2.1 and Equation~(2.14) of
\citet{heymannGeometricMinimumAction2008a},
\begin{equation}
\label{eq:quasipotential}
V(x_0,x_1)
=
\inf_{T>0}
\inf_{\substack{
    \phi:
\phi(0)=x_0,\ \phi(T)=x_1
}}
S_T(\phi)
=
\frac12
\inf_{\substack{
    \gamma:
    \gamma(0)=x_0,\ \gamma(1)=x_1
}}
S_{\mathrm{FW}}(\gamma).
\end{equation}
where $S_{\mathrm{FW}}(\gamma) = \int_0^1F^{\mathrm{FW}}(\gamma(t), \dot \gamma(t))\dd t$ with
\[F^{\mathrm{FW}}(x,v) \coloneqq \Bigl(v^\top \underbrace{\Vert{}b_0(x)\Vert{}_{D(x)^{-1}}^2 D(x)^{-1}}_{G_{\mathrm{FW}}(x)} v\Bigl)^{1/2} + v^\top \underbrace{\left( -D(x)^{-1}b_0(x) \right)}_{\beta_{\mathrm{FW}}(x)}.\]
This has almost the form of a Finsler length functional \cref{eq:length functional}, however, for $b_0(x)\neq0$,
$\|\beta_{\mathrm{FW}}(x)\|_
{G_{\mathrm{FW}}(x)^{-1}}
=1.$
Thus $F^{\mathrm{FW}}$ is a degenerate Randers-type structure rather than Finsler metric (See \cref{lem: Finsler admissibility}).
\\
\\
$S_{\mathrm{FW}}$ defines the
parameterization-invariant FW action whose
minimizers are the minimum-action transition curves. Let $\gamma^*$ be the unique minimizer (up  reparametrization) among curves connecting $x_0$ to $x_1$. We define the optimal transition time scale
\[
T^* = \int_0^1 \frac{\dd t}{\Lambda(\gamma^*(t), \dot{\gamma}^*(t))}, \qquad \text{where } \Lambda(x, v) = \sqrt{\frac{b_0(x)^\top D(x)^{-1} b_0(x)}{v^\top D(x)^{-1} v}} ,
\]
and denote by $\tau_\delta(X^\kappa) = \inf \{ t > 0 \mid Y_T^\kappa \in \bar{B}_\delta(x_1) \}$ the first hitting time of a closed $\delta$-ball centered at $x_1$.
Under the hypotheses of
\citet[Proposition~2.3]{heymannGeometricMinimumAction2008a},
including uniqueness of the minimizer and of the corresponding constrained fixed-time minimizers, together with the
stated compactness and convergence conditions, the following
conditional concentration statement holds, for any error tolerance $\eta > 0$,
\[
\lim_{\delta \to 0^+ \\ T \to T^*} \liminf_{\kappa \to 0} \, \mathbb{P}_{x_0} \!\left( \rho_F \!\left( X^\kappa\big|_{[0, \tau_\delta(X^\kappa)]}, \, \gamma^* \right) < \eta \,\,\middle|\,\, \tau_\delta(X^\kappa) \le T \right) = 1 ,
\]
where $\rho_F$ is the Fréchet distance between curves. Informally speaking, conditioned on the system completing the transition from $x_0$ into a small neighborhood of $x_1$ within a time frame close to $T^*$, the realization $Y_T^\kappa$ concentrates with probability $1$ in an arbitrarily narrow geometric tube around the minimizer $\gamma^*$ as $\kappa \to 0$.
\paragraph{Empirical Finsler minimizers converge to geometric FW
transition curves:} We next show that minimizers of the regularized empirical Finsler
action converge to the population geometric FW minimizer, which has
the small-noise transition interpretation established above.
\begin{assumption}[Compact path class]
\label{ass:compact-path-class}
Let \(\Gamma=\{\gamma_\theta:\theta\in\Theta, \Vert{}\dot{\gamma}(t)\Vert{} \le M)\}\) be a class of smooth curves with
\(\gamma_\theta(0)=x_0\), \(\gamma_\theta(1)=x_1\), \(\Theta\) compact, and
\((t,\theta)\mapsto(\gamma_\theta(t),\dot\gamma_\theta(t))\) continuous. Set
\(K_\Gamma=\overline{\bigcup_{\theta\in\Theta}\gamma_\theta([0,1])}\). We assume that $K_\Gamma\subset K$, where $K$ is the compact set on
which the uniform convergence in
Proposition~\cref{prop:p-smoothed-uniform} holds.
\end{assumption}
For the unregularized geometric Freidlin--Wentzell interpretation
in this subsection, we additionally assume that $D$ is uniformly
positive definite on the compact region $K^+$ containing the admissible paths.
For numerical stability and to ensure uniform positive definiteness at finite \(N\), we use the ridge-regularized estimator
\[
\tilde D_{\rho_N}(x)
\coloneqq
\tilde D_N(x)+\rho_N I,
\]
where \(\rho_N>0\) and \(\rho_N\to0\) as \(N\to\infty\).
Furthermore, we define the regularized FW metric as: 
\[
\tilde F^{\mathrm{FW}}_{N, \lambda}(x,v)
\coloneqq
\sqrt{
    v^\top\tilde D_{\rho_N}(x)^{-1}v
}
\,
\sqrt{\|\tilde b_{0,N}(x)\|^2_
{\tilde D_{\rho_N}(x)^{-1}}+\lambda_N^2}
-
v^\top
\tilde D_{\rho_N}(x)^{-1}
\tilde b_{0,N}(x).
\]
, note that this ensures that $\tilde F^{\mathrm{FW}}_{N, \lambda}$ is a $P$-derived Randers \cref{def:Pderived} metric thus a Finsler structure and hence a for some sequence $\lambda_N >0$ which converges to $0$ as $N \rightarrow \infty$.

\begin{theorem}[Consistency of regularized empirical geometric FW
minimizers]
\label{thm:empirical-gmam}
Assume Assumption~\cref{ass:compact-path-class} and the assumptions
of Proposition~\cref{prop:p-smoothed-uniform}. Let
$\rho_N>0$ and $\lambda_N>0$ satisfy $\rho_N\to0,
\lambda_N\to0$. Define
\[
\tilde S_N(\gamma)
\coloneqq
\int_0^1
\tilde{F}^{\mathrm{FW}}_{N,\lambda_N}
\left(
    \gamma(t),
    \dot{\gamma}(t)
\right)
\,\dd t,
\]
and let
\[
\tilde\gamma_N^*
\in
\arg\min_{\gamma\in\Gamma}
\tilde S_N(\gamma).
\]
Then:
\begin{enumerate}
    \item
    \[
    \sup_{\gamma\in\Gamma}
    \left|
        \tilde S_N(\gamma)
        -
         S_{\mathrm{FW}}(\gamma)
    \right|
    \longrightarrow0.
    \]

    \item
    If $\mathcal S_{\mathrm{FW}}$ has a unique minimizer
    $\gamma^*$ over $\Gamma$, up to reparameterization, then
    \[
    \rho_F
    \left(
        \tilde\gamma_N^*,
        \gamma^*
    \right)
    \longrightarrow0.
    \]
\end{enumerate}
Consequently, the regularized empirical Finsler minimizer converges
to the population FW minimizer within $\Gamma$. If
$\gamma^*$ is also the global geometric FW minimizer connecting
$x_0$ to $x_1$, and the hypotheses of
\citet[Proposition~2.3]{heymannGeometricMinimumAction2008a} hold,
then $\gamma^*$ is the asymptotically most probable transition
curve in the small-noise sense stated above, and
$\tilde\gamma_N^*$ converges to this curve.
\end{theorem}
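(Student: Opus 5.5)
The plan is a uniform convergence argument on a compact parameter set, followed by a standard argmin-consistency step. For part~1, I would first show that the integrand converges uniformly,
\[
\sup_{x\in K,\ \|v\|\le M}\bigl|\tilde F^{\mathrm{FW}}_{N,\lambda_N}(x,v)-F^{\mathrm{FW}}(x,v)\bigr|\longrightarrow 0 .
\]
Integrating over $t\in[0,1]$ then gives the bound on $\sup_{\gamma\in\Gamma}|\tilde S_N(\gamma)-S_{\mathrm{FW}}(\gamma)|$, because every $\gamma\in\Gamma$ has $\gamma(t)\in K_\Gamma\subset K$ and $\|\dot\gamma(t)\|\le M$ by Assumption~\ref{ass:compact-path-class}.

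To get the uniform convergence of the integrand, I would proceed in three steps.
\begin{enumerate}
\item By Proposition~\ref{prop:p-smoothed-uniform}, $\tilde b_{0,N}\to b_0$ and $\tilde D_N\to D$ uniformly on $K$. Since $\rho_N\to0$, also $\tilde D_{\rho_N}\to D$ uniformly.
\item $D$ is uniformly positive definite on $K^+$, say $D\succeq \mu I$. For large $N$ we then have $\tilde D_{\rho_N}\succeq (\mu/2) I$ on $K$. Matrix inversion is Lipschitz on $\{A\succeq (\mu/2)I\}$, via $\|A^{-1}-B^{-1}\|\le \|A^{-1}\|\,\|A-B\|\,\|B^{-1}\|$, so $\tilde D_{\rho_N}^{-1}\to D^{-1}$ uniformly.
\item The three building blocks are $v^\top\tilde D_{\rho_N}^{-1}v$, $\|\tilde b_{0,N}\|^2_{\tilde D_{\rho_N}^{-1}}+\lambda_N^2$, and $v^\top\tilde D_{\rho_N}^{-1}\tilde b_{0,N}$. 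All ingredients are bounded on the compact set $K\times\{\|v\|\le M\}$, so each block converges uniformly to its population counterpart; the extra term $\lambda_N^2\to0$ does not affect this.
\end{enumerate}
The square root is not Lipschitz at $0$, so I would use the uniform $\tfrac12$-Hölder bound $|\sqrt a-\sqrt b|\le\sqrt{|a-b|}$ for $a,b\ge0$. The products of square roots are of bounded quantities, so uniform convergence is preserved. This is exactly where $\lambda_N\to0$ is needed to recover the degenerate limit $F^{\mathrm{FW}}$.

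For part~2, I would run the classical argmin argument. Put $\ell(\theta)=S_{\mathrm{FW}}(\gamma_\theta)$.
\begin{itemize}
\item $\ell$ is continuous on the compact set $\Theta$. This follows from continuity of $(t,\theta)\mapsto(\gamma_\theta,\dot\gamma_\theta)$, continuity of $F^{\mathrm{FW}}$ on $K\times\{\|v\|\le M\}$, and dominated convergence.
\item Let $\theta_N$ index $\tilde\gamma_N^*$. By part~1,
\[
\ell(\theta_N)\le \tilde S_N(\tilde\gamma_N^*)+o(1)\le \tilde S_N(\gamma^*)+o(1)\le S_{\mathrm{FW}}(\gamma^*)+o(1),
\]
so $\ell(\theta_N)\to\min\ell$.
\item By compactness, every subsequence of $\theta_N$ has a further convergent subsequence with limit $\bar\theta$. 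Continuity gives $\ell(\bar\theta)=\min\ell$, and uniqueness up to reparametrization then identifies $\gamma_{\bar\theta}$ with $\gamma^*$ in the Fréchet sense.
\item Finally, $\theta\mapsto\gamma_\theta$ is continuous into $C([0,1])$ by uniform continuity on $[0,1]\times\Theta$, and $\rho_F$ is dominated by the sup-norm. Hence $\rho_F(\tilde\gamma_N^*,\gamma^*)\to0$ along every subsequence, and therefore along the whole sequence.
\end{itemize}
The closing statement then follows by combining this with the Heymann--Vanden-Eijnden concentration result quoted above.

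I expect the main obstacle to be making the identification in part~2 rigorous. Uniqueness holds only up to reparametrization, so the argument should be phrased in terms of Fréchet classes. I would use that $\rho_F$ is a pseudometric that is continuous with respect to uniform convergence, and that all reparametrizations of $\gamma^*$ lying in $\Gamma$ are at Fréchet distance zero from $\gamma^*$.

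A second, more minor subtlety is that the square-root step requires the Hölder rather than the Lipschitz bound, because the limiting quantity $\|b_0\|_{D^{-1}}$ may vanish. The Hölder estimate handles this uniformly.
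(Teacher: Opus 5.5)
Your proposal is correct and follows essentially the same route as the paper. Part~1 is uniform convergence of the integrand on $K_\Gamma\times\overline{B_M(0)}$, obtained from uniform convergence of the kernel-extended moments, uniform positive definiteness of $D$ to control the inverse, and $\lambda_N\to0$, and then integrated over $t$; part~2 is an argmin-consistency step using compactness of $\Gamma$ and uniqueness of $\gamma^*$. The only differences are cosmetic: you make the inversion and square-root estimates explicit where the paper appeals to continuity of the map on a compact set, and you run the argmin step by subsequence extraction in $\Theta$ rather than through the paper's explicit separation gap $\Delta_r$ over $\Gamma_r=\{\gamma\in\Gamma:\rho_F(\gamma,\gamma^*)\ge r\}$, which is an equivalent argument.
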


\begin{proof}
By Proposition~\cref{prop:p-smoothed-uniform}, we have uniform convergence on \(K_\Gamma\):
\[
\sup_{x\in K_\Gamma}\|\tilde b_{0,N}(x)-b_0(x)\|\to 0,
\qquad
\sup_{x\in K_\Gamma}\|\tilde D_N-D(x)\|\to 0 .
\]
Since \(\rho_N\to0\), we also have
\[
\sup_{x\in K_\Gamma}
\|\tilde D_{\rho_N}(x)-D(x)\|
\le
\sup_{x\in K_\Gamma}
\|\tilde D_N(x)-D(x)\|
+
\rho_N
\to0.
\]
Since \(D\) is uniformly positive definite on \(K_\Gamma\), and since matrix inversion is continuous on the set of uniformly positive definite matrices, it follows that
\[
\sup_{x\in K_\Gamma}\|\tilde D_{\rho_N}(x)^{-1}-D(x)^{-1}\|\to 0 .
\]

By Assumption~\cref{ass:compact-path-class}, all curves in \(\Gamma\) have uniformly bounded velocity:
\[
\|\dot\gamma(t)\|\le M
\qquad
\text{for all }\gamma\in\Gamma,\ t\in[0,1].
\]
Therefore all pairs \((\gamma(t),\dot\gamma(t))\) lie in the compact set $K_\Gamma\times \overline{B_M(0)}$ .

On this compact set, the map
\[
(b,D,v, \lambda_N)
\mapsto
\sqrt{v^\top D^{-1}v}\,\sqrt{\|b\|^2_{D^{-1}}+ \lambda_N^2}
-
v^\top D^{-1}b
\]
is continuous. The uniform convergence of \(\tilde b_{0,N}\), \(\tilde D_{\rho_N}\) and $\lambda_N\rightarrow 0$ implies
\[
\sup_{x\in K_\Gamma,\ \|v\|\le M}
\left|
\tilde F^{\rm FW}_{N, \lambda_N}(x,v)-F^{\rm FW}(x,v)
\right|
\to 0 .
\]
Consequently, for every \(\gamma\in\Gamma\),
\[
|\tilde S_N(\gamma)-S_{\rm FW}(\gamma)|
\le
\int_0^1
\left|
\tilde F^{\rm FW}_{N, \lambda_N}(\gamma(t),\dot\gamma(t))
-
F^{\rm FW}(\gamma(t),\dot\gamma(t))
\right|
\,dt .
\]
Taking the supremum over \(\gamma\in\Gamma\), we obtain
\[
\sup_{\gamma\in\Gamma}
|\tilde S_N(\gamma)-S_{\rm FW}(\gamma)|
\le
\sup_{x\in K_\Gamma,\ \|v\|\le M}
\left|
\tilde F^{\rm FW}_{N, \lambda_N}(x,v)-F^{\rm FW}(x,v)
\right|
\to 0 .
\]
This proves the first claim. Let \(\gamma^*\) be the unique minimizer of \(S_{\rm FW}\) over \(\Gamma\), up to reparameterization. Fix \(r>0\), and define
\[
\Gamma_r
=
\{\gamma\in\Gamma:\rho_F(\gamma,\gamma^*)\ge r\}.
\]
Because \(\Gamma\) is compact and \(S\) is continuous on \(\Gamma\), the minimum of \(S\) over \(\Gamma_r\) is attained. Since \(\gamma^*\) is the unique minimizer, there is a positive gap
\[
\Delta_r
\coloneqq
\inf_{\gamma\in\Gamma_r} S_{\rm FW}(\gamma)-S_{\rm FW}(\gamma^*)
>
0 .
\]
By the uniform convergence just proved, for sufficiently large \(N\),
\[
\sup_{\gamma\in\Gamma}
|\tilde S_N(\gamma)-S_{\rm FW}(\gamma)|
<
\frac{\Delta_r}{3}.
\]
For any \(\gamma\in\Gamma_r\), this gives
\[
\tilde S_N(\gamma)
\ge
S_{\rm FW}(\gamma)-\frac{\Delta_r}{3}
\ge
S_{\rm FW}(\gamma^*)+\Delta_r-\frac{\Delta_r}{3}
=
S_{\rm FW}(\gamma^*)+\frac{2\Delta_r}{3}.
\]
On the other hand,
\[
\tilde S_N(\gamma^*)
\le
S_{\rm FW}(\gamma^*)+\frac{\Delta_r}{3}.
\]
Therefore
\[
\tilde S_N(\gamma)>\tilde S_N(\gamma^*)
\qquad
\text{for all }\gamma\in\Gamma_r .
\]
Hence no minimizer \(\tilde\gamma^*\in\arg\min_{\gamma\in\Gamma}\tilde S_N(\gamma)\) can lie in \(\Gamma_r\). Thus,
\[
\rho_F(\tilde\gamma^*,\gamma^*)<r
\]
for all sufficiently large \(N\). Since \(r>0\) was arbitrary,
\[
\rho_F(\tilde\gamma^*,\gamma^*)\to 0 .
\]

Finally, suppose that $\gamma^*$ is also the global geometric FW
minimizer connecting $x_0$ to $x_1$, and that the hypotheses of
\citet[Proposition~2.3]{heymannGeometricMinimumAction2008a} hold.
The conditional concentration result above then identifies $\gamma^*$ as the asymptotically most probable transition curve in
the small-noise limit. Since
\[
\rho_F
\left(
    \tilde\gamma_N^*,
    \gamma^*
\right)
\longrightarrow0,
\]
the regularized empirical Finsler minimizer converges to this
transition curve.
\end{proof}
\begin{remark}[Fixed-ridge geometry]
\label{rem:fixed-ridge}
The positive-definiteness assumption above is needed only for passing to the unregularized limit $\rho_N\to0$ and recovering the
geometric FW action of the original diffusion.
For any fixed $\bar\rho>0$, the regularized population tensor
$D_{\bar\rho}(x):=D(x)+\bar\rho I$ satisfies
is unifromly positive definite. Hence,
Proposition~\cref{prop:p-smoothed-uniform} implies
\[
\sup_{x\in K}
\left\|
(\widetilde D_N(x)+\bar\rho I)^{-1}
-
(D(x)+\bar\rho I)^{-1}
\right\|
\longrightarrow 0.
\]
Consequently, for fixed $\bar\rho>0$ and $\bar\lambda>0$,
the same argument as in Theorem~\cref{thm:empirical-gmam}
gives uniform convergence of the empirical regularized action
to the corresponding population regularized action, without
requiring $D$ itself to be uniformly positive definite.
\end{remark}
\begin{remark}
For a common graph time step $\Delta t > 0$, the
diffusion-normalized moments are
$\widehat b = m/\Delta t$ and $\widehat D = C/\Delta t$.
Writing $F_{\rho,\lambda}[m,C]$ for the regularized
metric constructed from the moments $m,C$, direct
substitution gives
\[
F_{\rho,\lambda}[m,C](x,v)
=
F_{\rho/\Delta t,\,\lambda/\sqrt{\Delta t}}
\left[
\frac{m}{\Delta t},
\frac{C}{\Delta t}
\right](x,v).
\]
Thus, raw and diffusion-normalized moments yield
exactly the same metric after rescaling the
regularization parameters, and we absorb the
time-step normalization into these hyperparameters
in the implementation.
The vanishing-regularization assumptions in the
consistency theorem apply to the normalized
parameters; in raw-moment notation, they require
$\rho_N/\Delta t_N \to 0$ and
$\lambda_N/\sqrt{\Delta t_N} \to 0$.
\end{remark}

\paragraph{Additional Calculation:}\begin{proposition}[Gaussian probability path]\label{prop: vectorfield calc}
Let $\mu_t\in\mathbb{R}^d$ be continuously differentiable and let
$G_t\in\mathrm{SPD}(d)$ be a continuously differentiable.
Define, for $t\in(0,1)$, $p_t(x \mid x_0, x_1)=\mathcal N\!\left(x;\mu_t,\,t(1-t)G_t\right).$
Then the vector field
\begin{align*}
\tilde{u}_t(x\mid x_0, x_1)
=
\dot{\mu}_t
+
\frac{1-2t}{2t(1-t)}(x-\mu_t)
+
\frac12\dot G_tG_t^{-1}(x-\mu_t)
\end{align*}
satisfies $\frac{dx}{dt}= \tilde{u}_t(x_t \mid x_0, x_1) $ and $x_t \sim p_t(\cdot \mid x_0, x_1)$ for all $t \in (0,1).$
\end{proposition}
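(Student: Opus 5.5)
The approach is to exhibit the flow map of the Gaussian path explicitly and differentiate it in time. For each $t\in(0,1)$ let $\Sigma_t \coloneqq t(1-t)G_t$. Choose a $C^1$ square-root factor $A_t$ with $A_tA_t^\top = G_t$; for concreteness take $A_t = G_t^{1/2}$, the symmetric square root, which is $C^1$ because $G\mapsto G^{1/2}$ is smooth on $\SPD(d)$. Fix $\xi\sim\mathcal N(0,I)$ and define the map
\[
\psi_t(\xi) \coloneqq \mu_t + \sqrt{t(1-t)}\,A_t\,\xi .
\]
Then $x_t\coloneqq\psi_t(\xi)\sim\mathcal N(\mu_t,\Sigma_t)=p_t(\cdot\mid x_0,x_1)$ for every $t\in(0,1)$. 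It therefore suffices to show that $\dot x_t=\tilde u_t(x_t\mid x_0,x_1)$ along every such trajectory.

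Differentiating $\psi_t(\xi)$ in $t$ gives
\[
\dot x_t=\dot\mu_t+\tfrac{1-2t}{2\sqrt{t(1-t)}}A_t\xi+\sqrt{t(1-t)}\,\dot A_t\xi .
\]
Next substitute $\xi=(t(1-t))^{-1/2}A_t^{-1}(x_t-\mu_t)$. The second term becomes $\frac{1-2t}{2t(1-t)}(x_t-\mu_t)$, matching the isotropic part of $\tilde u_t$. The third term becomes $\dot A_tA_t^{-1}(x_t-\mu_t)$.

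The main obstacle is that $\dot A_tA_t^{-1}$ equals $\tfrac12\dot G_tG_t^{-1}$ only when $G_t$ commutes with $\dot G_t$; in general the symmetric square root gives a Sylvester-type expression. I would handle this in one of two ways.

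The first option is to choose the factor $A_t$ as the solution of the linear ODE $\dot A_t=\tfrac12\dot G_tG_t^{-1}A_t$, with $A_{t_0}A_{t_0}^\top=G_{t_0}$ at a reference time $t_0$. One then checks whether $A_tA_t^\top=G_t$ is preserved. Differentiating gives $\tfrac{d}{dt}(AA^\top)=\tfrac12\dot GG^{-1}AA^\top+\tfrac12AA^\top G^{-1}\dot G$, which equals $\dot G$ when $AA^\top=G$. By uniqueness for this linear ODE in the unknown $AA^\top$, the identity $AA^\top=G$ propagates from $t_0$. With this choice of $A_t$, $\psi_t$ pushes $\mathcal N(0,I)$ to $p_t$, and the third term is exactly $\tfrac12\dot G_tG_t^{-1}(x_t-\mu_t)$. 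This completes the claim.

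As a cross-check, I would also verify the continuity equation directly. With $u=\dot\mu+Bx'$, $x'=x-\mu$, and $B=\frac{1-2t}{2t(1-t)}I+\frac12\dot GG^{-1}$, one needs $\partial_t p=-\nabla\cdot(pu)$. This reduces to the two identities $\operatorname{tr}B=\tfrac12\operatorname{tr}(\Sigma^{-1}\dot\Sigma)$ and $\Sigma^{-1}B+B^\top\Sigma^{-1}=\Sigma^{-1}\dot\Sigma\Sigma^{-1}$. Both follow from $B\Sigma+\Sigma B^\top=\dot\Sigma$, and that relation is exactly the derivative computation above.
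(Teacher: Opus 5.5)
Your proposal is correct and takes essentially the same route as the paper. The paper also chooses a factor $B_t$ with $B_tB_t^\top=G_t$ and $\dot B_tB_t^{-1}=\tfrac12\dot G_tG_t^{-1}$, obtained from the linear ODE. It then writes $x_t=\mu_t+\sqrt{t(1-t)}\,B_t\varepsilon$, differentiates, and substitutes $\varepsilon$ back. The paper only asserts that such a factor exists, whereas you justify it: you show that $AA^\top$ and $G$ both solve $\dot M=\tfrac12\dot GG^{-1}M+\tfrac12MG^{-1}\dot G$. You are also right that the symmetric square root would fail when $G_t$ and $\dot G_t$ do not commute.
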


\begin{proof}
Let $B_t$ be a differentiable matrix factor satisfying
\begin{align*}
B_tB_t^\top=G_t,
\qquad
\dot B_tB_t^{-1}
=
\frac12\dot G_tG_t^{-1},
\end{align*}
and let $\varepsilon\sim\mathcal N(0,I)$. Such a metric exists since $G_t$ is symmetric positive definite (SPD) and by using existence and uniqueness of ODEs. A sample from $p_t$ can then be written as
\begin{align*}
x_t
=
\mu_t
+
\sqrt{t(1-t)}\,B_t\varepsilon.
\end{align*}
Solving for $\varepsilon$ gives
\begin{align*}
\varepsilon
=
\frac{1}{\sqrt{t(1-t)}}
B_t^{-1}(x_t-\mu_t).
\end{align*}
Differentiating $x_t$ with respect to $t$ yields
\begin{align*}
\dot x_t
&=
\dot\mu_t
+
\frac{1-2t}{2\sqrt{t(1-t)}}B_t\varepsilon
+
\sqrt{t(1-t)}\,\dot B_t\varepsilon \\
&=
\dot\mu_t
+
\frac{1-2t}{2t(1-t)}(x_t-\mu_t)
+
\dot B_tB_t^{-1}(x_t-\mu_t) \\
&=
\dot\mu_t
+
\frac{1-2t}{2t(1-t)}(x_t-\mu_t)
+
\frac12\dot G_tG_t^{-1}(x_t-\mu_t).
\end{align*}
Hence $\dot x_t=u_t(x_t)$. Since
$x_t=\mu_t+\sqrt{t(1-t)}B_t\varepsilon$ has distribution
$\mathcal N(\mu_t,t(1-t)G_t)$ by construction, the flow generated by $u_t$
has marginals $p_t$ for all $t\in(0,1)$.
\end{proof}
\begin{remark}
    Note that in the case where $G_t = \Id$ and $\mu_t = tx_1 +(1-t)x_0 $we recover the formula from \citet{tongSimulationFreeSchrodingerBridges2024a}.
\end{remark}
\section{Experiments}\label{appendix C}
\paragraph{Experimental protocol:} All models get the same networks that uses four hidden layers of width 64, SiLU activations, and a 32-dimensional sinusoidal time embedding. We use Adam with learning rate \(2\times10^{-3}\), batch size 256, and gradient clipping at 5. Where applicable, geodesic training uses 2,500 iterations, followed by 1,500 flow-training iterations, stochastic training uses an additional 1,500 iterations. Default coupling settings use eight quadrature samples and entropic regularization equal to 0.1 times the median transport cost, with benchmark-specific overrides stated separately. Results report mean $\pm$ population standard deviation across three seeds, except Erythroid, which uses five. Hyperparameters were selected using a validation split of $15$ percent and using the mean of mid and end $W_2$.
\paragraph{Kernel extension of transition moments:} \label{app:Kernel smoothing}
The first and second transition moments $m(x_i)$ and $C(x_i)$
are defined only at the observed states $\{x_i\}_{i=1}^N$.
To evaluate the Finsler metric along continuous paths, we
extend both moments using normalized non-negative kernel
smoothing. Let
$k_h:\mathbb{R}^d\times\mathbb{R}^d\to\mathbb{R}_{\geq 0}$
be a smooth kernel with bandwidth $h>0$. Assuming that
$\sum_{\ell=1}^N k_h(x,x_\ell)>0$ throughout the region of
interest, define
\[
w_i(x)
\coloneqq
\frac{k_h(x,x_i)}
{\sum_{\ell=1}^N k_h(x,x_\ell)}.
\]
The extended moment fields are
\begin{equation}
\label{eq:moment_extensions}
m(x)
\coloneqq
\sum_{i=1}^N w_i(x)m(x_i),
\qquad
C(x)
\coloneqq
\sum_{i=1}^N w_i(x)C(x_i).
\end{equation}

We choose kernel smoothing for its simplicity: it requires
no additional model training and preserves the positive
semidefiniteness of the second-moment field. Indeed, the
weights are non-negative and sum to one, so $C(x)$ is a
convex combination of positive semidefinite matrices.
Both extended fields are smooth wherever the denominator
is positive. Adding the ridge term
$C_\rho(x)=C(x)+\rho I$, with $\rho>0$, then ensures
positive definiteness for the subsequent metric construction.
\paragraph{Sheet Dataset:} \label{app: Sheet}We sample $N=4000$ points from coordinates $u\sim\mathcal U(-1.5,1.5)$ and $w\sim\mathcal U(-0.45,0.45)$ and embed them on the saddle
\[
z(u,w)=\frac{1}{2}\left[\left(\frac{3u}{2}\right)^2-\frac12\left(\frac{100w}{45}\right)^2\right],
\]
with Gaussian noise of standard deviation $0.012$. The prescribed transition probability matrix $P$ forms an hourglass: trajectories are driven towards $u\approx0$ and go outward after crossing. Source and target states are defined by $w\leq-0.13$ and $w\geq0.13$, respectively.
\\
\\
$P$ is constructed bye directed $15$-NN kernel combining spatial proximity with alignment to a prescribed vector field ($\lambda=6$), followed by row normalization. Source and target points are split $70/15/15$ into train/validation/test sets, whereas the withheld region is divided equally between validation and test. Hyperparameters are selected using the average of endpoint $W_2$ and intermediate-region $W_2$ on the validation set. We also perform a metric ablation where we compare only keeping the first and second moments \cref{fig:sheet_appendix_ablation}.
\begin{figure*}[t]
    \centering
    \begin{subfigure}[t]{0.47\textwidth}
        \vspace{0pt}
        \centering
       \makebox[\linewidth][c]{%
        \includegraphics[width=0.40\linewidth]{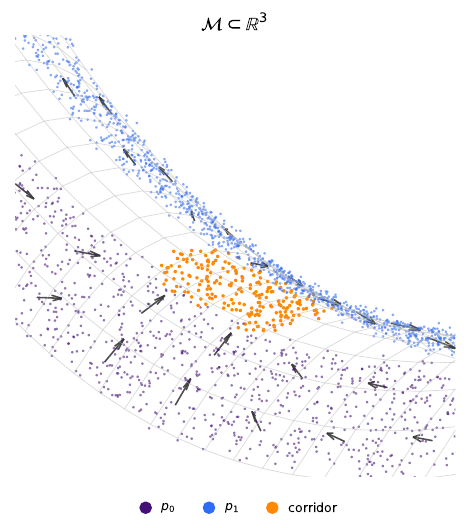}}
        \caption{
            Sheet benchmark.
        }
        \label{fig:sheet_appendix_dataset}
    \end{subfigure}
    \hfill
    \begin{subfigure}[t]{0.50\textwidth}
        \vspace{0pt}
        \centering
        \small
        \setlength{\tabcolsep}{3.5pt}
        \resizebox{\linewidth}{!}{%
        \begin{tabular}{lccrrr}
\toprule
$G$ & Drift & $c_{\mathrm{eff}}$ & mid $W_2\downarrow$ & end $W_2\downarrow$ & $W_{2,F}\downarrow$ \\
\midrule
$I$ & no  & 0.001 & 0.7165 $\pm$ 0.0136 & \textbf{0.1751 $\pm$ 0.0271} & 1.0840 $\pm$ 0.0371 \\
$I$ & yes & 0.958 & 0.2541 $\pm$ 0.0375 & 0.2558 $\pm$ 0.0509 & 0.3676 $\pm$ 0.0773 \\
$C_\rho^{-1}$ & no  & 0.001 & 0.6672 $\pm$ 0.0160 & 0.2025 $\pm$ 0.0079 & 0.9437 $\pm$ 0.1068 \\
$C_\rho^{-1}$ & yes & 0.958 & \textbf{0.1588 $\pm$ 0.0096} & 0.2343 $\pm$ 0.0292 & \textbf{0.2892 $\pm$ 0.0090} \\
\midrule
\emph{sampling floor} & & & \emph{0.0646} & \emph{0.1896} & \emph{0.2812} \\
\bottomrule
\end{tabular}%
        }
        \caption{
            Metric ablation.
        }
        \label{fig:sheet_appendix_ablation}
    \end{subfigure}
    \vspace{0.7em}
    \begin{subfigure}[t]{0.98\textwidth}
        \centering
        \includegraphics[width=\linewidth]{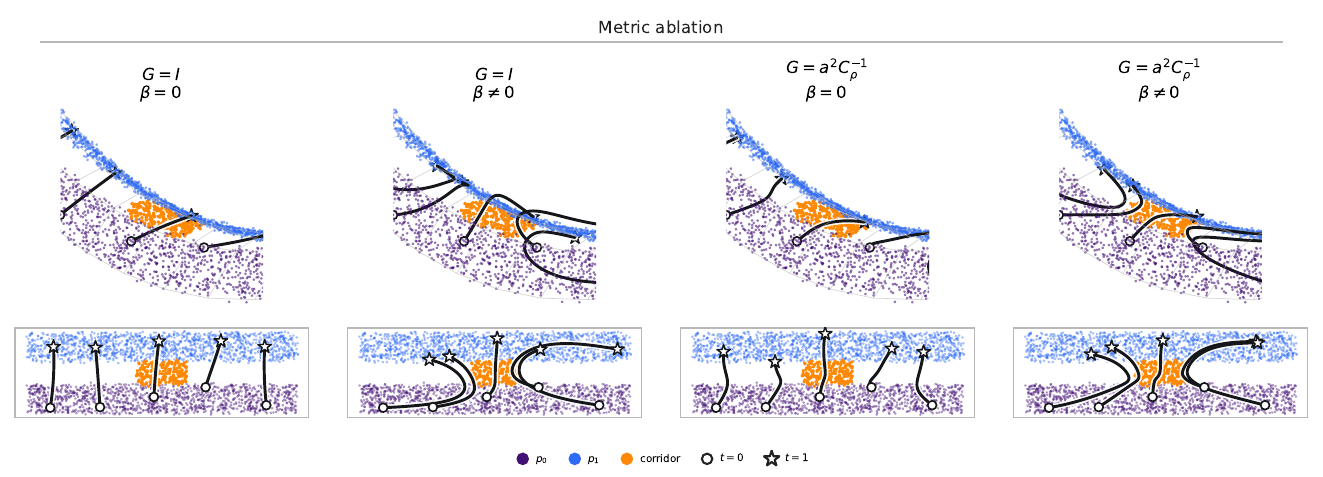}
        \caption{
             Trajectories generated by the different methods.
        }
        \label{fig:sheet_appendix_pat}
    \end{subfigure}

    \caption{
        \textbf{Sheet benchmark and additional results.}
        \textbf{(a)} Overview of the synthetic Sheet benchmark and its
        directed transition structure.
        \textbf{(b)} Ablation of the directional and anisotropic components
        of the Finsler metric. Directionality substantially improves
        intermediate recovery, with the full metric obtaining the lowest
        midpoint errors.
        \textbf{(c)} Qualitative comparison of generated trajectories.
    }
    \label{fig:sheet_appendix}
\end{figure*}
\paragraph{Pancreas Dataset:}
Hyperparameters are selected independently for
each representation using a validation subset disjoint from the reported withheld cells. Runtime and  sensitivity to $\rho$ and $\lambda$
are reported in \cref{tab:pancreas_runtimi} and \cref{tab:pancreas_sensitivity}. We additionally evaluate sensitivity to temporal alignment, the amount of intermediate-state support used to construct the geometry, and choices in the stochastic construction. To reduce dependence on the benchmark convention $t=1/2$, we evaluate $W_2$ over a fixed 11-point grid spanning $t\in[1/3,2/3]$ and report $\overline W_2^{[1/3,2/3]}$, the mean over this grid. We further report $W_{2,F}$ across representations, vary the fraction of intermediate-band cells available for metric construction, and ablate the conditional-path covariance and inference noise used by stochastic FFM. Unless otherwise stated, all Pancreas results in \cref{fig:paths_and_quantitative_results} use no intermediate-band cells for metric construction. \cref{fig:pancreas_robustness} is an additional sensitivity analysis that deliberately varies this fraction.
\begin{figure}[p]
\centering
\scriptsize
\renewcommand{\arraystretch}{0.92}

\textbf{(a) Sensitivity to model time}

\vspace{0.5mm}

\setlength{\tabcolsep}{2.8pt}
\resizebox{\textwidth}{!}{%
\begin{tabular}{lrrrrr}
\toprule
Method & $W_2(1/3)$ & $W_2(0.433)$ & $W_2(0.567)$ & $W_2(2/3)$ & $\overline W_2^{[1/3,2/3]}$ \\
\midrule
\multicolumn{6}{l}{\textit{$d=2$}} \\
FFM, deterministic & 1.9878 $\pm$ 0.0631 & 1.8184 $\pm$ 0.0326 & 1.4429 $\pm$ 0.0195 & 1.1770 $\pm$ 0.0919 & 1.6175 $\pm$ 0.0348 \\
FFM, with noise & 2.0052 $\pm$ 0.0705 & 1.8399 $\pm$ 0.0401 & 1.4723 $\pm$ 0.0601 & 1.2148 $\pm$ 0.0851 & 1.6434 $\pm$ 0.0555 \\
OT-CFM & 1.8508 $\pm$ 0.0172 & 1.7952 $\pm$ 0.0182 & 1.7607 $\pm$ 0.0195 & 1.7682 $\pm$ 0.0201 & 1.7869 $\pm$ 0.0230 \\
MFM / LAND & 1.5201 $\pm$ 0.0119 & 1.4878 $\pm$ 0.0117 & 1.4438 $\pm$ 0.0108 & 1.4002 $\pm$ 0.0098 & 1.4642 $\pm$ 0.0131 \\
Curly-FM & 1.0208 $\pm$ 0.1277 & 0.9952 $\pm$ 0.1378 & 0.9480 $\pm$ 0.1457 & 0.9009 $\pm$ 0.1505 & 0.9686 $\pm$ 0.1725 \\
\midrule
\multicolumn{6}{l}{\textit{$d=20$}} \\
FFM, deterministic & 11.5969 $\pm$ 0.1999 & 10.7814 $\pm$ 0.1811 & 10.3610 $\pm$ 0.1440 & 10.8331 $\pm$ 0.1471 & 10.7413 $\pm$ 0.1927 \\
FFM, with noise & 11.5392 $\pm$ 0.1640 & 10.7188 $\pm$ 0.1270 & 10.3671 $\pm$ 0.0477 & 10.9998 $\pm$ 0.2163 & 10.7355 $\pm$ 0.0560 \\
OT-CFM & 13.0727 $\pm$ 0.1325 & 12.5476 $\pm$ 0.0959 & 12.2577 $\pm$ 0.0231 & 12.3903 $\pm$ 0.0406 & 12.4957 $\pm$ 0.0681 \\
MFM / LAND & 12.8496 $\pm$ 0.1074 & 12.3470 $\pm$ 0.0706 & 11.8851 $\pm$ 0.0180 & 11.7331 $\pm$ 0.0277 & 12.1656 $\pm$ 0.0522 \\
Curly-FM & 12.8636 $\pm$ 0.0503 & 12.3801 $\pm$ 0.0687 & 12.2108 $\pm$ 0.0760 & 12.4096 $\pm$ 0.1064 & 12.3920 $\pm$ 0.0687 \\
\midrule
\multicolumn{6}{l}{\textit{$d=50$}} \\
FFM, deterministic & 13.2623 $\pm$ 0.1442 & 12.7423 $\pm$ 0.0966 & 12.5789 $\pm$ 0.0240 & 12.9036 $\pm$ 0.0495 & 12.7715 $\pm$ 0.0719 \\
FFM, with noise & 13.2718 $\pm$ 0.1524 & 12.7538 $\pm$ 0.1104 & 12.5768 $\pm$ 0.0639 & 12.8742 $\pm$ 0.0975 & 12.7723 $\pm$ 0.0952 \\
OT-CFM & 14.1359 $\pm$ 0.1441 & 13.6933 $\pm$ 0.1242 & 13.4941 $\pm$ 0.0923 & 13.6688 $\pm$ 0.0816 & 13.6811 $\pm$ 0.1314 \\
MFM / LAND & 13.8332 $\pm$ 0.1526 & 13.2557 $\pm$ 0.0966 & 12.7979 $\pm$ 0.0593 & 12.7353 $\pm$ 0.0364 & 13.0996 $\pm$ 0.1029 \\
Curly-FM & 13.8395 $\pm$ 0.1113 & 13.4287 $\pm$ 0.1521 & 13.2923 $\pm$ 0.2020 & 13.4719 $\pm$ 0.2307 & 13.4440 $\pm$ 0.2069 \\
\bottomrule
\end{tabular}%
}

\vspace{2mm}

\begin{minipage}[t]{0.43\textwidth}
\centering
\textbf{(b) Finsler-Wasserstein recovery}

\vspace{0.5mm}

\setlength{\tabcolsep}{2.3pt}
\resizebox{\linewidth}{!}{%
\begin{tabular}{lrrrr}
\toprule
Method & $d=2$ & $d=10$ & $d=20$ & $d=50$ \\
\midrule
OT-CFM & 0.6420 $\pm$ 0.0059 & 10.9311 $\pm$ 0.0770 & 10.9277 $\pm$ 0.0412 & 15.5900 $\pm$ 0.2287 \\
MFM / LAND & \textbf{0.5399 $\pm$ 0.0024} & \textbf{9.4377 $\pm$ 0.0824} & 10.4768 $\pm$ 0.0637 & 14.6185 $\pm$ 0.1085 \\
Curly-FM & 0.7493 $\pm$ 0.0258 & 10.7482 $\pm$ 0.1305 & 11.0235 $\pm$ 0.0520 & 15.0427 $\pm$ 0.1562 \\
FFM, deterministic & 0.5681 $\pm$ 0.0100 & 9.5125 $\pm$ 0.5398 & \textbf{8.9033 $\pm$ 0.0400} & 13.9368 $\pm$ 0.1217 \\
FFM, with noise & 0.5730 $\pm$ 0.0082 & 9.6113 $\pm$ 0.5379 & 8.9213 $\pm$ 0.0453 & \textbf{13.7800 $\pm$ 0.0861} \\
\midrule
\emph{floor} & \emph{0.4162} & \emph{5.3461} & \emph{4.3608} & \emph{7.2123} \\
\bottomrule
\end{tabular}%
}

\vspace{0.5mm}

{\tiny $W_{2,F}(t=1/2)\downarrow$.}

\end{minipage}
\hfill
\begin{minipage}[t]{0.55\textwidth}
\centering
\textbf{(c) Intermediate support used for metric construction}

\vspace{0.5mm}

\setlength{\tabcolsep}{2.6pt}
\resizebox{\linewidth}{!}{%
\begin{tabular}{lccc}
\toprule
Method & $0\%$ & $5\%$ & $35\%$ \\
\midrule
\multicolumn{4}{l}{\textit{$d=2$}} \\
OT-CFM & 1.7607 $\pm$ 0.0262 & 1.7622 $\pm$ 0.0392 & 1.7666 $\pm$ 0.0297 \\
FFM, deterministic & 1.3833 $\pm$ 0.1410 & 1.3313 $\pm$ 0.0627 & 1.0927 $\pm$ 0.2477 \\
FFM, with noise & 1.3723 $\pm$ 0.1505 & 1.3525 $\pm$ 0.0420 & 1.1013 $\pm$ 0.2611 \\
MFM / LAND & 1.4506 $\pm$ 0.0025 & 1.2412 $\pm$ 0.5283 & 0.6878 $\pm$ 0.0378 \\
Curly-FM & 0.9751 $\pm$ 0.1704 & 0.9360 $\pm$ 0.1045 & 1.5865 $\pm$ 0.1369 \\
\midrule
\multicolumn{4}{l}{\textit{$d=20$}} \\
OT-CFM & 12.3291 $\pm$ 0.0685 & 12.3264 $\pm$ 0.0445 & 12.3491 $\pm$ 0.0850 \\
FFM, deterministic & 10.4208 $\pm$ 0.1028 & 8.2742 $\pm$ 0.3540 & 8.1989 $\pm$ 0.0481 \\
FFM, with noise & 10.3679 $\pm$ 0.0641 & 8.4860 $\pm$ 0.3098 & 8.2513 $\pm$ 0.2160 \\
MFM / LAND & 12.0368 $\pm$ 0.0696 & 11.8784 $\pm$ 0.0338 & 11.7247 $\pm$ 0.1739 \\
Curly-FM & 12.2067 $\pm$ 0.0844 & 12.2163 $\pm$ 0.1098 & 12.2431 $\pm$ 0.0665 \\
\midrule
\multicolumn{4}{l}{\textit{$d=50$}} \\
OT-CFM & $13.5319\pm0.0480$ & $13.5269\pm0.0642$ & $13.5577\pm0.1075$ \\
FFM, deterministic & $\mathbf{12.5043\pm0.0544}$ & $\mathbf{12.1225\pm0.1252}$ & $\mathbf{11.3633\pm0.1356}$ \\
MFM / LAND & $13.5584\pm0.0396$ & $13.5208\pm0.0275$ & $13.5689\pm0.1239$ \\
Curly-FM & $13.3220\pm0.1927$ & $13.3291\pm0.2501$ & $13.2897\pm0.1116$ \\
\bottomrule
\end{tabular}%
}

\vspace{0.5mm}

{\tiny $W_2(t=1/2)\downarrow$.}

\end{minipage}

\vspace{1mm}

\caption{
\textbf{Pancreas robustness analyses.}
(a) Intermediate-population recovery over model time. We show representative
points from the fixed 11-point grid over $[1/3,2/3]$ together with
$\overline W_2^{[1/3,2/3]}$, the mean over all 11 points.
(b) Geometry-dependent Wasserstein distance across representations.
(c) Sensitivity to the fraction of intermediate-band cells made available
for metric construction. Because the scored population changes with the
support fraction, comparisons across columns should be interpreted
descriptively. Methods within each column use the same scored cells.
}
\label{fig:pancreas_robustness}
\end{figure}

\begin{figure}[p]
\ContinuedFloat
\centering
\scriptsize
\renewcommand{\arraystretch}{0.92}

\textbf{(d) Conditional-path noise width and covariance}

\vspace{0.5mm}

\setlength{\tabcolsep}{3.1pt}
\resizebox{0.86\textwidth}{!}{%
\begin{tabular}{rrrrr}
\toprule
Width mult. &
$M_t=C_\rho(\mu_t)$ $W_2(1/2)$ &
$M_t=C_\rho(\mu_t)$ $W_2(1)$ &
$M_t=I$ $W_2(1/2)$ &
$M_t=I$ $W_2(1)$ \\
\midrule
0.003 & 10.4874 $\pm$ 0.0419 & 5.2457 $\pm$ 0.2301 & 10.4414 $\pm$ 0.1439 & 5.2980 $\pm$ 0.1595 \\
0.010 & 10.4940 $\pm$ 0.1123 & 5.5062 $\pm$ 0.4344 & 10.5187 $\pm$ 0.1591 & 5.2420 $\pm$ 0.2684 \\
0.030 & 10.4873 $\pm$ 0.0919 & 5.3305 $\pm$ 0.1619 & 10.4245 $\pm$ 0.1359 & 5.3180 $\pm$ 0.1351 \\
\textbf{0.100} & 10.3694 $\pm$ 0.0987 & 5.0662 $\pm$ 0.3449 & 10.3446 $\pm$ 0.1299 & 5.1521 $\pm$ 0.1702 \\
0.300 & 10.4630 $\pm$ 0.2142 & 5.3692 $\pm$ 0.1590 & 10.4149 $\pm$ 0.1922 & 5.2433 $\pm$ 0.2451 \\
1.000 & 14.0089 $\pm$ 0.0636 & 7.6047 $\pm$ 0.3015 & 13.8565 $\pm$ 0.1498 & 7.3363 $\pm$ 0.1651 \\
3.000 & 36.3654 $\pm$ 0.5652 & 24.9668 $\pm$ 0.7730 & 33.9906 $\pm$ 1.1181 & 26.6449 $\pm$ 1.1385 \\
10.000 & 118.0193 $\pm$ 4.7889 & 85.1409 $\pm$ 11.5354 & 143.4577 $\pm$ 2.9533 & 130.1162 $\pm$ 4.8789 \\
\bottomrule
\end{tabular}%
}

\vspace{3mm}

\begin{minipage}[t]{0.43\textwidth}
\centering
\textbf{(e) Path and inference covariance}

\vspace{0.5mm}

\setlength{\tabcolsep}{2.2pt}
\resizebox{\linewidth}{!}{%
\begin{tabular}{llrr}
\toprule
Path & Inference & $W_2(1/2)$ & $W_2(1)$ \\
\midrule
$C_\rho(\mu_t)$ & $C_\rho(X_t)$ & 10.3694 $\pm$ 0.0987 & 5.0662 $\pm$ 0.3449 \\
$C_\rho(\mu_t)$ & $I$ & 10.3617 $\pm$ 0.1003 & \textbf{4.9788 $\pm$ 0.2102} \\
$I$ & $C_\rho(X_t)$ & 10.3557 $\pm$ 0.1112 & 5.1712 $\pm$ 0.2807 \\
$I$ & $I$ & \textbf{10.3446 $\pm$ 0.1299} & 5.1521 $\pm$ 0.1702 \\
\bottomrule
\end{tabular}%
}
\end{minipage}
\hfill
\begin{minipage}[t]{0.55\textwidth}
\centering
\textbf{(f) Inference-noise schedule}

\vspace{0.5mm}

\setlength{\tabcolsep}{2.2pt}
\resizebox{\linewidth}{!}{%
\begin{tabular}{lrrrr}
\toprule
$g(t)$ &
$M(X_t)=C_\rho(X_t)$ $W_2(1/2)$ &
$M(X_t)=C_\rho(X_t)$ $W_2(1)$ &
$M(X_t)=I$ $W_2(1/2)$ &
$M(X_t)=I$ $W_2(1)$ \\
\midrule
$\sigma$ & 10.3694 $\pm$ 0.0987 & 5.0662 $\pm$ 0.3449 & 10.3617 $\pm$ 0.1003 & 4.9788 $\pm$ 0.2102 \\
$\propto (1-t)$ & 10.3580 & 5.0072 & 10.3496 & 5.0492 \\
$g\sim t$ & 10.3922 & 5.1977 & 10.3868 & 5.1169 \\
$g\sim \sqrt{t(1-t)}$ & 10.3629 & 5.0814 & 10.3515 & 5.0035 \\
$0$ after $t=0.8$ & 10.3626 & 5.0440 & 10.3543 & 4.9913 \\
$0$ & 10.4173 & 5.1955 & 10.4173 & 5.1955 \\
\bottomrule
\end{tabular}%
}
\end{minipage}

\vspace{1mm}

\caption{
\textbf{Pancreas robustness analyses (continued).}
(d) Sensitivity of stochastic FFM at $d=20$ to conditional-path noise
width and covariance.
(e) Sensitivity to conditional-path and inference covariance.
(f) Sensitivity to the scalar inference-noise schedule.
Values are mean $\pm$ population standard deviation over three seeds
unless stated otherwise. Lower is better.
}
\end{figure}

\begin{table}[t]
\centering
\label{tab:pancreas_runtimi}
\small
\caption{Wall-clock time in seconds per seed on a Tesla V100-SXM2-32GB on the Pancreas dataset.
The stochastic FFM arm reuses the deterministic interpolant and coupling.}
\begin{tabular}{lrrrr}
\toprule
Method & $d=2$ & $d=10$ & $d=20$ & $d=50$ \\
\midrule
OT-CFM & 3 & 3 & 3 & 3 \\
MFM / LAND & 22 & 61 & 109 & 254 \\
Curly-FM & 41 & 106 & 190 & 445 \\
FFM (ours), deterministic & 26 & 63 & 109 & 256 \\
FFM (ours), with noise & 15 & 29 & 48 & 108 \\
\bottomrule
\end{tabular}
\end{table}
\begin{table}[t]
\centering
\caption{One-at-a-time sensitivity of FFM on Pancreas. The reported value is the
validation selection objective; lower is better. The selected value is shown in bold.}
\label{tab:pancreas_sensitivity}
\small
\resizebox{\linewidth}{!}{
\begin{tabular}{cllll}
\toprule
$d$ & Varied parameter & Fixed parameter & Values & Validation objective \\
\midrule
2
& $\rho$
& $\lambda=0.1$
& $0.01,\ 0.03,\ \mathbf{0.1},\ 0.3,\ 1.0$
& $0.828,\ 0.749,\ \mathbf{0.535},\ 0.870,\ 1.131$ \\

2
& $\lambda$
& $\rho=0.1$
& $0.01,\ 0.03,\ \mathbf{0.1},\ 0.3,\ 1.0$
& $1.226,\ 0.951,\ \mathbf{0.535},\ 0.826,\ 0.873$ \\
\midrule

20
& $\rho$
& $\lambda=1.0$
& $0.003,\ 0.01,\ \mathbf{0.03},\ 0.1,\ 0.3$
& $8.180,\ 8.210,\ \mathbf{8.135},\ 8.369,\ 8.288$ \\

20
& $\lambda$
& $\rho=0.03$
& $0.03,\ 0.1,\ 0.3,\ \mathbf{1.0},\ 3.0$
& $8.990,\ 8.535,\ 8.370,\ \mathbf{8.135},\ 8.667$ \\
\midrule

50
& $\rho$
& $\lambda=0.03$
& $3{\times}10^{-4},\ 10^{-3},\ \mathbf{3{\times}10^{-3}},\ 10^{-2},\ 3{\times}10^{-2}$
& $10.455,\ 10.421,\ \mathbf{10.314},\ 10.390,\ 10.589$ \\

50
& $\lambda$
& $\rho=0.003$
& $0.003,\ 0.01,\ \mathbf{0.03},\ 0.1,\ 0.3$
& $10.379,\ 10.314,\ \mathbf{10.314},\ 10.554,\ 10.537$ \\
\bottomrule
\end{tabular}}
\end{table}
\paragraph{Erythroid:}\label{app: Erythroid}
We further evaluate FFM on the mouse erythroid differentiation dataset
used by \citet{petrovicCurlyFlowMatching2025a}. The dataset consists of
9,815 cells along the erythroid developmental trajectory, partitioned into
three temporal snapshots with the intermediate population withheld during
training. We evaluate in the two-dimensional embedding and in
$d\in\{20,50\}$ dimensional representations. In addition to our methods,
we report both the values originally published by
\citet{petrovicCurlyFlowMatching2025a} and our reproduction of the
baselines using a common training harness at their published training
budget.

\begin{figure*}[t]
\centering

\begin{minipage}[c]{0.38\textwidth}
    \centering
    \includegraphics[width=\linewidth]{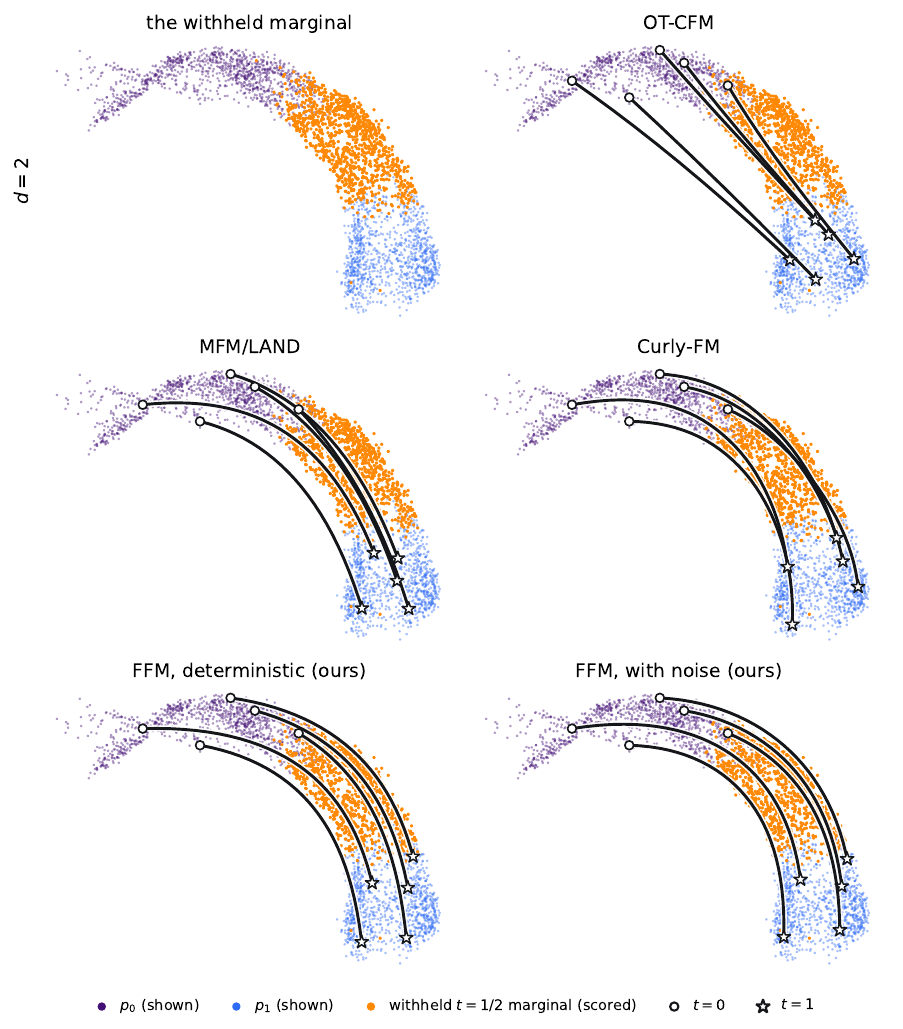}
\end{minipage}
\hfill
\begin{minipage}[c]{0.60\textwidth}
    \centering
    \small
    \setlength{\tabcolsep}{3.5pt}

    \resizebox{\linewidth}{!}{%
    \begin{tabular}{llrrrr}
\toprule
Method & Source & Cos. Dist. $\downarrow$ & $L_2\downarrow$ & $W_1\downarrow$ & $W_2\downarrow$ \\
\midrule
\multicolumn{6}{l}{\textit{$d=2$}} \\
OT-CFM & Published & 0.146 $\pm$ 0.001 & 2.704 $\pm$ 0.019 & 0.646 $\pm$ 0.006 & -- \\
OT-CFM & Reproduced & 0.141 $\pm$ 0.002 & 2.447 $\pm$ 0.031 & 0.645 $\pm$ 0.009 & 0.679 $\pm$ 0.009 \\
MFM (LAND) & Published & 0.014 $\pm$ 0.001 & 1.999 $\pm$ 0.014 & 0.269 $\pm$ 0.004 & -- \\
MFM (LAND) & Reproduced & \textbf{0.010 $\pm$ 0.001} & 1.858 $\pm$ 0.025 & 0.271 $\pm$ 0.007 & 0.286 $\pm$ 0.007 \\
Curly-FM & Published & 0.009 $\pm$ 0.000 & 1.663 $\pm$ 0.293 & 0.369 $\pm$ 0.090 & -- \\
Curly-FM & Reproduced & 0.011 $\pm$ 0.001 & 2.151 $\pm$ 0.078 & \textbf{0.122 $\pm$ 0.002} & \textbf{0.158 $\pm$ 0.003} \\
FFM (ours) & Ours & 0.021 $\pm$ 0.004 & \textbf{1.642 $\pm$ 0.071} & 0.248 $\pm$ 0.034 & 0.291 $\pm$ 0.023 \\
FFM (ours), with noise & Ours & 0.019 $\pm$ 0.003 & 1.710 $\pm$ 0.081 & 0.226 $\pm$ 0.027 & 0.269 $\pm$ 0.019 \\
\midrule
\multicolumn{6}{l}{\textit{$d=20$}} \\
OT-CFM & Published & 0.489 $\pm$ 0.001 & 1.885 $\pm$ 0.020 & 6.103 $\pm$ 0.074 & -- \\
OT-CFM & Reproduced & \textbf{0.490 $\pm$ 0.000} & 1.731 $\pm$ 0.014 & 6.106 $\pm$ 0.063 & 6.260 $\pm$ 0.066 \\
MFM (LAND) & Published & 0.495 $\pm$ 0.001 & 1.627 $\pm$ 0.040 & 4.855 $\pm$ 0.052 & -- \\
MFM (LAND) & Reproduced & 0.494 $\pm$ 0.000 & 1.700 $\pm$ 0.021 & 4.759 $\pm$ 0.056 & 4.855 $\pm$ 0.054 \\
Curly-FM & Published & 0.488 $\pm$ 0.001 & 1.721 $\pm$ 0.035 & 6.124 $\pm$ 0.027 & -- \\
Curly-FM & Reproduced & \textbf{0.490 $\pm$ 0.000} & 1.691 $\pm$ 0.011 & 6.112 $\pm$ 0.053 & 6.271 $\pm$ 0.058 \\
FFM (ours) & Ours & 0.491 $\pm$ 0.000 & 1.514 $\pm$ 0.016 & \textbf{4.685 $\pm$ 0.012} & \textbf{4.782 $\pm$ 0.014} \\
FFM (ours), with noise & Ours & 0.496 $\pm$ 0.000 & \textbf{0.757 $\pm$ 0.006} & 4.733 $\pm$ 0.035 & 4.823 $\pm$ 0.038 \\
\midrule
\multicolumn{6}{l}{\textit{$d=50$}} \\
OT-CFM & Published & 0.490 $\pm$ 0.000 & 2.215 $\pm$ 0.022 & 7.969 $\pm$ 0.029 & -- \\
OT-CFM & Reproduced & \textbf{0.491 $\pm$ 0.000} & 2.053 $\pm$ 0.017 & 7.839 $\pm$ 0.016 & 7.974 $\pm$ 0.016 \\
MFM (LAND) & Published & 0.494 $\pm$ 0.000 & 1.971 $\pm$ 0.023 & 6.727 $\pm$ 0.022 & -- \\
MFM (LAND) & Reproduced & 0.495 $\pm$ 0.000 & 2.004 $\pm$ 0.013 & 6.697 $\pm$ 0.052 & 6.819 $\pm$ 0.058 \\
Curly-FM & Published & 0.489 $\pm$ 0.000 & 2.045 $\pm$ 0.073 & 7.729 $\pm$ 0.046 & -- \\
Curly-FM & Reproduced & \textbf{0.491 $\pm$ 0.000} & 1.990 $\pm$ 0.021 & 7.736 $\pm$ 0.025 & 7.864 $\pm$ 0.025 \\
FFM (ours) & Ours & \textbf{0.491 $\pm$ 0.000} & 1.772 $\pm$ 0.029 & 6.810 $\pm$ 0.022 & 6.907 $\pm$ 0.025 \\
FFM (ours), with noise & Ours & 0.499 $\pm$ 0.000 & \textbf{0.915 $\pm$ 0.006} & \textbf{6.595 $\pm$ 0.016} & \textbf{6.663 $\pm$ 0.015} \\
\bottomrule
\end{tabular}%
    }
\end{minipage}

\caption{
Visualization of the learned erythroid trajectories in 2D and quantitative
comparison across dimensions $d=2,20,50$.
``Published'' denotes the values reported in Table~4 of
\citet{petrovicCurlyFlowMatching2025a}, while ``Reproduced'' denotes our
reproduction of the corresponding baselines using a common training harness
at the published training budget.
Due to a bug in the original Curly-FM evaluation, the Wasserstein quantity
reported as $W_2$ in Table~4 of CurlyFM
corresponds in fact to $W_1$. We therefore report these published values
under $W_1$ and additionally compute the true $W_2$ distance for all runs
evaluated with our harness.
For $d=20$ and $d=50$, $L_2$ is reported in units of $10^3$.
Bold values indicate the best performance among methods evaluated with our
common harness; published values are shown for reference and are not included
when determining the best result.
}
\label{fig:erythroid_paths_and_quantitative_results}
\end{figure*}
\paragraph{iTracer:} Let $\mathcal N_{15}(i)$ denote the 15 nearest neighbours of
cell $i$, excluding itself, and let $j_7(i)$ denote its seventh
nearest neighbour. All four kernel variants share the same
directed, unsymmetrized support and self-tuning bandwidths.
The expression factor is
\begin{equation}
    K^{\mathrm{expr}}_{ij}
    =
    \mathbf{1}_{\{j\in\mathcal N_{15}(i)\}}
    \exp\!\left(
        -\frac{\|x_i-x_j\|^2}{\sigma_i\sigma_j}
    \right),
    \qquad
    \sigma_i=\|x_i-x_{j_7(i)}\|.
    \label{eq:itracer-expression}
\end{equation}

For barcode-family labels $\mathrm{bf}_i$ and scar-family
labels $\mathrm{sf}_i$, define
\begin{equation}
    R_{ij}
    =
    1-\frac{1}{2}\left(
        \mathbf{1}_{\{\mathrm{bf}_i\ne\mathrm{bf}_j\}}
        +
        \mathbf{1}_{\{\mathrm{sf}_i\ne\mathrm{sf}_j\}}
    \right),
    \qquad
    K^{\mathrm{lin}}_{ij}
    =
    \exp(\beta_{\mathrm{lin}}R_{ij}).
    \label{eq:itracer-lineage}
\end{equation}
Pairs involving a cell without a barcode call receive
$R_{ij}=0$. Barcode and scar families containing fewer than
100 and 50 cells, respectively, are excluded. For the RNA-velocity vector $v_i$, the velocity factor is
\begin{equation}
    a_{ij}
    =
    \frac{\langle v_i,x_j-x_i\rangle}
         {\|v_i\|\,\|x_j-x_i\|},
    \qquad
    K^{\mathrm{vel}}_{ij}
    =
    \exp(\nu a_{ij}).
    \label{eq:itracer-velocity}
\end{equation} 
The final transition matrix is
\begin{equation}
    W_{ij}
    =
    K^{\mathrm{expr}}_{ij}
    K^{\mathrm{lin}}_{ij}
    K^{\mathrm{vel}}_{ij},
    \qquad
    P_{ij}
    =
    \frac{W_{ij}}{\sum_\ell W_{i\ell}},
    \qquad
    P_{ii}=0.
    \label{eq:itracer-transition}
\end{equation}

The different choices
\[
    (\beta_{\mathrm{lin}},\nu)
    \in\{(0,0),(0,3),(3,0),(3,3)\}
\]
correspond to expression only, expression plus velocity,
expression plus lineage, and all three factors, respectively.

\begin{figure}[t]
    \centering
    \includegraphics[width=0.85\linewidth]
        {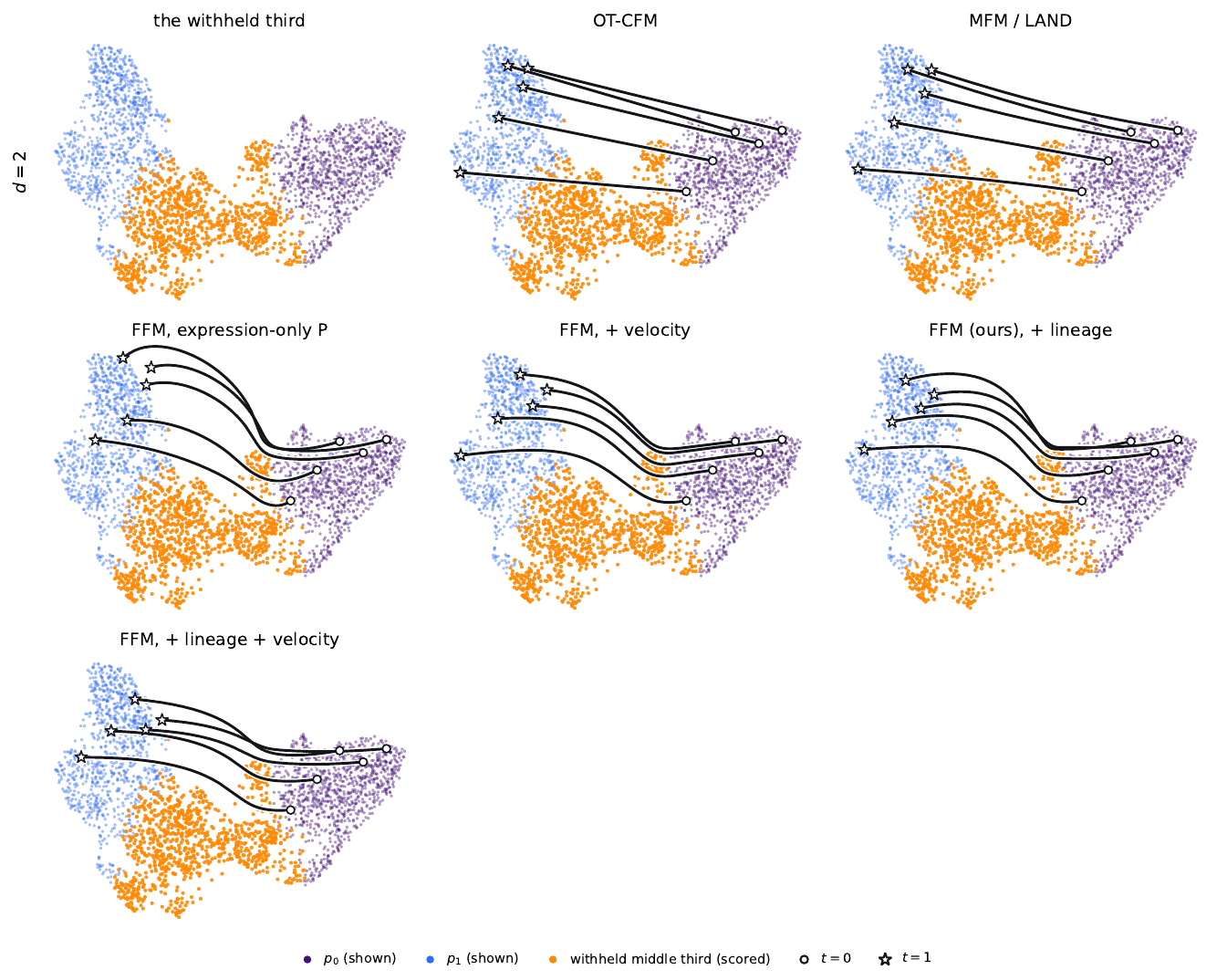}
    \caption{Learned trajectories on iTracer in the
    two-dimensional representation.}
    \label{fig:itracer-paths-appendix}
\end{figure}
\paragraph{Bifurcation:}
We evaluate a synthetic bifurcation comprising $3000$ points in $\mathbb R^3$, where one source population splits into two target arms and the middle split is withheld \cref{fig:forksheet-appendix}.
Hyperparameters are transferred from Sheet without retuning.
To evaluate cell-level fate preservation, we integrate the
324 of 969 source cells with resolved initial arm labels and
classify their endpoints as lower arm, upper arm, or outside.
The target regions are constructed from balls of radius $1/2$ of the distance between arm centres
Table~\cref{tab:forksheet_fate}. 
FFM maintains high recall on both arms, whereas OT-CFM and
Curly-FM misassign lower-arm cells, and MFM loses upper-arm
recall primarily through outside outcomes.
A separate allocation sweep
\cref{tab:forksheet_allocation_recall}
shows that performance under imbalance remains
configuration-dependent.
\begin{figure}[t]
    \centering
    \begin{subfigure}[c]{0.20\linewidth}
        \centering
        \includegraphics[width=\linewidth]
            {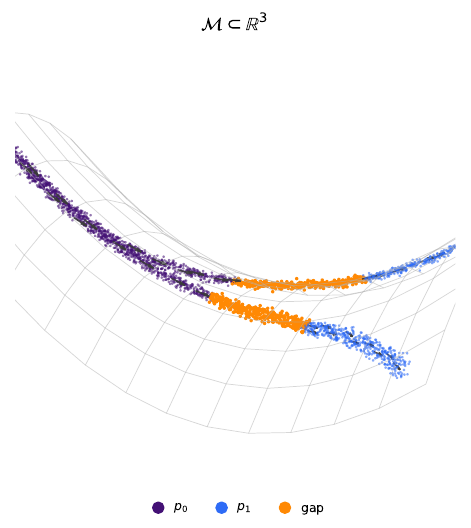}
        \caption{Dataset.}
    \end{subfigure}\hfill
    \begin{subfigure}[c]{0.78\linewidth}
        \centering
        \includegraphics[width=\linewidth]
            {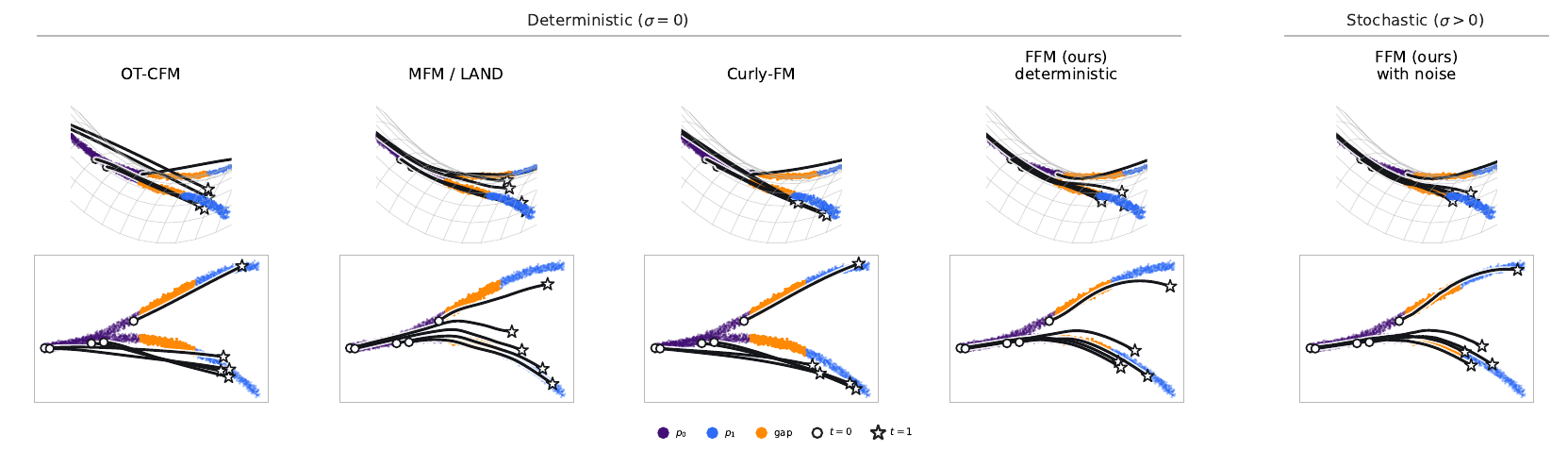}
        \caption{Learned trajectories.}
    \end{subfigure}
    \caption{ForkSheet benchmark.}
    \label{fig:forksheet-appendix}
\end{figure}
\begin{table}[t]
\centering
\caption{ForkSheet fate preservation. Values are percentages, reported as
mean $\pm$ population standard deviation over three seeds.}
\label{tab:forksheet_fate}
\vspace{4pt}
\small
\setlength{\tabcolsep}{4pt}
\renewcommand{\arraystretch}{1.1}
\begin{adjustbox}{max width=\linewidth}
\begin{tabular}{llrrr}
\toprule
Method & True arm & Recall & Wrong arm & Outside \\
\midrule
OT-CFM & Lower & 69.4 $\pm$ 10.7 & 30.6 $\pm$ 10.7 & 0.0 $\pm$ 0.0 \\
& Upper & 100.0 $\pm$ 0.0 & 0.0 $\pm$ 0.0 & 0.0 $\pm$ 0.0 \\
MFM / LAND & Lower & 91.4 $\pm$ 5.6 & 4.4 $\pm$ 3.9 & 4.2 $\pm$ 3.4 \\
& Upper & 68.1 $\pm$ 37.3 & 12.3 $\pm$ 17.4 & 19.6 $\pm$ 20.1 \\
Curly-FM & Lower & 48.8 $\pm$ 16.7 & 47.4 $\pm$ 15.2 & 3.8 $\pm$ 1.5 \\
& Upper & 100.0 $\pm$ 0.0 & 0.0 $\pm$ 0.0 & 0.0 $\pm$ 0.0 \\
FFM (ours), deterministic & Lower & 75.5 $\pm$ 19.7 & 21.2 $\pm$ 19.4 & 3.4 $\pm$ 0.3 \\
& Upper & 100.0 $\pm$ 0.0 & 0.0 $\pm$ 0.0 & 0.0 $\pm$ 0.0 \\
FFM (ours), with noise & Lower & 68.3 $\pm$ 19.8 & 20.8 $\pm$ 22.7 & 10.9 $\pm$ 12.7 \\
& Upper & 98.8 $\pm$ 0.9 & 0.0 $\pm$ 0.0 & 1.2 $\pm$ 0.9 \\
\bottomrule
\end{tabular}

\end{adjustbox}
\end{table}
\begin{table}[t]
\centering
\caption{ForkSheet recall under different nominal lower/upper fate
allocations. Values are percentages, reported as mean $\pm$ population
standard deviation over three seeds. Within each cell, the first line
reports lower-arm recall and the second line reports upper-arm recall.
These runs are separate from the evaluation in Table~\cref{tab:forksheet_fate}.}
\label{tab:forksheet_allocation_recall}
\vspace{4pt}
\small
\setlength{\tabcolsep}{3pt}
\renewcommand{\arraystretch}{1.1}
\begin{adjustbox}{max width=\linewidth}
\begin{tabular}{lccccc}
\toprule
Method & 50/50 & 70/30 & 40/60 & 20/80 & 10/90 \\
\midrule
OT-CFM
& \begin{tabular}[c]{@{}c@{}}69.4 $\pm$ 10.7\\100.0 $\pm$ 0.0\end{tabular}
& \begin{tabular}[c]{@{}c@{}}71.1 $\pm$ 2.5\\100.0 $\pm$ 0.0\end{tabular}
& \begin{tabular}[c]{@{}c@{}}57.6 $\pm$ 11.7\\99.5 $\pm$ 0.7\end{tabular}
& \begin{tabular}[c]{@{}c@{}}59.0 $\pm$ 17.4\\99.1 $\pm$ 0.6\end{tabular}
& \begin{tabular}[c]{@{}c@{}}36.1 $\pm$ 16.4\\98.4 $\pm$ 1.3\end{tabular} \\
MFM / LAND
& \begin{tabular}[c]{@{}c@{}}91.2 $\pm$ 9.5\\76.0 $\pm$ 16.6\end{tabular}
& \begin{tabular}[c]{@{}c@{}}99.9 $\pm$ 0.2\\84.8 $\pm$ 1.8\end{tabular}
& \begin{tabular}[c]{@{}c@{}}92.0 $\pm$ 8.3\\98.5 $\pm$ 2.2\end{tabular}
& \begin{tabular}[c]{@{}c@{}}73.2 $\pm$ 23.6\\74.0 $\pm$ 34.9\end{tabular}
& \begin{tabular}[c]{@{}c@{}}50.9 $\pm$ 15.1\\96.9 $\pm$ 4.4\end{tabular} \\
Curly-FM
& \begin{tabular}[c]{@{}c@{}}48.8 $\pm$ 16.7\\100.0 $\pm$ 0.0\end{tabular}
& \begin{tabular}[c]{@{}c@{}}68.0 $\pm$ 7.1\\100.0 $\pm$ 0.0\end{tabular}
& \begin{tabular}[c]{@{}c@{}}37.2 $\pm$ 3.9\\100.0 $\pm$ 0.0\end{tabular}
& \begin{tabular}[c]{@{}c@{}}33.9 $\pm$ 18.8\\100.0 $\pm$ 0.0\end{tabular}
& \begin{tabular}[c]{@{}c@{}}0.0 $\pm$ 0.0\\100.0 $\pm$ 0.0\end{tabular} \\
FFM (ours), deterministic
& \begin{tabular}[c]{@{}c@{}}95.6 $\pm$ 3.4\\94.1 $\pm$ 5.2\end{tabular}
& \begin{tabular}[c]{@{}c@{}}92.2 $\pm$ 10.4\\89.1 $\pm$ 13.9\end{tabular}
& \begin{tabular}[c]{@{}c@{}}89.4 $\pm$ 5.7\\94.2 $\pm$ 8.2\end{tabular}
& \begin{tabular}[c]{@{}c@{}}98.9 $\pm$ 0.8\\97.7 $\pm$ 0.8\end{tabular}
& \begin{tabular}[c]{@{}c@{}}77.8 $\pm$ 10.4\\98.3 $\pm$ 0.3\end{tabular} \\
FFM (ours), with noise
& \begin{tabular}[c]{@{}c@{}}97.3 $\pm$ 3.4\\94.5 $\pm$ 7.7\end{tabular}
& \begin{tabular}[c]{@{}c@{}}94.0 $\pm$ 8.5\\81.2 $\pm$ 19.5\end{tabular}
& \begin{tabular}[c]{@{}c@{}}93.8 $\pm$ 4.2\\98.5 $\pm$ 1.8\end{tabular}
& \begin{tabular}[c]{@{}c@{}}99.5 $\pm$ 0.8\\98.7 $\pm$ 0.4\end{tabular}
& \begin{tabular}[c]{@{}c@{}}79.6 $\pm$ 6.5\\93.1 $\pm$ 4.8\end{tabular} \\
\bottomrule
\end{tabular}
\end{adjustbox}
\end{table}
\section{Related Work}
\paragraph{Relation to Metric Flow Matching.}
Metric Flow Matching (MFM) \citep{kapusniakMetricFlowMatchinga} learns
endpoint-conditioned approximate geodesics of a data-dependent Riemannian metric $g(x)=G(x;\mathcal D)$ (Definition 1) as interpolants. This construction is contained in FFM as the symmetric special case in which no directional information is used. More precisely $G(x;\mathcal D)$ is a special case of a $P-$derived Randers with $F_P(x,v) = \sqrt{v^T G(x;\mathcal D) v}$. 
\\
\\
There is also an explicit connection between the LAND metric \citep{arvanitidisLocallyAdaptiveNormal2016} considered in
MFM and our $P$-derived metric~\eqref{eq:empirical-fw-metric}. Consider a
soft $k$-NN kernel
\begin{align*}
    w_{ij}
    =
    \mathbf{1}_{\{j\in\mathcal{N}_k(i)\}}
    \exp\!\left(
        -\frac{\|x_i-x_j\|^2}{2\sigma^2}
    \right),
    \qquad
    P_{ij}
    =
    \frac{w_{ij}}{Z_i},
    \qquad
    Z_i=\sum_j w_{ij}.
\end{align*}
The corresponding second transition moment is
\begin{align*}
    C_N(x_i)
    =
    \frac{1}{Z_i}
    \sum_j
    w_{ij}
    (x_j-x_i)(x_j-x_i)^\top .
\end{align*}
On the other hand, defining
$h_\alpha(x_i)=\sum_jw_{ij}(x_j^\alpha-x_i^\alpha)^2,$
we obtain
\begin{equation}
    \operatorname{diag} C_N(x_i)
    =
    \frac{1}{Z_i}\operatorname{diag}(h(x_i)).
\end{equation}
Consequently,
\begin{equation}
    \left(
        \operatorname{diag}C_N(x_i)+\rho I
    \right)^{-1}
    =
    Z_i
    \left(
        \operatorname{diag}(h(x_i))
        +\rho Z_i I
    \right)^{-1}.
\end{equation}
Thus, apart from the normalization $Z_i$ induced by requiring $P$ to be
row-stochastic, the diagonal of our second-moment geometry has exactly the
form of the LAND metric on datapoints. In particular, when $Z_i$ is approximately constant
over the region of interest, choosing $\varepsilon=\rho Z$ recovers the LAND
metric up to the global multiplicative factor $Z$, which does not change its
geodesic curves.
\\
\\
FFM can therefore be viewed as a generalization of MFM incoporating a sense of directionality when given an transition probability matrix $P$.
\paragraph{Relation to Curly FM:}
Curly Flow Matching (Curly-FM) \citep{petrovicCurlyFlowMatching2025a} assumes access to a reference vector field $f_t(x)$ and learns endpoint conditioned paths whose velocities remain close to this field, using an objective of the form
\begin{align*}
    \mathcal{L}_{\mathrm{Curly}}(\eta)=\mathbb{E}_{x_0,x_1,t}\left[\left\|\dot{x}_{t,\eta}-f_t(x_{t,\eta})\right\|_2^2 \right].
\end{align*}
When the dynamics are instead specified by a transition matrix $P$, the
natural first-order vector-field summary is its first spatial moment $m(x_i)=\sum_{j=1}^N P_{ij}(x_j-x_i),$. If we replace the
Euclidean norm in the Curly objective by the second moment $C_{\rho}^{-1}$
\begin{align}\label{fixed-time FW}
    \left\|\dot{x}_{t,\eta}-m(x)\right\|_{C(x)^{-1}}^2 = (\dot{x}_{t,\eta}-m(x))^T C_\rho(x)^{-1}(\dot{x}_{t,\eta}-m(x))
\end{align}
which gives us the integrant of the fixed time FW functional \cref{eq: Fixed-Time FW}. For data that follows the manifold hypothesis we have in particular that $C\neq I$. This observation also suggests a direct second-moment generalization of the Curly-FM path objective. Specifically, we replace its Euclidean
velocity residual by the graph-derived anisotropic residual
\[
\frac{1}{a}
\bigl(\dot x-a\,m(x)\bigr)^\top
C_\rho(x)^{-1}
\bigl(\dot x-a\,m(x)\bigr),
\]
so that the control has access to the same first and second transition
moments as FFM.
We evaluate this matched control on Pancreas in
Fig.~\cref{fig:paths_and_quantitative_results}. Full FFM obtains lower
$W_2(1/2)$ under both a shared endpoint coupling and method-specific
couplings, indicating that the observed improvement is not explained
solely by supplying the path objective with second-moment information.
\\
\\
Furthermore, FFM can also be used in settings with access to a reference vector field. Following velocity-kernel constructions such as those used in CellRank \citep{langeCellRankDirectedSinglecell2022a}, a vector field $f$ can be converted into a local
transition matrix $P^f$. Let
$W^{(h)}_{ij}=\mathbf{1}\!\left\{\|x_j-x_i\|\le h_i\right\},$
where $h_i$ denotes the distance to the $k$-th nearest neighbor, and define
\begin{align*}
P^f_{ij}
=
\frac{
W^{(h)}_{ij}
\exp\!\left(
\kappa
\frac{
f(x_i)^\top(x_j-x_i)
}{
\|f(x_i)\|\,\|x_j-x_i\|
}
\right)
}{
\sum_\ell
W^{(h)}_{i\ell}
\exp\!\left(
\kappa
\frac{
f(x_i)^\top(x_\ell-x_i)
}{
\|f(x_i)\|\,\|x_\ell-x_i\|
}
\right)
}.
\end{align*}
For dense and locally isotropic sampling, the $k$-NN neighborhood of $x_i$
is approximately rotationally symmetric. The cosine kernel assigns a
displacement direction $u$ a weight proportional to
$\exp(\kappa\,\hat f(x_i)^\top u)$, where
$\hat f=f/\|f\|$. By local isotropy, contributions orthogonal to $\hat f$ appear in pairs in opposite directions with and hence cancel eachother. Consequently,
\begin{align*}
    m_N^f(x_i)
    =
    \sum_j P^f_{ij}(x_j-x_i)
    =
    c_i\,\hat f(x_i) + o(1),
    \qquad c_i>0,
\end{align*}
as the neighborhood becomes dense. Thus, the first moment of the cosine
velocity kernel recovers the orientation of the vector field. FFM can therefore be applied also when the dynamical information is given as a vector field: its first moment retains the directional information used by Curly-FM, while its second moment additionally describes the local distribution of accessible transitions. This is particularly helpful when the vector field measurement is noisy, as is often the case for RNA-velocity estimates. Then the construction of the transition kernel can be viewed as a denoising of the vector field.

\paragraph{Multimarginal Flow Matching}
\end{appendix}